\def\eqref#1{equation~\ref{#1}}
\def\1{\bm{1}}
\def\rv{{\textnormal{v}}}
\DeclareMathAlphabet{\mathsfit}{\encodingdefault}{\sfdefault}{m}{sl}
\SetMathAlphabet{\mathsfit}{bold}{\encodingdefault}{\sfdefault}{bx}{n}
\newcommand{\E}{\mathbb{E}}
\newcommand{\R}{\mathbb{R}}
\title{Why Online Reinforcement Learning is Causal}
\author{\name Oliver Schulte \email oschulte@cs.sfu.ca \\
      \addr School of Computing Science, Simon Fraser University, Vancouver, Canada
      \AND
      \name Pascal Poupart \email ppoupart@uwaterloo.ca \\
      \addr Cheriton School of Computer Science, University of Waterloo, Waterloo, Canada
}
\newtheorem{definition}{Definition}
\newtheorem{observation}{Observation}
\newtheorem{proposition}{Proposition}
\newtheorem{lemma}{Lemma}
\def\defterm#1{\textbf{#1}}
\def\bs#1{\boldsymbol{#1}}
\def\set#1{\bs{#1}}
\newcommand{\X}{X}
\newcommand{\x}{x}
\newcommand{\Y}{Y}
\newcommand{\y}{y}
\renewcommand{\rv}{V}  
\newcommand{\rvalue}{v}
\newcommand{\xv}{X} 
\newcommand{\xvalue}{x}
\newcommand{\yv}{Y}  
\newcommand{\yvalue}{y}
\newcommand{\ov}{O}  
\newcommand{\ovalue}{o}
\newcommand{\lv}{Z}  
\newcommand{\lvalue}{z}
\newcommand{\uv}{U}  
\newcommand{\uvalue}{u}
\newcommand{\sv}{S}  
\newcommand{\svalue}{s}
\newcommand{\wv}{W}  
\newcommand{\wvalue}{w}
\newcommand{\ns}[1]{\underline{#1}}
\newcommand{\scm}{\mathcal{S}} 
\newcommand{\DIM}{\mathcal{D}}
\newcommand{\prior}{b}
\newcommand{\policy}{\pi}
\newcommand{\marginal}{\mu}
\renewcommand{\b}{b}
\renewcommand{\S}{S} 
\newcommand{\B}{B} 
\newcommand{\Z}{Z}
\newcommand{\z}{z}
\renewcommand{\E}{E} 
\newcommand{\A}{A} 
\renewcommand{\R}{R}
\renewcommand{\a}{a}
\renewcommand{\r}{r}
\newcommand{\Q}{Q}
\newcommand{\V}{V}
\newcommand{\ado}{\mathit{do}}
\newcommand{\G}{G}
\newcommand{\pa}{\mathit{pa}}
\newcommand{\PA}{\mathit{Pa}}
\newcommand{\cn}{\mathit{B}}
\newcommand{\cm}{\mathit{C}}
\newcommand{\ddn}{\mathit{D}}
\newcommand{\mprob}[1]{P^{#1}}
\newcommand{\SH}{\mathit{SH}}
\newcommand{\PH}{\mathit{PH}}
\newcommand{\CG}{\mathit{CG}}
\newcommand{\GH}{\mathit{GH}}
\newcommand{\SC}{\mathit{SC}}
\begin{document}

\maketitle

\begin{abstract}
    Reinforcement learning (RL) and causal modelling 
    naturally complement each other. The goal of causal modelling is to predict the effects of interventions in an environment, while the goal of reinforcement learning is to select interventions that maximize the rewards the agent receives from the environment. Reinforcement learning includes the two most powerful sources of information for estimating causal relationships: temporal ordering and the ability to act on an environment.  This paper examines which reinforcement learning settings  we can expect to benefit from causal modelling, and how. In online learning, the agent has the ability to interact directly with their environment, and learn from exploring it. Our main argument is that in online learning, conditional probabilities are causal, and therefore offline RL is the setting where causal learning has the most potential to make a difference. Essentially, the reason is that when an agent learns from their {\em own} experience, there are no unobserved confounders that influence both the agent's own exploratory actions and the rewards they receive. Our paper formalizes this argument.
    For offline RL, where an agent may and typically does learn from the experience of {\em others}, we describe previous and new methods for leveraging a causal model.
\end{abstract}

\section{Introduction: Causal Probabilities in Reinforcement Learning} \label{sec:intro}


The goal of decision-making in a Markov Decision Process (MDP) is to intervene in the environment to maximize the agent's cumulative reward. A key insight of causal decision theory is that the impact of an action should be estimated as a {\em causal effect}, not a correlation. 
Visits to the doctor correlate with illnesses, but avoiding seeing a doctor does not make a patient healthier~\cite[Ch.4.1.1]{Pearl2000}. Several causality researchers have therefore argued that reinforcement learning can benefit from adopting causal models to predict the effect of actions. This article is directed towards reinforcement learning researchers who want to explore the use of causal models. We provide  conceptual and theoretical foundations to facilitate the adoption of causal models by reinforcement learning researchers. We use as much as possible terminology, notation, and examples from reinforcement learning. A running example gives explicit computations that illustrate causal concepts. 
This paper can therefore serve as a short tutorial on causal modelling for RL researchers. An excellent long tutorial is provided by~\citet{bib:barenboim-tutorial}, and a recent survey by~\cite{deng2023causal}.  

The main question we address is {\em under what conditions causal modelling} provides a new approach to reinforcement learning. Our short answer is that {\em online learning}, where an agent learns a policy through interacting with the environment directly, is inherently causal: conditional probabilities estimated from online data are also causal probabilities (i.e., they represent the causal effect of interventions). In offline learning, where an agent may learn from a dataset collected through the experience of others, causal probabilities provide an alternative to conditional probabilities. 

\cite{Levine2020} assert that ``offline reinforcement learning is about making and answering counterfactual queries.'' Recent work on causal reinforcement learning has suggested utilizing the ability of causal models to evaluate {\em counterfactual} probabilities~\cite{bib:barenboim-tutorial,deng2023causal}. Extending our analysis from the effects of interventions to counterfactuals, we distinguish between {\em what-if} queries and {\em hindsight} queries. A what-if query concerns the results of deviating from an action taken; an example from a sports domain would be ``What if I had taken a shot instead of making a pass?''. A hindsight query conditions on an observed outcome. An example of a hindsight query would be ``I failed to score. What if I had taken a shot instead of making a pass?''. Our analysis indicates that in online RL, what-if counterfactuals can be evaluated using conditional probabilities, whereas hindsight counterfactuals require a causal model beyond conditional probabilities, even in online RL. 

We next give an outline of our analysis; formal details appear in the text below.



\paragraph{Overview.} 


Conditional probabilities measure the strength of associations or correlations, but not necessarily the causal effect of an action. 
Using Pearl's do operator, the \defterm{causal effect} of setting variable $\A$ to the value $\a$ given evidence covariates $\set{\X}$ can be written as a conditional probability of the form $P(\Y|\ado(\A = \a),\set{\X = \x}))$.
(The formal semantics for the $\ado$ operator is defined in ~\Cref{sec:background} below.) In the medical visit example, the strong correlation means that $P(\mathit{Illness}|\mathit{Visit})$ is high. However, making a person visit the doctor has no causal effect on their illness, so we have $P(\mathit{Illness}|\mathit{Visit})>>P(\mathit{Illness}|\ado(\mathit{Visit}))= P(\mathit{Illness})$. For an example of a conditional probability relevant to RL, consider $P(R_{t+1}|A_t = \a_t,S_t)$, the conditional probability of receiving reward $R$ at time $t+1$ given action $A_t$ and state $S_t$ at time $t$. A key question in this paper is {\em under what assumptions the conditional reward probability equals the causal reward probability} $P(R_{t+1}|\ado(A_t = \a_t),S_t)$. The answer depends on temporal information and confounding.

\paragraph{Temporal Ordering.} Since RL data include time stamps, we can leverage the fundamental principle that {\em causes do not succeed effects temporally}. Since rewards and next states follow previous states and actions, they can only be effects, not causes of previous states and actions.~\Cref{fig:reward} illustrates using the influence diagram formalism, how the causal ordering follows the temporal ordering.

\paragraph{Confounding and Online Learning} It can be shown that a conditional probability $P(\Y_{t+1}|\X_{t},\E_{\leq t})$ that predicts future events from past events is causal {\em unless} there is a common cause $Z_t$ of $\Y_{t+1}$ and $\X_{t}$ that is not included in the conditioning evidence $\E$; see~\Cref{fig:confounded}. We refer to an unobserved common cause as a \defterm{confounder}. {\em In which RL settings can we expect rewards to be confounded with actions?} The answer depends on different cases, as summarized in~\Cref{fig:cases}. 
We adopt the fundamental RL distinctions between (1) online and offline learning~\citep{Levine2020}, (2) on-policy and off-policy evaluation, and (3) complete vs. partial observability.

\begin{figure}
     \centering
     \begin{subfigure}[b]{0.45\textwidth}
         \centering
         \includegraphics[width=\textwidth]{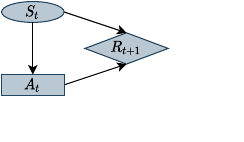}
         \caption{A completely observable reward model, with no latent variables.}
         \label{fig:temporal}
     \end{subfigure}
      \begin{subfigure}[b]{0.40\textwidth}
         \centering
         \includegraphics[width=\textwidth]{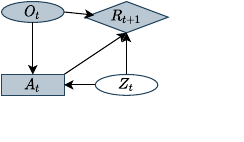}
         \caption{Partially observable reward model, with a latent confounder. }
         \label{fig:confounded}
     \end{subfigure}
     \caption{Dynamic influence diagrams for a generic reward model. We follow the conventions of influence diagrams to distinguish state variables, actions, and rewards. Observed variables are gray, latent variables white. 
     ~\Cref{fig:temporal}: States and actions temporally precede rewards. Therefore rewards to not cause states/actions, and reward probabilities are causal (i.e., $P(R_{t+1}|\ado(A_t = \a_t),S_t) = P(R_{t+1}|A_t = \a_t,S_t)$), {\em  unless} there is an unobserved confounder. ~\Cref{fig:confounded}: The environment state $S_t = (O_t,Z_t)$ comprises an observation signal $O_t$ and a latent part $Z_t$. The unobserved variable $Z_t$ is a latent common cause (confounder) of both actions and rewards. Because of the confounder, conditional probabilities generally do not correctly estimate the causal effects of actions (i.e., $P(R_{t+1}|\ado(A_t = \a_t),S_t) \neq P(R_{t+1}|A_t = \a_t,S_t)$).
     \label{fig:reward}}
\end{figure}



\begin{figure}[hbtp]
     \centering
     \begin{subfigure}[b]{0.45\textwidth}
         \centering
         \includegraphics[width=\textwidth]{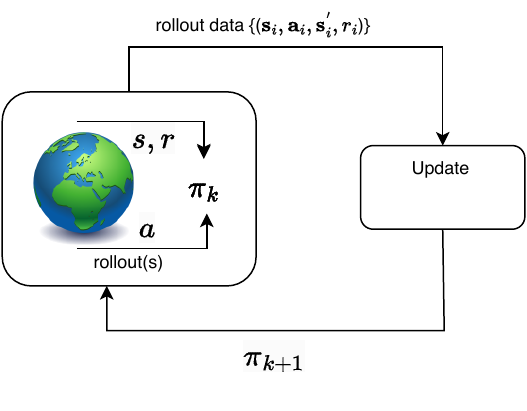}
         \caption{Online RL.}
         \label{fig:levine-online}
     \end{subfigure}
      \begin{subfigure}[b]{0.45\textwidth}
         \centering
         \includegraphics[width=\textwidth]{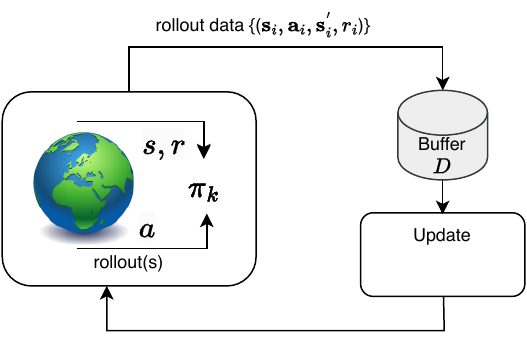}
         \caption{Off-policy RL.\label{fig:levine-offpolicy}}
     \end{subfigure}
     \caption{Online RL settings; Figures adapted from~\cite{Levine2020}. 
     ~\Cref{fig:levine-online}: In classic \textbf{online} RL, the policy $\policy_{k}$ is updated with streaming data collected by $\policy_{k}$ itself.
     ~\Cref{fig:levine-offpolicy}:  In classic \textbf{off-policy} RL, 
     the agent’s online experience is appended to a data buffer (also called a replay buffer) $D$, each new policy $\policy_{k}$ collects additional data, such that $D$ comprises samples from $\policy_{0},\ldots,\policy_{k}$, and all of this data is used to train an updated new policy $\policy_{k+1}$. Both online settings {\em satisfy observation-equivalence} because the policies used to generate the data are based on the same observations as the learned policy.
     \label{fig:online-picture}}
\end{figure}

\begin{figure}[hbtp]
     \centering
     \begin{subfigure}[b]{0.45\textwidth}
         \centering
         \includegraphics[width=\textwidth]{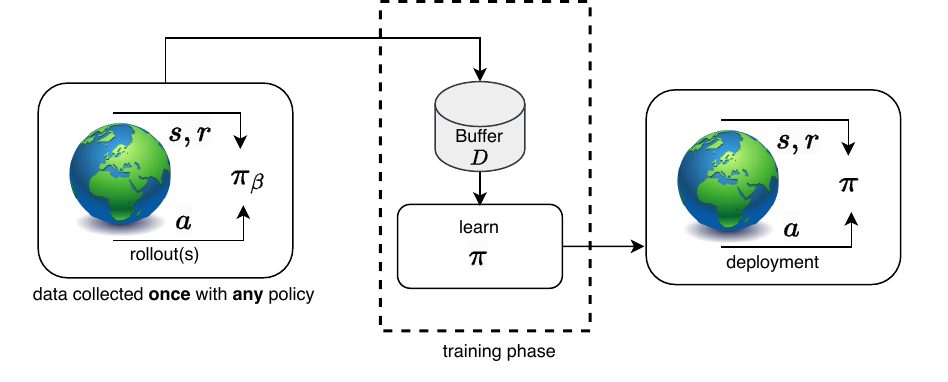}
         \caption{Offline RL with observation-equivalence.}
         \label{fig:levine-offline}
     \end{subfigure}
      \begin{subfigure}[b]{0.45\textwidth}
         \centering
         \includegraphics[width=\textwidth]{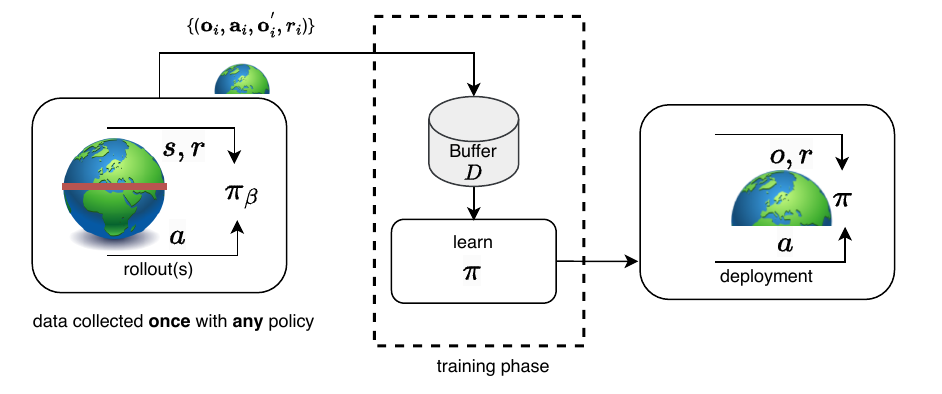}
         \caption{Offline RL without observation-equivalence.}
         \label{fig:partial-offline}
     \end{subfigure}
     \caption{\textbf{Offline} RL-learning employs a dataset $D$ collected by some (potentially unknown) behavior policy $\policy_{\beta}$. The dataset is collected once, and is not altered during training. The training process does not interact with the environment directly, and the policy is only deployed after being fully trained.~\Cref{fig:levine-offline}~\citep{Levine2020}: offline RL with observation-equivalence where the behavioral policy $\policy_{\beta}$ and the learned policy $\policy$ are based on the same observation signal.~\Cref{fig:partial-offline}: offline RL without observation-equivalence where the behavior policy $\policy_{\beta}$ has access to more observations than the learned policy $\policy$.   
     \label{fig:offline-picture}}
\end{figure}

\begin{figure}
     \centering
     \begin{subfigure}[b]{0.32\textwidth}
         \centering
         \includegraphics[width=\textwidth]{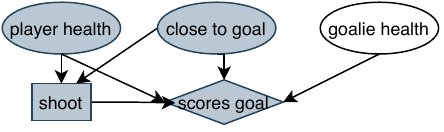}
         \caption{Online View.}
         \label{fig:first}
     \end{subfigure}
      \begin{subfigure}[b]{0.32\textwidth}
         \centering
         \includegraphics[width=\textwidth]{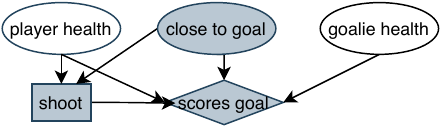}
         \caption{Offline View with confounder.\label{fig:confound}}
     \end{subfigure}
     \caption{Causal Graphs for a sports scenario like hockey or soccer. We follow the conventions of influence diagrams to distinguish state variables, actions, and rewards. All variables are binary. Variables observed by the learning agent are gray, latent variables white. Whether a player takes a shot depends on their location and whether they are injured. Likewise, the chance of their shot leading to a goal depends on their location and health. Thus Player Health is a common cause of the action and reward.
     ~\Cref{fig:first}: In the online setting, the athlete learns from their own experience, which includes their health.
     ~\Cref{fig:confound}: In the offline setting, the learner is different from the athlete, for example a coach, and does not observe the health of the behavioral agent. Player health is an unobserved confounder of action and reward.
     \label{fig:cbns}}
\end{figure}

Case 1: The learning agent can directly interact with its environment. For example a video game playing system can execute actions in the game and observe their effects~\citep{Mnih2015}. In this {\em online} setting 
illustrated in~\Cref{fig:online-picture}, the agent learns from their {\em own} experience exploring the environment. As an agent is transparent to itself, the agent is aware of the causes of its own actions. So {\em in an online setting there are no unobserved action causes}, and hence no confounders of exploratory actions and observed rewards. 

Case 2: The learning agent cannot directly interact with its environment. In this {\em offline} setting, the agent relies on the experience of another agent, such as an expert demonstrator. In this case, the learning agent may receive a different observation signal than the demonstrating/behavioral agent; see~\Cref{fig:partial-offline}.  
When the learning agent does not have access to all the causes of the decisions made by the behavioral agent,  {\em actions and rewards may be confounded} from the learner's perspective.~\cite{Zhang2020} give the example of a driving agent whose policy maps a state specifying other car velocities and locations  to the acceleration of its ego car. The driving agent is learning offline from driving demonstrations; the demonstrator's decisions are based also on the tail light of the car in front of it. From the perspective of the learning agent, the tail light confounds the state and the reward (no accident); see~\Cref{fig:driving} below.

~\Cref{fig:cbns} illustrates these distinctions in a simple sports setting (such as hockey or soccer). Player health is a common cause of the player's decision (e.g., to shoot) and the player's success (score a goal). In the first-person online setting, the athlete is aware of their own health. In the third-person offline setting, an observer such as coach, does not have access to the athlete's health.

An RL setting satisfies {\em observation-equivalence} if the behavioral and the learned policy are based on the same observation signal. The gist of our analysis is that {\em if the learning setting satisfies observation-equivalence, as it does in online RL, then causal effects and what-if counterfactuals can be estimated from conditional probabilities}. 

While online learning is sufficient for observation-equivalence, it is not necessary.  \cite{scholkopf2021towards} note that ``[Reinforcement learning] sometimes effectively directly estimates do-probabilities. E.g., on-policy learning estimates do-probabilities for the interventions specified by the policy''. In on-policy learning, the behavioral and the learned policy are the same, so they are observation-equivalent. Another sufficient condition is complete observability, where the environment is completely observable for both the behavioral and the learning agent.  
 For instance, the first phase of training the AlphaGo system was based on an offline dataset of games of Go masters~\citep{silver2016mastering}. Go is a completely observable board game with no hidden information. Under complete observability, the learning agent has access to the same observations as the behavioral agent, and therefore to the causes of the behavioral agent's decisions. Our overall conclusion is that {\em causal effects differ from conditional probabilities in the offline off-policy RL setting with partial observability}; see~\Cref{fig:cases}. We support this conclusion with theorems based on causality theory~\citep{Pearl2000}.

\begin{figure}
    \centering
    \includegraphics[width=0.7\textwidth]{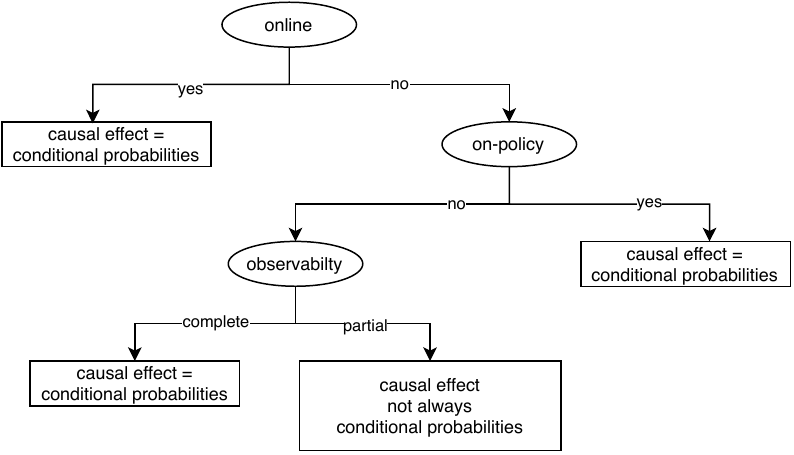}
    \caption{Reinforcement learning settings in which we can expect conditional probabilities to be equivalent to interventional probabilities.}
    \label{fig:cases}
\end{figure}

\paragraph{Paper Overview}


\begin{table}[ht]
\centering
\caption{A four-level causal hierarchy, which refines Pearl's three-level hierarchy Association-Intervention-Counterfactual~\citep{Pearl2000}. {\em Our analysis shows that in online RL, queries of the first three types can be computed from conditional probabilities.}}
\begin{tabular}{|l|l|p{3cm}|p{3cm}|}
\hline
Level          & Notation                   & Typical Question                                        & Example                                              \\ \hline
Association    & $P(R|\mathbf{S},A)$        & What reward follows after an agent chooses $A$? & How often does a shot lead to a goal?                \\ \hline
Intervention   & $P(R|\mathbf{S},\ado(A))$  & If I chose $A$, what will my reward be?                 & If I take a shot, will I score a goal?                \\ \hline
\begin{tabular}{l} What-if \\Counterfactual \end{tabular} & $P(R_{A}|\mathbf{S},B)$ & What if I had chosen $A$ instead of $B$?                & What if I had taken a shot instead of making a pass? \\ \hline \hline
\begin{tabular}{l} Hindsight \\Counterfactual \end{tabular} & $P(R_{A}|\mathbf{S},B,R')$ & What if I had chosen $A$ instead of $B$?                & I failed to score. What if I had taken a shot instead of making a pass? \\ \hline 
\end{tabular}

\label{tab:hierarchy}
\end{table}

The paper is organized following a ``ladder of causation'' as described by~\citep{Pearl2000}: A hierarchy of probabilistic statements that require causal reasoning of increasing complexity. The levels correspond to associations, interventions, and counterfactuals. ~\Cref{tab:hierarchy} illustrates these concepts in the RL setting. A {\em formal semantics} for each type of probability can be defined in terms of a generative model that is based on a causal graph. We analyze the relationship between online RL and each level of the causal hierarchy in a separate independent section: one section for online RL and interventional probabilities, one section for online RL and what-if counterfactuals, and one section for online RL and hindsight counterfactuals.


\textit{Paper Overview.} We review background on causal models, including Bayesian networks and structural causal models. 
Causal Bayesian networks are
a class of graphical models that provide an intuitive semantics for interventional probabilities. We review partially observable Markov Decision Processes (POMDPs) and show how a dynamic causal model can represent a POMDP. 
A causal graph that represents a POMDP
includes causes of both the agent's actions and the  environment's responses~\cite[Ch.17]{Russell2010}. A common approach to solving POMDPs involves transforming the POMPD to a belief MDP, where an agent's decisions are based on their current beliefs about the (partially) unobserved current state. We define a novel variant of a belief MDP that facilitates leveraging a causal model for reinforcement learning. 
Next we explain fundamental RL settings such as online, offline, on-policy, off-policy, and 
give an informal but rigorous argument for why we can expect online RL to satisfy observation-equivalence. 
Our main formal proposition states that given observation-equivalence and temporal ordering, causal effects and what-if counterfactuals coincide with conditional probabilities.
Therefore the reward model, transition model, and the expected value function capture causal effects and what-if counterfactuals when based on  conditional probabilities. In contrast, hindsight counterfactuals cannot be reduced to conditional probabilities, even in online RL. 

In offline RL without observation-equivalence, actions may be confounded with rewards, so causal effects may differ from conditional probabilities. We describe how a causal model can be used to compute interventional and counterfactual probabilities for offline reward/transition models and Q-functions. 
Our final section reviews related work on utilizing causal models for RL from the perspective of the online/offline distinction and describes several directions for future research. 



\section{Background: Causal Bayesian Networks} \label{sec:background}

\paragraph{Notation}~\Cref{tab:notation} summarizes the notation used in this paper, and previews the concepts introduced in the remainder.

\begin{table}[htb]
\begin{tabular}{|l|l|}
\hline
Notation                                                      & Meaning                                                                           \\ \hline
$\set{\rv}$                                                   & Random Variables. $\set{\rv} = \set{\ov} \cup \set{\lv}$                          \\ \hline
$\set{\xv}$                                                   & Generic set of variables. $\set{\xv} \subseteq \set{\rv}$.                        \\ \hline
$\set{\ov}$                                                   & Observed Variables.                                                               \\ \hline
$\set{\lv}$                                                   & Latent Variables                                                                  \\ \hline
$\set{\uv}$                                                   & Latent Source Variables. $\set{\uv} \subseteq \set{\lv}$                          \\ \hline
$\set{\ns{\rv}} = \set{\rv} - \set{\uv}$                          & Non-source variables with positive in degree                                      \\ \hline
$\G$                                                          & Directed Acyclic Graph (DAG); causal graph                                        \\ \hline
$\PA_{\rv}$ resp. $\PA_{i}$                                   & Parents/direct causes of variable $\rv$ resp. \$\textbackslash{}rv\_\{i\}         \\ \hline
$\cn$                                                         & Causal Bayesian Network                                                           \\ \hline
$\cm$                                                         & Probabilistic Structural Causal Model                                             \\ \hline
$f_{\rv}$ resp. $f_{i}$                                       & Deterministic local function for variable $\rv$ resp. \$\textbackslash{}rv\_\{i\} \\ \hline
$S$                                                           & State Space in POMDP                                                              \\ \hline
$\set{\S}$                                                    & State Variables in factored POMDP; $\set{\S} = \set{\ov} \cup \set{\lv} - \{\A\}$ \\ \hline
$\A$                                                          & Intervention Target; Decision Variable in Influence Diagram; Action in POMDP      \\ \hline
$\ado(\A=\hat{a})$                                             & Selecting action $\hat{a}$ as an intervention      
\\ \hline $\R$ & Reward 
\\ \hline
$\b(\set{\lv})$                                                     & Belief State                                                                      \\ \hline
$\policy$                                                     & Policy                                                                            \\ \hline
$\policy(\a|\langle \set{\ovalue},\b\rangle)$                 & Probability of action given current observation and belief state                  \\ \hline
$Q^{\policy}(\langle \set{\ovalue},\b\rangle,\a)$             & Expected Return given current observation, belief state, and action               \\ \hline
$Q^{\policy}(\langle \set{\ovalue},\b\rangle,\ado(\hat{\a}))$ & Expected Return given current observation, belief state, and action/intervention  \\ \hline
\end{tabular}
\caption{Notation used in this paper}
\label{tab:notation}
\end{table}

 In this section we define the semantics for the first two levels of causal hierarchy in~\Cref{tab:hierarchy}, observational associations and intervention probabilities, based on \textit{causal Bayesian networks} (CBNs). CBNs specify intervention probabilities through a truncation semantics~\citep{Pearl2000}[Ch.1.3]. Their parameters are
 conditional probability parameters of the form $P(\mathit{child\_value|parent\_values)}$. While causal Bayesian networks are easy to interpret, recent research has focused on  \textit{structural causal models}~\citep{Pearl2000}[Ch.2.2],\citep{scholkopf2021towards}, which combine latent variables with Bayesian networks. Latent variables enhance the expressive power of causal graphs to define a formal semantics for counterfactual probabilities, which we describe in \Cref{sec:counter} below. In the next~\Cref{sec:pomdp}, we show how POMDPs can be represented using causal Bayesian networks. 
 

\subsection{Causal Bayesian Networks} \label{sec:bns}

A \textbf{causal graph} is a directed acyclic graph (DAG) whose nodes are a set of random variables $\set{\rv} = \{\rv_1,\ldots,\rv_n\}$.  Throughout the paper we assume that random variables are discrete. The definitions can easily be extended to continuous random variables. A \textbf{causal Bayesian network}~\citep[Ch.1.3]{Pearl2000},\citep{bib:cooper-yoo}, or \textbf{causal network} for short, is a causal graph $\G$ parametrized by conditional probabilities $P(\rvalue_i|\pa_i)$ for each possible child value $\rv_i$ and joint parent state $\pa_i$. 

A causal network $\cn$ defines a \textbf{joint distribution} over random variables $\set{\rv}$ through the product formula

\begin{equation} \label{eq:bn-observe}
    \mprob{\cn}(\set{\rv} = \set{\rvalue}) = \prod_{i=1}^n P(\rvalue_i|\pa_i)
\end{equation}

where $x_i$ resp. $\pa_i$ are the values given to node $\rv_i$ resp. its parents $\PA_i$ by the assignment $\set{\rv} = \set{\rvalue}$. Here and below we often omit the model index when the model is fixed by context.

A causal network $\cn$ also defines a joint \textbf{interventional distribution} through the {\em truncation semantics} as follows. Write $\ado(A = \hat{a})$ 
to denote an intervention that sets variable $A$ to value $\hat{a}$. In the RL context, $\A$ represents an action; in this section, it represents a generic intervention target. 
 The effect of the intervention is to {\em change the causal network} $\cn$ to a \textbf{truncated network} $\cn_{\ado(A = \hat{a})}$, in which node $\A$ has no parents and with probability 1 takes on the value $\hat{a}$. Removing the parents of $\A$ represents that  the parents of $\A$ no longer influence its value after the intervention. 
 Given an intervention on variable $A$, the truncated network $\cn_{\ado(A = \hat{a})}$ defines a joint distribution as follows:

\begin{equation} \label{eq:truncate}
    \mprob{\cn}_{\ado(A = \hat{a})}(\set{\rv} = \set{\rvalue},\A = \a) = \prod_{\rv_i\neq A} P(\rvalue_i|\pa_i) \delta(a,\hat{a})
\end{equation}

where $x_i$ resp. $\pa_i$ are the values given to node $\rv_i$ resp. its parents $\PA_i$ by the assignment $(\set{\rv} = \set{\rvalue},A = a)$ and $\delta(a,\hat{a}) = 1$ if $a = \hat{a}$, 0 otherwise. For conditional probabilities that represent {\bf causal effects} ~\citet[Dfn.3.2.1.]{Pearl2000}, uses conditional notation such as

\begin{equation*}
\mprob{\cn}(\set{\yv}|\set{\xv}=\set{\xvalue},\ado(A = \hat{a})) \equiv \mprob{\cn}_{\ado(A = \hat{a})}(\set{\yv}|\set{\xv}=\set{\xvalue}) \mbox{ for disjoint } A,\set{\yv},\set{\xv}
\end{equation*}
to denote the causal effect on a list of outcome variables $\set{\yv}$ due to setting variable $\A$ to value $\hat{a}$ after observing evidence $\set{\xv}=\set{\xvalue}$. The truncation semantics easily extends to intervening on multiple variables by removing all their links.

 \paragraph{Remarks on Notation.}  The $\hat{\a}$ notation does not indicate an quantity estimated from data, but an intervention. We sometimes use the syntactic sugar $\hat{A}$ to highlight a context where $\A$ is intervened upon. In our applications to RL, we consider intervening only on a special variable $\A$ that represents the agent's actions. The truncation semantics~\Cref{eq:truncate} shows how an action changes the distribution over environment states and rewards. However, the intervention semantics is well-defined for manipulating {\em any} random variable in a causal model, not only a designated special action/decision variable. In terms of formal notation, the truncation semantics is well-defined for any intervention $\ado(\xv=\hat{\xvalue})$. In the example of the causal graphs of~\Cref{fig:cbns}, causal effects are defined for taking a shot ($\ado(SH=1)$), moving the player away from the goal ($\ado(CG=1)$), injuring the player ($\ado(PH=1)$), and ensuring that the attack ends in a goal ($\ado(SG=1)$).

The next lemma states that if the parent values of a manipulated variable are given, causal effects are equivalent to conditional probabilities.

\begin{lemma} \label{lemma:parent-condition}
    Let $\cn$ be a causal Bayesian network and let $\set{\yv},A,\set{\xv}$ be a disjoint set of random variables such that $\set{\xv} \supseteq \PA_{\A}$. Then $\mprob{\cn}(\set{\yv}|\set{\xv}=\set{\xvalue},\ado(\A = \hat{\a})) = \mprob{\cn}(\set{\yv}|\set{\xv}=\set{\xvalue},\A = \hat{\a})$. 
\end{lemma}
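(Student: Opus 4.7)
My plan is to prove the lemma by direct algebraic manipulation, comparing the observational conditional with the interventional conditional after writing each as a ratio of joint probabilities. The key structural fact that drives the proof is that, once we condition on $\set{\xv} \supseteq \PA_\A$, the conditional probability factor $P(\a \mid \pa_\A)$ appearing in the observational joint of~\Cref{eq:bn-observe} is a constant (with respect to the remaining variables), and this constant cancels between numerator and denominator.

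First I would introduce the notation $\set{\wv} = \set{\rv} \setminus (\set{\yv} \cup \set{\xv} \cup \{\A\})$ for the variables that need to be marginalized. On the observational side, I would write
\begin{equation*}
\mprob{\cn}(\set{\yv}=\set{\yvalue}, \set{\xv}=\set{\xvalue}, \A=\hat{\a}) = \sum_{\set{\wvalue}} \prod_{i=1}^n P(\rvalue_i \mid \pa_i),
\end{equation*}
and observe that since $\PA_\A \subseteq \set{\xv}$, the parent configuration $\pa_\A$ is determined by $\set{\xvalue}$, so the factor $P(\hat{\a} \mid \pa_\A(\set{\xvalue}))$ is independent of $\set{\wvalue}$ and of $\set{\yvalue}$, and can be pulled outside of the sum. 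An analogous expression holds for the denominator $\mprob{\cn}(\set{\xv}=\set{\xvalue}, \A=\hat{\a})$, where the same constant factor appears outside the sum over $\set{\wvalue}$ and $\set{\yvalue}$. Taking the ratio, the factor cancels, leaving
\begin{equation*}
\mprob{\cn}(\set{\yv}=\set{\yvalue} \mid \set{\xv}=\set{\xvalue}, \A=\hat{\a}) = \frac{\sum_{\set{\wvalue}} \prod_{i \neq \A} P(\rvalue_i \mid \pa_i)}{\sum_{\set{\wvalue}, \set{\yvalue}} \prod_{i \neq \A} P(\rvalue_i \mid \pa_i)}.
\end{equation*}

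Next I would apply the truncation semantics of~\Cref{eq:truncate} to the interventional side. Marginalizing the truncated joint over $\set{\wv}$ (and over $\set{\yv}$ for the denominator), using that $\delta(\a,\hat{\a})$ collapses the sum over $\A$ to the single term $\a = \hat{\a}$, yields precisely the same ratio of sums of products $\prod_{i \neq \A} P(\rvalue_i \mid \pa_i)$. Matching the two expressions gives the claimed equality.

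The main potential obstacle is bookkeeping rather than conceptual: I need to make sure the marginalization in numerator and denominator is performed over exactly the right set of variables on both sides, and to verify that the implicit positivity assumption $P(\hat{\a} \mid \pa_\A(\set{\xvalue})) > 0$ (needed for the observational conditional to be well-defined) is enough, so that the cancellation is legitimate. I would state this positivity assumption explicitly at the start of the proof to avoid ambiguity.
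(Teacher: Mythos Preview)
Your proposal is correct and follows essentially the same approach as the paper's own proof: both express the observational and interventional conditionals as ratios of marginalized joints, use $\PA_\A \subseteq \set{\xv}$ to conclude that the factor $P(\hat{\a}\mid \pa_\A)$ is constant across the sums, and then cancel it from numerator and denominator to obtain the truncated-product ratio on both sides. Your explicit mention of the positivity assumption $P(\hat{\a}\mid \pa_\A(\set{\xvalue}))>0$ is a small refinement that the paper leaves implicit.
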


The lemma is easily derived from the do-calculus~\citep{Pearl2000};~\Cref{sec:observe-proof} gives a proof directly from the truncation semantics. The significance of the lemma is that if we can observe all the direct causes of an agent's actions, which we argue is the case in online RL, then the causal effect of an action can estimated by a conditional probability. 

Let $\set{\rv} = (\set{\ov} \cup \set{\lv})$ be a partition of the random variables into a nonempty set of observed variables $\set{\ov}$ and a set of latent variables $\set{\lv}$.  A set of observed variables $\set{\ov}$ is \defterm{causally sufficient} for variable $\xv$ in a causal graph $\G$ if all parents of $\xv$ are observed; that is, if $\pa_{\xv} \subseteq \set{\ov}$.\footnote{In~\Cref{sec:causal-suffice} below we discuss a weaker notion of causal sufficiency that is commonly used in causal discovery algorithms~\citep{Spirtes2000}.} A set of observable variables $\set{\ov}$ is \defterm{action sufficient} in a causal graph $\G$ if it is causally sufficient for the intervention variable $\A$. 
~\Cref{lemma:parent-condition} implies that if an observation observation signal $\set{\ov}$ is action sufficient in (the graph of) a CBN $\cn$, then conditional on the observations, causal effects of $\A$ are equivalent to conditional probabilities:

\begin{lemma} \label{lemma:observe-condition}
    Let $\set{\ov} \subseteq \set{\rv}$ be an action sufficient set of observable variables in a causal Bayesian network $\cn$. Then $\mprob{\cn}(\set{\yv}|\set{\ov}=\set{\ovalue},\ado(\A = \hat{\a})) = \mprob{\cn}(\set{\yv}|\set{\ov}=\set{\ovalue},\A = \hat{\a})$ for any list of target outcomes $\set{\yv}$. 
\end{lemma}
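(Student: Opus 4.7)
The plan is to reduce Lemma 2 directly to Lemma 1. By the definition of action sufficiency, $\set{\ov}$ contains $\PA_{\A}$, so the set $\set{\ov}$ already includes all the direct causes of the intervention target. This is precisely the hypothesis that Lemma 1 places on the conditioning set $\set{\xv}$.

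First I would instantiate Lemma 1 with $\set{\xv} := \set{\ov}$. The disjointness hypothesis of Lemma 1 requires that $\set{\yv}$, $\A$, and $\set{\ov}$ be pairwise disjoint. We may assume without loss of generality that $\set{\yv} \cap (\set{\ov} \cup \{\A\}) = \emptyset$: any variable in $\set{\yv}$ that also appears in $\set{\ov}$ has its value pinned by the event $\set{\ov} = \set{\ovalue}$, and the $\A$ coordinate is pinned either by conditioning on $\A = \hat{\a}$ or (on the interventional side) by the truncated factor $\delta(a,\hat{\a})$ from \Cref{eq:truncate}. In both cases the corresponding marginal factor is identical on the two sides of the claimed equality, so these overlapping components can be stripped off and the remaining target is disjoint from $\set{\ov} \cup \{\A\}$.

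With disjointness secured, the inclusion $\PA_{\A} \subseteq \set{\ov}$ means the hypothesis of Lemma 1 is satisfied, and its conclusion
\begin{equation*}
\mprob{\cn}(\set{\yv}\mid \set{\ov}=\set{\ovalue},\ado(\A=\hat{\a})) = \mprob{\cn}(\set{\yv}\mid \set{\ov}=\set{\ovalue},\A=\hat{\a})
\end{equation*}
is exactly the statement of Lemma 2.

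The proof is essentially a one-line corollary; the only mild obstacle is the bookkeeping around the disjointness condition in Lemma 1, since Lemma 2 is phrased for arbitrary target sets $\set{\yv}$. Handling this by the above observation (overlapping coordinates are deterministically fixed by the conditioning event on both sides) is routine but worth stating explicitly so the invocation of Lemma 1 is clean.
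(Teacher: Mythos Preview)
Your proposal is correct and takes essentially the same approach as the paper: the paper simply states that \Cref{lemma:observe-condition} is implied by \Cref{lemma:parent-condition}, without further detail. Your added bookkeeping about the disjointness hypothesis is a harmless (and arguably welcome) elaboration of what the paper leaves implicit.
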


\subsection{Examples of Causal Bayesian Networks.} \label{sec:cbn-example}

We compute observational and interventional probabilities for taking a shot in the 
two causal graphs of~\Cref{fig:cbns}. All nodes are binary. We specify the following conditional probability parameters for the graph structures. 
For nodes without parents 
we assume a uniform prior:

\begin{equation*} \label{eq:prior}
    P(PH = 1) = P(CG = 1) = P(GH = 1) = 1/2
\end{equation*}

where variables are abbreviated with their initials (e.g. $\mathit{Player Health} = PH$).  

For illustration, we assume an unrealistically simple noise-free causal mechanism governing the player's behavior: The player shoots if and only if they are healthy and close to the goal. In symbols, we have

\begin{equation} \label{eq:behavioral}
    P(SH =1| PH, CG) =\begin{cases}
			1, & \text{if $PH = CG = 1$}\\
            0, & \text{otherwise}
		 \end{cases}
\end{equation}

A player scores if and only if they shoot, are healthy, close to the goal, and the goalie is not healthy. In symbols, we have

\begin{equation*}
    P(SC =1| SH, CG, GH) =\begin{cases}
			1, & \text{if $SH = CG = 1$ and $GH = 0$}\\
            0, & \text{otherwise}
		 \end{cases}
\end{equation*}

This parametrization implies the following {\em joint probability that the goalie is not healthy and all other variables are true}: 

\begin{equation} \label{eq:cbn-example-obs}
    \mprob{}(1=\mathit{PH} = \mathit{CG}  = \mathit{SH} = \mathit{SC}, \mathit{GH} = 0) = 1/2 \cdot 1/2 \cdot 1 \cdot 1 \cdot 1/2 = 1/8.
\end{equation}

We next compute the {\em causal effect of shooting on goal scoring, given that all other observable variables are true}. ~\Cref{tab:probs} gives the resulting scoring probabilities for seeing a player take a shot (observation) vs. intervening to take a shot (action).
~\Cref{fig:first-compute,fig:confound-compute}
show how these probabilities are computed.

\begin{table}[H]
\centering
\caption{Goal scoring probabilities derived from the models of~\Cref{fig:cbns}, given the evidence that all variables observable in the model are true. For observational probabilities, the player is observed to take a shot. For interventional probabilities, the shot is the result of an intervention. Note that observational and interventional probabilities differ only in the confounded offline model~\Cref{fig:confound}.}
\begin{tabular}{|l|c|c|l|}
\hline
                           & \begin{tabular}{c}
                            Observation  \\ $P(\mathit{SC}=1|\set{\ov=1},\mathit{SH}=1)$
                           \end{tabular} & \begin{tabular}{c}
                           Intervention  \\ 
            $P(\mathit{SC}=1|\set{\ov=1},\ado(\mathit{SH}=1))$
                           \end{tabular} \\ \hline
Online Model $\ref{fig:first}$    &    1/2   & 1/2         \\ \hline
Offline Model $\ref{fig:confound}$ & 1/2         & 1/4         \\ \hline
\end{tabular}
\label{tab:probs}
\end{table}

In the online model~$\ref{fig:first}$, 
Player Health (PH) is observed, so the shooting effect queries compared are $$P(\mathit{SC}=1|\mathit{CG}=1,\mathit{PH}=1,\mathit{SH}=1) = 1/2 \mbox{ and } P(\mathit{SC}=1|\mathit{CG}=1, \mathit{PH}=1,\ado(\mathit{SH}=1)) = 1/2.$$ 
Since both parents of the manipulated variable $\mathit{SH}$ are observed,~\Cref{lemma:parent-condition} implies that both the observational and interventional probabilities should be the same. The first row of~\Cref{tab:probs} confirms that both probabilities are 1/2; see~\Cref{fig:first-compute}.

In the offline model~$\ref{fig:confound}$, 
Player Health (PH) is {\em not} observed, so the shooting effect queries compared are $$P(\mathit{SC}=1|\mathit{CG}=1,\mathit{SH}=1) = 1/2 \mbox{ and } P(\mathit{SC}=1|\mathit{CG}=1, \ado(\mathit{SH}=1)) = 1/4, $$ 
which are different according to the second row of~\Cref{tab:probs}. As shown in~\Cref{fig:confound-compute}, the fundamental reason is that for observational probabilities, we can infer from the given observations that $\mathit{PH} = 1$; formally $P(\mathit{PH}=1|\mathit{CG}=1,\mathit{SH}=1))=1$. The truncation semantics removes the causal link $\mathit{PH} \rightarrow \mathit{SC}$, thereby {\em blocking the inference from effect to cause.}
%
%
 This example illustrate the general reason for why, given time stamps, confounders are the only case in which observational and interventional probabilities differ: If the common cause between action and reward is observed, it induces a non-causal correlation between action and reward, but conditioning on the common cause eliminates the non-causal correlation. The non-causal correlation can be eliminated only by observing the common cause; which is impossible if it is a latent confounder. 

\begin{figure}[bpth]

\begin{minipage}[b]{\textwidth} 
     \centering
     \begin{subfigure}[b]{0.45\textwidth}
         \centering
\includegraphics[width=\textwidth]{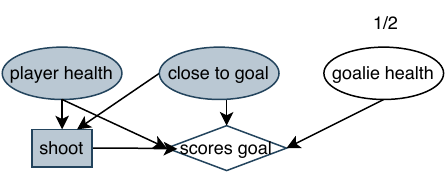}
         \caption{Observation}
         \label{fig:first-observe}
     \end{subfigure}
      \begin{subfigure}[b]{0.45\textwidth}
         \centering
         \includegraphics[width=\textwidth]{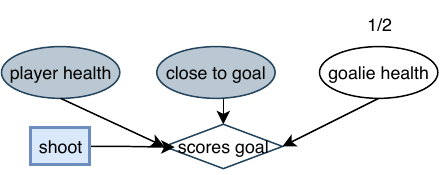}
         \caption{Intervention}
         \label{fig:first-intervene}
     \end{subfigure}
     \caption{Observational and intervention probabilities in the online model of~\Cref{fig:first}. Gray indicates observed variables whose values are specified in the query. Numbers indicate posteriors over latent variables, given the observations. Light blue indicates an intervention on a variable.~\Cref{fig:first-observe}: The observational scoring probability $P(\mathit{SC}=1|\mathit{CG}=1,\mathit{SH}=1,\mathit{PH}=1)$ is 1/2, the same as the probability that the goalie is healthy. ~\Cref{fig:first-intervene}: The query $P(\mathit{SC}=1|\mathit{CG}=1,\mathit{PH}=1,\ado(\mathit{SH}=1))$ is evaluated in the intervention model. The scoring probability remains 1/2, because both player health and shooting are observed, so breaking the causal link between them has no effect on the scoring probability. 
     \label{fig:first-compute}}
      \end{minipage}

\vspace{1cm} 

\begin{minipage}[bpth]{\textwidth} 
     \centering
     \begin{subfigure}[b]{0.45\textwidth}
         \centering
\includegraphics[width=\textwidth]{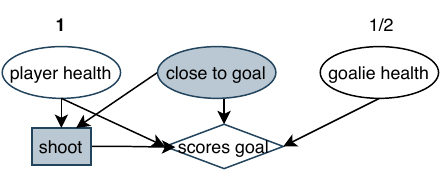}
         \caption{Observation}
         \label{fig:confound-observe}
     \end{subfigure}
      \begin{subfigure}[b]{0.45\textwidth}
         \centering
         \includegraphics[width=\textwidth]{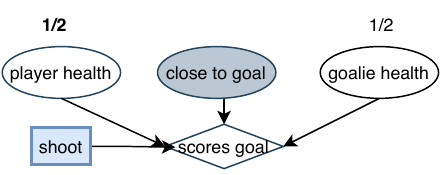}
         \caption{Intervention}
         \label{fig:confound-intervene}
     \end{subfigure}
     \caption{Observational and intervention probabilities in the confounded offline model of~\Cref{fig:confound}. The query $P(\mathit{SC}=1|\mathit{CG}=1,\mathit{SH}=1)$ is evaluated in the observation model~\Cref{fig:confound-observe}. {\em If we see a player shooting, we can infer that they are healthy.} Therefore the player scores if and only if the goalie is not healthy, so the scoring probability is 1/2. The query $P(\mathit{SC}=1|\mathit{CG}=1,\ado(\mathit{SH}=1))$ is evaluated in the intervention model~\Cref{fig:confound-intervene}. Without a link between player health and shooting, the probability of player and goalie health are both 1/2, which means that the scoring probability is 1/4.\label{fig:confound-compute}}
      \end{minipage}

\vspace{1cm} 

\end{figure}

\section{Background: Markov Decision Processes} \label{sec:pomdp}

As we explained in the introduction, for temporal data the difference between causation and correlation stems from the possible presence of confounders---unobserved common causes of the agent's actions and other environment variables. The appropriate setting for studying causality in RL is therefore a setting in which parts of the environment may be unobserved, which is known as a \emph{partially observable Markov decision process} (POMDP). In this section we review the basic theory of POMDPs.

\begin{table}[hbpt]
\centering
\caption{Correspondence between Causal and RL terminology.}
\begin{tabular}{cc}
\toprule
\begin{tabular}{l}
     \textbf{Reinforcement} \\
     \textbf{Learning}
\end{tabular} & \textbf{Causality} \\ 
\midrule
action                          & treatment          \\
reward                          & response           \\
observed state $O$ & observed co-variates variables $\set{\ov}$ \\
state        $\S$                   & co-variates $\set{\ov} \cup \set{\lv}$       \\
belief state $b(\S)$ & latent variable distribution $\prior(\set{\lv})$ \\
belief state update $b(S|O)$ & abduction $\prior(\set{\lv}|\set{\ov})$ \\
complete observability & causal sufficiency \\
\bottomrule
\end{tabular}
\label{table:rl-causality}

\end{table}


POMDP theory and causal concepts share a common formal structure, despite differences in terminology for describing interventions and their effects.~\Cref{table:rl-causality} shows translations between analogous concepts. Latent variables concepts are described in~\Cref{sec:counter}. Key differences include the following.

\begin{enumerate}
    \item In causal models, the response is a variable to be predicted, not a reward to be maximized.
    \item A reinforcement learning policy guides {\em sequential} decisions, not a single one-time treatment.
    \item RL concepts are usually defined in terms of a single state $s$; causal concepts are defined in terms of values for a list of variables. Using the terminology of~\cite[Ch.2.4.7]{Russell2010}, RL uses an {\em atomic} environment representation, whereas causal models use a {\em factored} representation.
\end{enumerate}

Bandit problems are the RL setting for one-time decisions, so causal models are closely related to contextual bandit problems~\citep{lattimore2016causal,lee2018structural}. Recent work has explored dynamic treatment regimes for applications in medicine, which brings causal modelling closer to the sequential setting of RL~\citep{zhang2020designing}. While RL theory and notation utilizes an atomic state representation $s$, factored representations are familiar in practice. For example in a grid world, a location is described as a coordinate pair $(x,y)$. Following previous causal models for RL~\citep{bib:barenboim-tutorial},\citep[Ch.17.4.3]{Russell2010}, we use a factored representation  
for the state space of a POMDP, as we describe next. We begin with the definition of a Markov Decision Process (MDP), then generalize it to POMDPs. 


\subsection{Factored Markov Decision Processes}  \label{sec:factored}


A \defterm{factored Markov Decision Process} (MDP) $P_{\E}$ is defined by the following components. 

\begin{itemize}
    \item {\em Variables:} A finite set of \defterm{state variables} $\set{\sv}$, an \defterm{action variable} $\A$ ranging over a finite set of actions available to the agent, and a real-valued \textbf{reward variable} $R$. An environment \defterm{state} is a assignment $\set{\svalue}$ to the state variables. 
    \item An {\em initial state distribution} $P_{\E}(\set{\svalue}_0)$
    \item A stationary {\em transition model} $P_{\E}(\set{\svalue}_{t+1}|\set{\svalue}_t,\a_t)$
    \item A stationary \emph{reward model} $P_{\E}(\r_{t}|\set{\svalue}_{t},a_{t})$
\end{itemize}

The environment is Markovian in the sense that the new state at time $t+1$ depends only on the current state and action, and is independent of previous states. It is well-known that 
in principle an  environment process can be converted to a Markov process by including the state-action history in the current state~\citep{Sutton1998}.

\paragraph{Example} In the example of~\Cref{fig:cbns}, the action and reward variables are $\A = \mathit{SH}$ and $\R = \mathit{SC}$. The state variables are $\set{\sv} = \{\mathit{CG}, \mathit{GH}, \mathit{PH} \}$. There are therefore $2^3$ environment states. For example, the assignment $\set{\sv} = \set{\svalue} \equiv \langle \mathit{CG} = 1, \mathit{GH} = 1, \mathit{PH} = 1\rangle$ represents the state where all variables are true. The {\em reward model} $P_{\E}((\mathit{SH}|\set{\sv},\A)) = \mprob{\ddn}(\mathit{SH}|\set{\sv},\A)$ was defined by the CBN of~\Cref{sec:cbn-example}. Informally, a player scores if and only if they shoot, are healthy, close to the goal, and the goalie is not healthy. Similarly, the {\em initial state distribution} is given by the CBN distribution over the $\set{\rv}$ variables. For instance, for the initial state $\set{\svalue}_0 = (\mathit{PH}=1,\mathit{CG}=1,\mathit{GH=1})$, we have $P_{\E}(\set{\svalue_0})= \mprob{\ddn}(\set{\svalue_0})= 1/2 \cdot 1/2 \cdot 1/2 = 1/8$.

\begin{figure}
    \centering
    \includegraphics{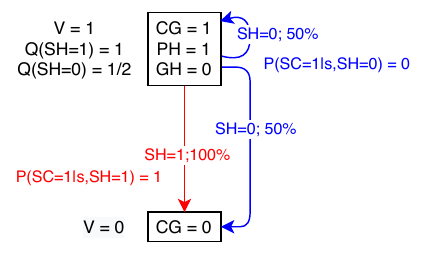}
    \caption{State diagram for the sports example illustrating on-policy policy evaluation in the online setting of ~\Cref{fig:first} with complete observability. We use state abstraction so that for example the abstract state labelled $\PH=0$ represents all states where the player is not healthy. For each state we specify its value V and Q action values, given the behavioral policy of shooting if and only if the player is healthy and close to the goal (\Cref{eq:behavioral}). Transitions are labelled with probabilities and annotated with expected rewards given a state-action pair. The discount factor $\gamma = 1$.}
    \label{fig:state-diagram}
\end{figure}

\Cref{fig:state-diagram} illustrates the {\em state transition model}. We treat the player and goalie health as persistent time-invariant features~\citep{lu2018deconfounding}:
$
 P_{\E}(\mathit{PH'}=\mathit{PH}) = 1 \mbox{ and }  P_{\E}(\mathit{GH'} = \mathit{GH}) = 1.$
 
For the Close-to-Goal state variable, we adopt the following transition model
\begin{equation*}
    P(\mathit{CG'} = 0| \mathit{SH}, \mathit{CG}) =
    \begin{cases}
			1, & \text{if $\mathit{CG}=0$ or $\mathit{SH} = 1$}\\
            1/2, & \text{if $\mathit{CG}=1$ and $\mathit{SH} = 0$}
		 \end{cases}
\end{equation*}

This model can be interpreted as follows. (1) 
We make the simplifying assumption that the attacking team stays close to the goal if and only if they maintain possession. Any state with $\mathit{CG}=0$ is therefore an absorbing state: $P_{\E}(\mathit{CG'}=0|\mathit{CG}=0)=1.$ (2) If the attacking team is close to the goal, our scenario works as follows.

\begin{itemize}
    \item If the player shoots, the team loses possession, either because they scored, or because the shot was blocked and the defending team took over (no rebounds):
    $P_{\E}(\mathit{CG'}=0|\mathit{SH}=1)=1.).$
    \item If the player does not shoot (e.g., they pass instead), there is a 50\% change that the attacking team retains possession.
\end{itemize}

A \defterm{policy} $\policy: \set{\sv} \rightarrow \Delta(\A)$ maps a state to a probability distribution over actions; we also write $\policy(\a|\set{\sv})$. The value function $V^{\policy}$ and action-value function $Q^{\policy}$ give the expected cumulative reward of a policy. They satisfy the \defterm{Bellman equation for policy evaluation}:

\begin{align} \label{eq:bellman-basic}
    \Q^{\policy}(\set{\svalue},\a) =  R(\set{\svalue},\a) + \gamma \sum_{\set{\svalue}'} P_{\E}(\set{\svalue}'|\set{\svalue},\a) V^{\policy}(\set{\svalue}')  \\
    \V^{\policy}(\set{\svalue}) = \sum_{a} \policy(a|\set{\svalue}) Q(\set{\svalue},a) \nonumber\\
    R(\set{\svalue},\a) = \sum_{\r} P_{\E}(\r|\set{\svalue},\a)   \nonumber \label{eq:xreward}
\end{align}

where $\gamma \in (0,1]$ is a discount factor.~\Cref{fig:state-diagram} illustrates the state and action value functions for our example, based on the policy defined by~\Cref{eq:behavioral}. {\em In all our sports examples, we assume that the policy of~\Cref{eq:behavioral} is the behavioral policy that generates the sports transition data.}

\begin{figure}[hbtp]
     \centering
     \begin{subfigure}[b]{0.30\textwidth}
         \centering
         \includegraphics[width=\textwidth]{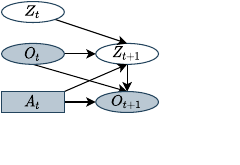}
         \caption{Transition model in factored POMDP.}
         \label{fig:pomdp-observe}
     \end{subfigure}
      \begin{subfigure}[b]{0.30\textwidth}
         \centering
         \includegraphics[width=\textwidth]{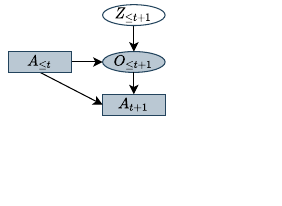}
         \caption{An executable policy}
         \label{fig:act-state}
     \end{subfigure}
      \begin{subfigure}[b]{0.30\textwidth}
         \centering
         \includegraphics[width=\textwidth]{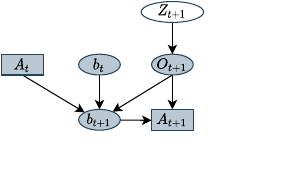}
         \caption{Transition model in belief MDP.}
         \label{fig:belief-mdp}
     \end{subfigure}
     \caption{Generic causal graphs for POMDPs.~\Cref{fig:pomdp-observe}: The environment transitions to a new latent state and generates an observation signal based on the current state and most recent action.
     ~\Cref{fig:act-state}: Given the action and observation history, the agent's current decisions are independent of the environment's latent state. The agent's policy can therefore be executed given their observation signal.
     ~\Cref{fig:belief-mdp}: The agent selects an action through their policy based on their current belief state $b$. Their updated belief state depends on the previous belief state, previous action and new observation.
     \label{fig:pomdp-graphs}}
\end{figure}

\subsection{Partially Observable Markov Decision Processes} \label{sec:pomdps}

A common causal model for MDPs are confounded MDPs~\citep{zhang2016markov}; see~\Cref{sec:ope} below. As noted by ~\citep{bruns2021model}, they can be seen as an instance of a POMDP. We use POMDP terminology as it is familiar to RL researchers. 

A \defterm{factored POMDP} is a factored MDP $P_{\E}$ together with a state variable partition

\[ \set{\sv} = \set{\ov} \cup \set{\lv}.\]

Here $\set{\ov}$ represents the set of observable state variables, also called \defterm{the observation signal}, and $\set{\lv}$ represents the latent state variables, which we sometimes refer to simply as the latent state. If $\set{\lv}$ is empty, the state is \defterm{completely observable}; otherwise it is \defterm{partially observable}. Since a pair $(\set{\lvalue},\set{\ovalue}) \equiv \set{\svalue}$ describes a state, we freely apply MDP notation to both $\set{\svalue}$ and pairs $(\set{\lvalue},\set{\ovalue})$. For example, POMDP components are described as follows.

\begin{itemize}
\item The initial state distribution $P_{\E}(\set{\svalue}_0)$ can be factored into a distribution over latent and observed variables:
\[
P_{\E}(\set{\svalue}_0) \equiv P_{\E}(\set{\lvalue}_0,\set{\ovalue}_{0}) = P_{\E}(\set{\lvalue}_0) \times P_{\E}(\set{\ovalue}_{0}|\set{\lvalue}_{0})
\]
where $P_{\E}(\set{\lvalue}_0)$ is an {\em initial latent distribution} and $P_{\E}(\set{\ovalue}_{0}|\set{\lvalue}_{0})$ is the {\em initial observation model.} 
 \item We assume that the  \emph{transition model} factors into a {\em latent update model} and a {\em dynamic observation model} (see~\Cref{fig:pomdp-observe}): 
 

\[
P_{\E}(\set{\svalue}_{t+1}|\set{\svalue}_t,\a_t) \equiv P_{\E}(\set{\lvalue}_{t+1},\set{\ovalue}_{t+1}|\set{\svalue}_t,\a_t) = P_{\E}(\set{\lvalue}_{t+1}|\set{\svalue}_{t},\a_t) \times P_{\E}(\set{\ovalue}_{t+1}|\set{\lvalue}_{t+1},\set{\ovalue}_{t},\a_{t}).
\]
\end{itemize}


In the standard atomic POMDP formulation, the observation model $P(o|s,a)$ depends on the entire state~\cite[Ch.17.4]{Russell2010}. However in factored representation, the entire state $\set{\svalue}$ includes the current observations $\set{\ovalue}$. Instead we assume that the current observations are Markovian in that they depend only on the current latent state, and the most recent observation and action:
$$P_{\E}(\set{\ovalue}_{t+1}|\set{\lvalue}_{t+1},\set{\ovalue}_{t},\a_{t})= P_{\E}(\set{\ovalue}_{t+1}|\set{\lvalue}_{t+1},\set{\ovalue}_{\leq t},\a_{\leq t})$$
%
%

\paragraph{Example.} In the online model of ~\Cref{fig:first}, the observable state variables are $\set{\ov} = \{\mathit{CG},\mathit{PH}\}$. Therefore the assignment $(\set{\ov} = \set{\ovalue}) = \langle CG = 1, PH = 1\rangle$ is the observation signal $\set{\ovalue}$ received by the learning agent in the state $\set{\svalue}$ where all variables are true. In the offline model of~\Cref{fig:confound}, the observation signal does not include the player health. Thus $\set{\ov} = \{\mathit{CG}\}$ and $\set{\ovalue} = \langle CG = 1\rangle$ is the observation signal received by the learning agent in the state $\set{\svalue}$ where all variables are true. 

The difference illustrates the fundamental fact, highlighted by~\cite{Zhang2020,zhang2016markov,bruns2021model}, that {\em different agents can receive different observation signals in the same environment state}. The reason is that the observations depend on not only the state of the environment, but also on the perceptual capabilities of the agents. In particular the agent learning from an offline dataset may not have access to the same observations as the behavioral agent whose behavior generated the offline dataset (see also~\Cref{fig:partial-offline}). Because the agent-relative distinction between observable and latent state variables is key for causal modelling in RL, we highlight it with another example adapted from~\citep{Zhang2020}. They consider a self-driving car scenario similar to that illustrated in~\Cref{fig:driving}. In the online setting, the agent learns by driving themselves; their observation signal includes the tail light of the car in front of them. In the offline setting, the agent learns from a dataset collected by drone surveillance, which does not include the front tail lights. For the task of learning an optimal policy that maximizes reward, {\em a restriction on the agent's perceptual abilities translates into a restriction on the space of policies that are feasible for them.} In the driving example, the expert's policy may include a rule such as ``brake if I see a tail light in front of me''. If the learning agent does not observe the tail lights, this policy is not available to them.

\begin{figure*}
     \centering
     \begin{subfigure}[b]{0.40\textwidth}
         \centering
\includegraphics[width=\textwidth]{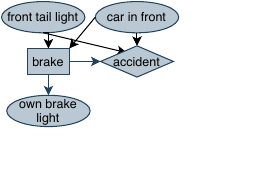}
         \caption{Online view}
         \label{fig:car-online}
     \end{subfigure}
      \begin{subfigure}[b]{0.40\textwidth}
         \centering
         \includegraphics[width=\textwidth]{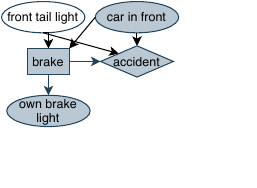}
         \caption{Offline view}
         \label{fig:car-offline}
     \end{subfigure}
     \caption{An illustration of different observation signals in the same environment, based on an example by~\cite{Zhang2020}. 
     ~\Cref{fig:car-online}: In the online setting, the driving agent learns about traffic by driving themselves. Their braking decisions are influenced by observing the taillight of the car in front of them.
     ~\Cref{fig:car-offline}: In the offline setting, the driving agent learns from a driving demonstration dataset. The dataset includes drone data specifying the location of other cards, but not the status of their tail lights. 
     \label{fig:driving}}
\end{figure*}

To formalize the notion of a  policy feasible for an observation signal $\set{\ov}$, we note that the interaction between an environment and an agent generates \defterm{transition data} of the form $\tau = \set{\lvalue}_0,\set{\ovalue}_0,\a_0,\r_0,\set{\lvalue}_1,\set{\ovalue}_1,\a_1,r_1,\ldots,\set{\lvalue}_t, \a_t,r_{t},\set{\lvalue}_{t+1},\set{\ovalue}_{t+1},\a_{t+1}$~\citep{Sutton1998}: 
Environment state $\set{\svalue}_t \equiv \set{\lvalue}_t,\set{\ovalue}_t$ occurs, an action $\a_t$ is chosen by the agent's policy, resulting in a reward $\r_{t}$ and next state $\set{\lvalue}_{t+1}$ according to the environment dynamics. At the next time step, another action $\a_{t+1}$ is chosen, and so on. 
A policy $\policy$ is \defterm{executable with observation signal $\set{\ov}$} if for all times $t$, the agent's actions are independent of the latent state (see~\Cref{fig:act-state}):

\[\policy(\a_{t}|\set{\lvalue}_{t},\set{\ovalue}_{\leq t}, \set{\a}_{<t}) = \policy(\a_{t}|\set{\ovalue}_{\leq t}, \set{\a}_{<t}) \]

Our notion of an executable policy is essentially equivalent to Russell and Norvig's concept of an {\em agent function}, which maps a history of the agent's percepts and actions to a distribution over current actions~\cite[Ch.2.1]{Russell2010}. In our sports example, the policy of shooting if and only if the player is close to the goal and healthy is executable in the online model of~\Cref{fig:first} and not executable in the offline model of~\Cref{fig:confound}, where player health is not observable. 

To find an optimal executable policy $\policy$ under partial observability, the most common framework is to transform the POMDP into a belief MDP. The next section describes this transformation for a factored POMDP.

\subsection{The Belief MDP} \label{sec:belief-mdp}

Even if the environment dynamics is Markovian in the state space, it may not be Markovian in observation space,
because past observations can and typically do carry information about the current latent state. In order to apply MDP techniques to a POMDP, the most common approach is to transform the POMD into an equivalent MDP whose states represent the agent's current beliefs. A \defterm{belief state} is a distribution $b(\set{\lv})$ over the latent environment state. The basic idea is that a POMDP can be transformed into an MDP by replacing latent states with the agent's {\em beliefs} about latent states. As 
\cite[Ch.17.4.1.]{Russell2010} write in their standard textbook: ``The fundamental insight...is this: {\em the optimal action depends only on the agent's current belief state}'' (emphasis \citeauthor{Russell2010}). Note that while an agent's belief state $b_t$ is {\em about} the latent state $\set{\lvalue}_t$, the latent state does not cause their beliefs, so a policy based on beliefs is executable (see~\Cref{fig:belief-mdp}). 

In a factored POMDP, observations are separate from the latent space, and the agent's decisions depend not only on their current belief state, but also on their current observation. Accordingly, an \defterm{epistemic state} $\langle\set{\ovalue},b \rangle$ comprises a current observation $\set{\ovalue}$ and a current belief state. We use the tuple notation $\langle \rangle$ to make longer formulas easier to parse, and to emphasize the analogy between epistemic states and MDP states in a traditional atomic representation. An {\em executable policy} maps an epistemic state to a distribution over actions:

\begin{equation*} \label{eq:policy}
\policy: \set{\ov}^{\policy} \times B^{\policy} \rightarrow \Delta(\A)
\end{equation*}
where $\set{\ov}^{\policy}$ represents the \defterm{observation space} of the agent executing policy $\policy$ and $B^{\policy}$ the space of belief states, i.e. distributions $\Delta(\set{\lv}^{\policy})$ over the unobserved variables. We also use the conditional probability notation  $\policy(\a|\langle \set{\ovalue},b \rangle) $. 
We can view belief states as summarizing the observation-action history, much as hidden states summarize past sequences in a recurrent neural network. 

\paragraph{Belief Dynamics} Given a current action $\a_t$ and observation ${\set{\ovalue}}_{t+1}$, the agent's beliefs move from current beliefs $b_{t}$ to updated beliefs $b_{t+1}$ through posterior updates; see~\Cref{fig:belief-mdp}. In a belief MDP model, the interaction between an agent's policy and the environment dynamics gives rise to a sequence $\tau = \set{\lvalue}_0,\set{\ovalue}_0,b_0,\a_0,\r_1,\set{\lvalue}_1,\set{\ovalue}_1,b_1,\a_1,r_2,\ldots,\set{\lvalue}_t,\set{\ovalue}_t,b_t,\a_t,r_{t+1},\set{\lvalue}_{t+1},\set{\ovalue}_{t+1},b_{t+1},\a_{t+1},\ldots$.

The observation signal provides an agent with information about the latent environment state through the posterior distribution $P(\set{\lvalue}_{t+1}|\set{\ovalue}_{\leq t+1},\a_{\leq t})$. As we explain in~\Cref{sec:counter}, posterior updates are also a key operation in computing counterfactual causal effects. We next derive the well-known POMDP formula for recursively updating the latent state posterior~\cite[Ch.17.4.2]{Russell2010} for a factored POMDP and an executable policy. 


\begin{observation} \label{obs:posterior} If the transition data are generated by a POMDP $P_{\E}$ and an executable policy, the latent posterior update is given by
\begin{align} \label{eq:update-b}
    P(\set{\lvalue}_{t+1}|\set{\ovalue}_{\leq t+1},\a_{\leq t}) = \alpha P_{\E}(\set{\ovalue}_{t+1}|\set{\lvalue}_{t+1},\set{\ovalue}_{t},\a_{t}) \times \sum_{\set{\lvalue}_{t}} P_{\E}(\set{\lvalue}_{t+1}|\set{\lvalue}_{t},\set{\ovalue}_{t},\a_t) P(\set{\lvalue}_{t}|\set{\ovalue}_{\leq t},\a_{<t}) 
\end{align}
where $\alpha$ is a normalization constant.
\end{observation}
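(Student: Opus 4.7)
The plan is to derive the update formula as a standard Bayes-filter recursion, doing an observation update followed by a prediction step, and to use the two Markov assumptions stated for the factored POMDP together with the executable-policy condition to collapse all the histories that appear in the conditioning.

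First I would apply Bayes' rule to peel off the new observation $\set{\ovalue}_{t+1}$ from the rest of the conditioning history, obtaining
\begin{equation*}
P(\set{\lvalue}_{t+1}\mid\set{\ovalue}_{\leq t+1},\a_{\leq t}) \;=\; \alpha \, P(\set{\ovalue}_{t+1}\mid\set{\lvalue}_{t+1},\set{\ovalue}_{\leq t},\a_{\leq t}) \, P(\set{\lvalue}_{t+1}\mid\set{\ovalue}_{\leq t},\a_{\leq t}),
\end{equation*}
where $\alpha = 1/P(\set{\ovalue}_{t+1}\mid\set{\ovalue}_{\leq t},\a_{\leq t})$ is the usual normalizing constant. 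The Markovian observation assumption $P_{\E}(\set{\ovalue}_{t+1}\mid\set{\lvalue}_{t+1},\set{\ovalue}_{\leq t},\a_{\leq t}) = P_{\E}(\set{\ovalue}_{t+1}\mid\set{\lvalue}_{t+1},\set{\ovalue}_{t},\a_{t})$ introduced in Section 3.2 collapses the first factor to the desired observation term.

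Next I would perform the prediction step by marginalizing over $\set{\lvalue}_t$:
\begin{equation*}
P(\set{\lvalue}_{t+1}\mid\set{\ovalue}_{\leq t},\a_{\leq t}) \;=\; \sum_{\set{\lvalue}_t} P(\set{\lvalue}_{t+1}\mid\set{\lvalue}_t,\set{\ovalue}_{\leq t},\a_{\leq t}) \, P(\set{\lvalue}_t\mid\set{\ovalue}_{\leq t},\a_{\leq t}).
\end{equation*}
The Markov transition assumption reduces the first factor inside the sum to $P_{\E}(\set{\lvalue}_{t+1}\mid\set{\lvalue}_t,\set{\ovalue}_t,\a_t)$. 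Combining these pieces yields the stated formula provided the second factor equals $P(\set{\lvalue}_t\mid\set{\ovalue}_{\leq t},\a_{<t})$, i.e.\ provided conditioning on $\a_t$ carries no extra information about $\set{\lvalue}_t$ beyond the earlier history.

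The main obstacle, and the only step that is not pure bookkeeping, is justifying this last reduction. This is where the \emph{executable policy} hypothesis enters: by definition $\policy(\a_t\mid\set{\lvalue}_t,\set{\ovalue}_{\leq t},\a_{<t}) = \policy(\a_t\mid\set{\ovalue}_{\leq t},\a_{<t})$, so $\a_t$ is conditionally independent of $\set{\lvalue}_t$ given $(\set{\ovalue}_{\leq t},\a_{<t})$. Applying Bayes' rule, this symmetric independence gives $P(\set{\lvalue}_t\mid\set{\ovalue}_{\leq t},\a_{\leq t}) = P(\set{\lvalue}_t\mid\set{\ovalue}_{\leq t},\a_{<t})$, which is exactly what the recursion needs to close on the previous time step's belief. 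Substituting into the two displayed equations above recovers the claim; the constant $\alpha$ is determined by requiring $P(\cdot\mid\set{\ovalue}_{\leq t+1},\a_{\leq t})$ to sum to one over $\set{\lvalue}_{t+1}$.
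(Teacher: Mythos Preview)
Your proposal is correct and follows essentially the same approach as the paper: Bayes' theorem to split off the new observation, marginalization over $\set{\lvalue}_t$, the Markov assumptions to collapse the observation and transition factors, and finally the executable-policy condition to drop $\a_t$ from the conditioning on the previous belief. Your write-up is in fact more explicit than the paper's three-line derivation about why each reduction is licensed, and you correctly single out the executable-policy step as the only non-bookkeeping move.
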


\begin{proof}
\begin{align*}
P(\set{\lvalue}_{t+1}|\set{\ovalue}_{\leq t+1},\a_{\leq t}) \propto P_{\E}(\set{\ovalue}_{t+1}|\set{\lvalue}_{t+1},\set{\ovalue}_{\leq t},\a_{\leq t}) \sum_{\set{\lvalue}_{t}} P_{\E}(\set{\lvalue}_{t+1}|\set{\lvalue}_{t},\set{\ovalue}_{\leq t},\a_{\leq t}) P(\set{\lvalue}_{t}|\set{\ovalue}_{\leq t},\a_{\leq t}) \mbox{ by Bayes' theorem} \\
= P_{\E}(\set{\ovalue}_{t+1}|\set{\lvalue}_{t+1},\set{\ovalue}_{t},\a_{t}) \sum_{\set{\lvalue}_{t}} P_{\E}(\set{\lvalue}_{t+1}|\set{\lvalue}_{t},\set{\ovalue}_{t},\a_{t}) P(\set{\lvalue}_{t}|\set{\ovalue}_{\leq t},\a_{\leq t})  \mbox{ applying the Markov property} \\
= P_{\E}(\set{\ovalue}_{t+1}|\set{\lvalue}_{t+1},\set{\ovalue}_{t},\a_{t}) \sum_{\set{\lvalue}_{t}} P_{\E}(\set{\lvalue}_{t+1}|\set{\lvalue}_{t},\set{\ovalue}_{t},\a_{t}) P(\set{\lvalue}_{t}|\set{\ovalue}_{\leq t},\a_{<t})  \mbox{ because $\policy$ is executable}
\end{align*}
\end{proof}

The standard notation for an agent's posterior \defterm{belief state} at time $t$ is $b_t(\set{\lvalue}_{t+1}) \equiv P(\set{\lvalue}_{t+1}|\set{\ovalue}_{\leq t+1},\a_{\leq t})$.~\Cref{eq:update-b} shows how the new belief state $b_{t+1}$ can be computed from the previous belief state $b_t$.
We adopt the standard POMDP notation for an agent's current belief state $b$ and $b'$ for a successor belief state. Similarly, we write $\set{\ovalue,\ovalue'}$ for an observation signal and its successor, and $\set{\lvalue,\lvalue'}$ for a latent state component and its successor. With these conventions, 
the belief state update~\Cref{eq:update-b} becomes

\begin{align} \label{eq:update-b'}
    b'(\set{\lvalue'}) = \alpha P_{\E}(\set{\ovalue}'|\set{\lvalue}',\set{\ovalue},\a) \times E_{\set{\lvalue}\sim b(\set{\lvalue})} [P_{\E}(\set{\lvalue'}|\set{\lvalue},\set{\ovalue},\a)]
\end{align}

~\Cref{eq:update-b'} is analogous to the standard POMDP for atomic POMDPs, with the latent state variables $\set{\lvalue}$ replacing the latent state $\svalue$. 


\paragraph{Policy Evaluation for Belief MDPs}

The \defterm{policy evaluation task} is to compute the value function for a \defterm{policy} $\policy$ in a given environment. 
The Bellman equation to evaluate a policy based on epistemic states is as follows. 

\begin{align}
    \Q^{\policy}(\langle \set{\ovalue},b \rangle,\a) =  R(\langle \set{\ovalue},b \rangle,\a) + \gamma \sum_{\set{\ovalue}'} P(\set{\ovalue}'|\langle \set{\ovalue},b \rangle,\a) V^{\policy}(\set{\ovalue}',b') \label{eq:observe-bellman} \\
    \V^{\policy}(\langle \set{\ovalue},b \rangle) = \sum_{a} \policy(a|\langle \set{\ovalue},b \rangle) Q(\langle \set{\ovalue},b \rangle,a) \nonumber\\
    R(\langle \set{\ovalue},b \rangle,\a) = E_{\set{\lvalue}\sim b(\set{\lvalue})} 
    R(\set{\lvalue},\set{\ovalue},\a)  \label{eq:xreward}\\
    P(\set{\ovalue}'|\langle \set{\ovalue},b \rangle,\a) = \sum_{\set{\lvalue'}}P_{\E}(\set{\ovalue}'|\set{\lvalue'},\set{\ovalue},\a) E_{\set{\lvalue}\sim b(\set{\lvalue})} 
    [P_{\E}(\set{\lvalue}'|\set{\lvalue},\set{\ovalue},\a)]  \label{eq:new-observe}
\end{align}


where $\gamma \in (0,1]$ is a discount factor.




According to the recurrent equation~\Cref{eq:observe-bellman}, given a new observation $\set{\ovalue'}$ and the current action $\a$, the expected policy value can be computed in two steps:

\begin{description}
    \item[Posterior Update] Compute the new belief state $b'$ by conditioning the current beliefs $b$ on observations $\set{\ovalue},\set{\ovalue}'$ and action $\a$  following~\Cref{eq:update-b'}.
    \item[Prediction] Estimate the expected return $V^{\policy}(\set{\ovalue}',b')$ given the new observation $\set{\ovalue}'$ and new belief state $b'$. 
\end{description}

Updating a posterior to predict the outcome of an action $\a$ is a key part of the formal semantics of counterfactuals that we present in~\Cref{sec:counter}.~\Cref{fig:state-diagram-pomdp} illustrates how the Bellman equation can be used to evaluate our behavioral policy in the online POMDP setting where Goalie Health is not observable. In the next section we illustrate policy evaluation in the offline model of~\Cref{fig:confound}, where the evaluation is based on a causal model. 

\begin{figure}
    \centering
    \includegraphics{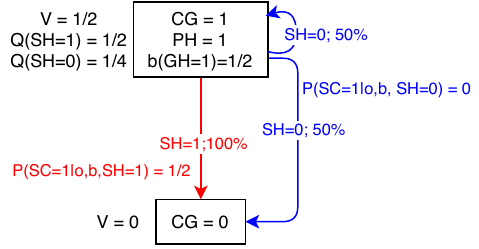}
    \caption{On-policy policy evaluation in the online setting of ~\Cref{fig:first} with partial observability. The policy evaluated is the standard behavioral policy. An epistemic state comprises values for the observable variables $\CG$ and $\PH$ and a belief over the values of the latent variable $\GH$. We use state abstraction so that for example the abstract state labelled $\CG=0$ represents all states where the attacking team is not close to the goal. The policy evaluated is the standard behavioral policy of~\Cref{eq:behavioral}, which chooses to shoot if and only if the player is  healthy and close to the goal. The diagram shows the V value and Q action values for the epistemic state where the agent is observed to be healthy and close to the goal, and their belief is uniform over the latent variable $\GH$. The evaluation uses reward and transition probabilities derived from the dynamic model of~\Cref{sec:factored}.  Transitions are labelled with probabilities. State-action pairs are annotated with expected rewards. The discount factor $\gamma = 1$.}
    \label{fig:state-diagram-pomdp}
\end{figure}

 In a typical RL setting, we evaluate a policy $\policy$ learned from data generated by a \defterm{behavioral} policy $\policy_{\beta}$; see~\Cref{fig:offline-picture}.\footnote{The learned policy is called the estimation policy in~\cite[Ch.5.6]{Sutton1998}, the target policy by~\cite{wan2021learning}, and the evaluation policy by~\cite{bruns2021model}.} In the \defterm{on-policy} setting, the policy to be evaluated is the same as the policy generating the data, so $\policy = \policy_{\beta}$. In the next section we discuss how a learned policy can be evaluated using a dynamic causal model.

\section{Dynamic Decision Networks for POMDPS} \label{sec:dim}

In this section we describe a causal Bayesian network for POMDPS. Following~\cite[Ch.17]{Russell2010} and ~\cite{bib:cooper-yoo,Boutilier1999}, we utilize a {\em dynamic causal Bayesian network}. Dynamic Bayesian networks extend BNs to temporal data. The basic idea is to make a copy $\set{\rv}'$ for the random variables in the BN, to represent successor variables. The dynamic BN is then a BN over the current and successor variables (i.e., over $\set{\rv} \cup \set{\rv'}$), such that there are no edges from the successor variables to the current variables. A dynamic BN satisfies the Markov condition in that the successor variables depend only on variables at a previous time, Adding action and reward variables to a dynamic CBN defines a {\em dynamic influence diagram}, a widely adopted graphical formalism for Markov decision processes~\citep{polich2007interactive}. We follow~\cite{Russell2010} and use the term \defterm{dynamic decision network} (DDN) instead of the term dynamic influence diagram. Their terminology emphasizes a DDN is special kind of CBN, so the concepts and results of~\Cref{sec:bns} apply.


\begin{definition} \label{def:ddn} A \defterm{dynamic decision network} (DDN) $\ddn$ for state variables $\set{\sv}$ comprises the following random variables.
\begin{enumerate}
    \item Current time slice: $\set{\V} = \set{\sv} \cup \{\A\} \cup \{R\} \cup \{B\}$
    \item Next time slice: $\set{\V'} = \set{\sv'} \cup \{\A'\} \cup \{R'\} \cup \{B'\}$.
\end{enumerate}
A DDN $\ddn$ satisfies the following causal assumptions. 
    \begin{enumerate}
    \item There are no edges from $\set{\rv}'$ to $\set{\rv}$.
    \item There are no edges from $\R$ to nodes in $\set{\rv}$, and no edges from $\R'$ to nodes in $\set{\rv}'$.
    \item There are no edges from $\A$ to nodes in $\set{\rv} - \{\R\}$, and no edges from $\A'$ to nodes in $\set{\rv}' - \{\R'\}$.
    \end{enumerate}
\end{definition}

These assumptions state that (1) causal relationships respect the temporal ordering, (2) rewards may causally depend on the current state and action, but not vice versa, (3) actions may causally depend on the current state, but not vice versa, and not on the current reward.~\Cref{fig:pomdp-graphs} illustrates these assumptions using generic causal graphs. The driving example~\Cref{fig:driving} does not satisfy Assumption 3, because the state variable 
``own brake light'' causally depends on the action of braking.~\cite{de2019causal} argue that this is possible when the temporal resolution of events is low enough that ``braking'' and ``brake light'' are assigned the same discrete time index. In that case conditioning on the state signal entails conditioning on an effect of the action, leading to ``causal confusion''~\citep{de2019causal}. We discuss this scenario further under related work. Our main conclusions do not depend on Assumption 3, but we use it to simplify formal arguments. 
We next give an extended example of a DDN for our sports scenario. 

\subsection{Dynamic Decision Network Example} \label{sec:ddn-example}

\begin{figure}
    \centering
    \includegraphics{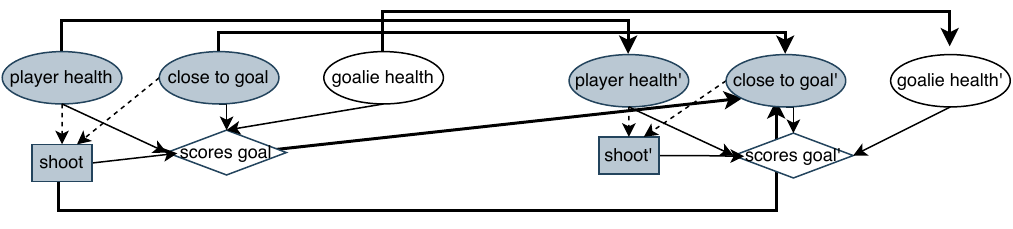}
    \caption{A dynamic decision network graph for our sports scenario. The reward model is indicated by thin lines, and the state transition model by thick lines. Dashed lines represent the agent's policy. (The agent's belief state is not shown, see text).}
    \label{fig:dnn}
\end{figure}

A fully specified influence diagram, or DDN, defines a joint distribution $\mprob{\ddn}(\set{\rv},\set{\rv'})$ over both time slices. Therefore {\em a DDN specifies both an environment MDP and a behavioral policy}. We illustrate how in the example DNN structure of \Cref{fig:dnn}. As with any CBN, the parameters of a DDN are conditional probabilities of the form $P(\rvalue_i|\pa_i)$. It is straightforward to parametrize the DDN graph to match the POMDP model of~\Cref{sec:pomdp}. For example, the {\em reward model} $\mprob{\ddn}(\mathit{SH}|\set{\sv},\A)$ was defined in~\Cref{sec:cbn-example}. 


For the {\em state transition model}, we have
\begin{align*}
 \mprob{\ddn}(\mathit{PH'}=\mathit{PH}) = 1 \mbox{ and }  \mprob{\ddn}(\mathit{GH'} = \mathit{GH}) = 1\\
  \mprob{\ddn}(\mathit{CG'} = 0| \mathit{SH}, \mathit{CG}) =
    \begin{cases}
			1, & \text{if $\mathit{CG}=0$ or $\mathit{SH} = 1$}\\
            1/2, & \text{if $\mathit{CG}=1$ and $\mathit{SH} = 0$}
		 \end{cases}
\end{align*}

The \defterm{network  policy} $\policy^{\ddn}$  is defined by the conditional distributions $\policy^{\ddn}(\A|\PA_{\A})$ and $\policy^{\ddn}(\A'|\PA_{\A'})$. In our example, we have 

\begin{align*}
  \mprob{\ddn}(\mathit{SH} = 1| \CG,\PH) =
    \begin{cases}
			1, & \text{if $\mathit{CG}=1$ and $\PH = 1$}\\
            0, & \text{otherwise}
		 \end{cases}
\end{align*}

Throughout the examples in this paper, but not in our general theory, we assume that the network policy is Markovian in that it depends only on the current state variables, but not the agent's current beliefs.
Our theorems and analysis address the general case of non-Markovian policies that can depend on the agent's entire observation history. We adopt the policy Markov assumption in our examples for simplicity: A belief state is a continuous object, so writing out an explicit interpretable mapping from a belief state to a decision requires too much detail for illustrative purposes.~\cite[Ch.17.4]{Russell2010} provide a worked out example. Models of posterior beliefs for decision-making have been developed in the literature~\citep{Hausknecht2015,Liu2020PlayerEmbedding}, but they are usually based on deep learning and not straightforward to interpret. 


\subsection{Action Sufficiency and Policy Executability}

A DDN represents the agent's current model of the transition data, which may or may not be accurate. We say that a DDN $\ddn$ \defterm{matches an environment model} $P_{\E}$ if the network reward/state transition/initial model agree with $P_{\E}$. For example, matching requires that $\mprob{\ddn}(\R|\set{\sv}_0))=P_{\E}(\R|\set{\sv}_0)$ for every reward value $\R$ and initial state value $\set{\sv}_0$. 

Given a set of observable DDN variables $\set{\ov}$ and $\set{\ov'}$, 
a DDN policy $\policy^{\ddn}$ \defterm{matches a behavioral policy} $\policy_{\beta}: \set{\sv} \rightarrow \Delta(\A)$ if $\policy(\A|\set{\sv}) = \mprob{\ddn}(\A|\set{\sv}\cap \PA_{\A})$ and $\policy(\A'|\set{\sv'}) = \mprob{\ddn}(\A'|\set{\sv'}\cap \PA_{\A'})$. Thus matching requires that, given the parents of the action variable, other state variables are independent of the action.


Intuitively, a network policy is executable if all causes of the actions are observable, which means that the network is action sufficient. Extending the definition of action sufficiency from ~\Cref{lemma:parent-condition} to DDNs, we say that a set $\set{\ov}$ of observable state variables is \defterm{action sufficient} in a DDN $\ddn$ if it includes all causes of decisions other than the agent's beliefs; that is, $\pa_{\A} \subseteq \set{\ov}^{\ddn} \cup \{B\}$ and  $\pa_{\A'} \subseteq \set{\ov'}^{\ddn} \cup \{B'\}$. 
The next observation states that a DDN satisfies action sufficiency if and only if actions are independent of latent variables. We also require a minor technical definition that rules out redundant parents: We say that a CBN is {\em action-minimal} if for every parent $\xv$ of action variable $\A$ and every set $\set{U}$ of variables disjoint from $\A$ and $\xv$, we have $P(\A|\xv,\set{U})\neq P(\A|\set{U})$. That is, there is no variable set $\set{U}$ such that conditioning on $\set{U}$ makes $\xv$ independent of its child $\A$. Local minimality is entailed by the well-known stability/faithfulness conditions~\cite[Ch.2.4]{Pearl2000}. 

\begin{observation} Let $\ddn$ be a locally minimal dynamic decision network. Then a set $\set{\ov}$ of observable variables is action sufficient in $\ddn$ if and only if the network policy $\policy^{\ddn}$ is executable given $\set{\ov}$. 
\end{observation}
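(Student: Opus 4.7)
The plan is to establish each direction of the equivalence by connecting the graphical criterion (action sufficiency) to the statistical criterion (executability) via the local Markov property of the causal Bayesian network underlying $\ddn$, using action-minimality only for the converse. Throughout, I treat the belief variable $B$ as a deterministic function of the past observation–action history, so that conditioning on $B$ together with $\set{\ov}_{\le t}$ and $\a_{<t}$ is consistent with the definition of executability.

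For the forward direction ($\Rightarrow$), assume $\pa_{\A} \subseteq \set{\ov}^{\ddn} \cup \{B\}$ (and analogously for $\A'$). By the local Markov property for CBNs, $\A$ is conditionally independent of every non-descendant given $\pa_{\A}$. Since the latent state variables $\set{\lv}_{t}$ are non-descendants of $\A_{t}$ (no edges go from the next time slice back to the current one, and the reward is not a parent of $\A$ by Assumption 3 of \Cref{def:ddn}), we conclude
\[
\mprob{\ddn}(\a_{t}\mid \set{\lvalue}_{t},\set{\ovalue}_{\le t},\a_{<t},b_{t}) \;=\; \mprob{\ddn}(\a_{t}\mid \pa_{\A}) \;=\; \mprob{\ddn}(\a_{t}\mid \set{\ovalue}_{\le t},\a_{<t},b_{t}),
\]
which is exactly the executability condition (with the belief state absorbed into the conditioning on past observations and actions). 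The argument for $\A'$ is identical.

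For the converse ($\Leftarrow$), I would argue by contraposition. Suppose action sufficiency fails, so there exists a parent $\xv \in \pa_{\A}$ with $\xv \notin \set{\ov}^{\ddn}\cup\{B\}$; then $\xv$ is a latent state variable in $\set{\lv}_{t}$. Apply action-minimality with $\set{U} = (\set{\ov}_{\le t}\cup\{B_{t}\}\cup\a_{<t})$: by definition of action-minimality,
\[
\mprob{\ddn}(\a_{t}\mid \xv,\set{\ovalue}_{\le t},\a_{<t},b_{t}) \;\neq\; \mprob{\ddn}(\a_{t}\mid \set{\ovalue}_{\le t},\a_{<t},b_{t}).
\]
But executability would imply that conditioning additionally on any latent state variable, in particular on $\xv\in\set{\lv}_{t}$, leaves the action distribution unchanged. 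This contradiction establishes that every parent of $\A$ must lie in $\set{\ov}^{\ddn}\cup\{B\}$, giving action sufficiency. The same argument applies to $\A'$.

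The main obstacle is the bookkeeping around the belief variable $B$: one must justify that treating $B$ as conditioning information is compatible with both the graphical parent-set view (where $B$ is an allowed parent) and the observation-history view used in the definition of executability (where $B$ does not appear explicitly). Since $B_{t}$ is a deterministic summary of $(\set{\ovalue}_{\le t},\a_{<t})$, conditioning on it adds no information beyond the history, so the two conditioning sets induce the same distributions and the equivalence goes through cleanly.
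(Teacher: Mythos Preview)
Your proposal is correct and follows essentially the same approach as the paper: the forward direction uses the local Markov property (action independent of non-descendants given parents, and the contemporaneous state variables are non-descendants of $\A$ by \Cref{def:ddn}(3)), and the converse invokes action-minimality to rule out latent parents. One small slip: in your parenthetical you write ``the reward is not a parent of $\A$ by Assumption~3,'' but the relevant content of Assumption~3 is the other direction---$\A$ has no outgoing edges to contemporaneous state variables (its only possible contemporaneous child is $\R$)---which is what you need to conclude that the latent state variables are non-descendants of $\A$.
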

\begin{proof}
     $(\Rightarrow)$: Suppose that every parent of $\A$ is observed (i.e., $\PA_{\A} \subseteq \set{\ov})$. By~\Cref{def:ddn}(3), the only potential descendant of $\A$, except for successor variables, is the reward variable $\R$. Thus the set of observed variables $\set{\ov}$ contains no descendant of $\A$. 
    By the Markov condition, $\A$ is independent of all non-descendants given the parents of $\A$. So $\A$ is independent of all contemporaneous latent environment variables given $\set{\ov}$, which is the definition of an executable policy in~\Cref{sec:pomdps}. The same argument applies to the successor action variable $\A'$. 
    
    $(\Leftarrow)$: Suppose that $\A$ is independent of the latent variables $\set{\lv}$ given the observed variables $\set{\ov}$. Then action-minimality requires that no latent variable is a parent of $\A$, which is the definition of action-sufficiency.
\end{proof}

The upshot is that executable policies can be represented by an action sufficient DDN, where all parents of the action variable are observable.

If we allow that an action can causally affect state variables (i.e., do not assume~\Cref{def:ddn}(3)), we still have that executability implies action-sufficiency, which suffices for our main argument that in online RL, conditional and causal probabilities coincide. The next section develops this argument formally.

\subsection{Policy Evaluation with a Dynamic Decision Network}

Since a DDN defines an environment process, we can use it to perform {\em model-based evaluation of a policy}, by computing the required reward and transition probabilities from the DDN. The resulting \defterm{DDN Bellman equation} is derived from~\Cref{eq:observe-bellman} as follows. 

\begin{align}
    \Q^{\policy,\ddn}(\langle \set{\ovalue},b \rangle,\a) =  R^{\ddn}(\langle \set{\ovalue},b \rangle,\a) + \gamma \sum_{\set{\ovalue}'} \mprob{\ddn}(\set{\ovalue}'|\langle \set{\ovalue},b \rangle,\a) V^{\policy,\ddn}(\set{\ovalue}',b') \label{eq:observe-ddn} \\
    \V^{\policy,\ddn}(\langle \set{\ovalue},b \rangle) = \sum_{a} \policy(a|\langle \set{\ovalue},b \rangle) \Q^{\policy,\ddn}(\langle \set{\ovalue},b \rangle,a) \nonumber\\
    R^{\ddn}(\langle \set{\ovalue},b \rangle,\a) = \sum_{r} \r \cdot \mprob{\ddn}(\R = \r|\set{\ovalue},b,\a)  \nonumber
\end{align}

Since a DDN defines a joint distribution over both current and successor states, the required conditional probabilities are also specified by the model.~\Cref{fig:state-diagram-pomdp} shows the evaluation of the sports DDN for our standard behavioral policy (\Cref{eq:behavioral}). To illustrate the DDN Bellman equation in the offline model of~\Cref{fig:confound}, note that our standard behavioral policy is not executable in this model. Instead we use~\Cref{eq:observe-ddn} to evaluate the \defterm{marginal} policy $\marginal$ derived from the behavioral policy averaging over latent states. The marginal policy is an important concept in offline policy evaluation~\citep{kausik2024offline,bruns2021model}. The marginal policy can be viewed as a naive form of behavioral cloning where we estimate the agent's action probability from frequencies based on the observation signals. In our example, if the player is close to the goal and we do not know their health status, the probability that they take a shot is 1/2. So we have the following standard marginal policy for our sports example:

\begin{equation} \label{eq:marginal}
    P(\SH =1|\CG) =\begin{cases}
			1/2, & \text{if $\CG = 1$}\\
            0, & \text{otherwise}
		 \end{cases}
\end{equation}

~\Cref{fig:state-diagram-offline} shows the Q and V values for the epistemic state where $\CG=1$. Given the reward and state-transition probabilities derived from the DDN model of~\Cref{sec:ddn-example}, the epistemic state value $V$ satisfies the equation $V = 1/4 + 1/4 V$, so $V = 1/3$. 
This computation is an example of {\em model-based off-policy evaluation}: Our DDN is a joint model of the behavioral agent and their environment that potentially can be constructed from observed transition data. We then use the DDN model of the behavioral policy to evaluate another executable policy, in this case the marginal behavioral policy of~\Cref{eq:marginal}. 

\begin{figure}
    \centering
    \includegraphics{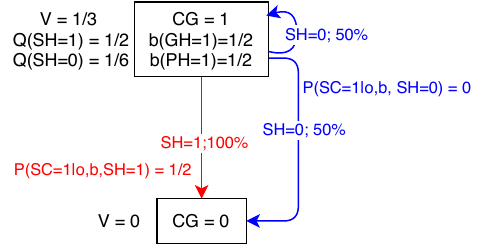}
    \caption{Off-policy policy evaluation based on conditional probabilities in the offline setting of~\Cref{fig:confound}. An epistemic state comprises values for the observable variable $\CG$ and a belief over the values of the latent variables $\GH$ and $\PH$. The policy evaluated is the marginal behavioral policy of~\Cref{eq:marginal}, which chooses to shoot with probability 1/2 if the player is close to the goal. The evaluation uses {\em conditional} reward and transition probabilities derived from the DDN of~\Cref{sec:ddn-example}. The diagram shows the V value and Q action values for the epistemic state where the agent is observed to be close to the goal, and their belief is uniform over the latent variables. Transitions are labelled with probabilities. State-action pairs are annotated with expected rewards.}
    \label{fig:state-diagram-offline}
\end{figure}

\section{Causal Effects and Online Policy Evaluation} \label{sec:intervene}

{\em Interventional policy evaluation} views as a policy as selecting an intervention $\ado(\hat{a})$, not only selecting an action $\a$~\citep{wang2021provably,zhang2020designing}). The fundamental question of interventional policy evaluation in RL is when interventional reward-transition probabilities and Q-values can be inferred from conditional probabilities. In this section we prove that {\em they are equivalent for any action sufficient dynamic influence model.} In online RL, the agent is aware of the causes of their actions, which means that the data generating process follows a model that is causally sufficient for their actions. We argue this point in detail in the following~\Cref{sec:obs-equiv}. Our overall conclusion is that in online RL, observational and interventional probabilities are equivalent. Our first proposition considers the reward and transition models.

\begin{proposition} \label{prop:online-reward} Suppose that an observation signal $\set{\ov}$ is action sufficient in a dynamic decision network $\ddn$. Then $\mprob{\ddn}(\R|\set{\ov},\hat{\A}) = \mprob{\ddn}(\R|\set{\ov},\ado(\hat{\A}))$ and $\mprob{\ddn}(\set{\sv}'|\set{\ov},\hat{\A}) = \mprob{\ddn}(\set{\sv}'|\set{\ov},\ado(\hat{\A}))$. \end{proposition}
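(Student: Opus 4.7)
The plan is to derive Proposition~\ref{prop:online-reward} as a direct application of Lemma~\ref{lemma:observe-condition}, since a DDN is itself a causal Bayesian network with additional temporal and decision-theoretic structure. The substantive causal content is already contained in that lemma; what remains is to check that its hypotheses are met and instantiate its conclusion with the right outcome variables.

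First I would verify the disjointness and non-descendance conditions required by the lemma. By Definition~\ref{def:ddn}(1), the current-slice observation signal $\set{\ov}$ precedes $\A$ temporally and is therefore disjoint from $\A$. By Definition~\ref{def:ddn}(3), the only child of $\A$ in the current time slice is $\R$, so $\set{\ov}$ (which excludes $\R$ since $\R$ is the prediction target) contains no descendant of $\A$. The successor variables $\set{\sv}'$ are disjoint from both $\set{\ov}$ and $\A$ by Definition~\ref{def:ddn}(1). Thus $\R$, $\set{\sv}'$, $\A$, and $\set{\ov}$ form the disjoint sets required in the statement of Lemma~\ref{lemma:observe-condition}.

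Second, I would invoke the action sufficiency hypothesis. Since $\set{\ov}$ is action sufficient in $\ddn$, every non-belief parent of $\A$ lies in $\set{\ov}$. Assuming the proposition's conditioning set $\set{\ov}$ is understood to contain the belief-state variable $B$ (which is by definition observable to the agent, and which in our Markovian-policy examples is not even a parent of $\A$), the CBN-level action sufficiency hypothesis $\PA_\A\subseteq\set{\ov}$ of Lemma~\ref{lemma:observe-condition} is satisfied. Applying that lemma once with $\set{\yv}=\R$ yields the reward equality, and once with $\set{\yv}=\set{\sv}'=\set{\ov}'\cup\set{\lv}'$ yields the transition equality.

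The main obstacle I anticipate is bookkeeping rather than mathematical depth: one must be careful about how the belief variable $B$ is treated in the ``observed'' set, and must keep track of which time slice each variable belongs to so as to invoke the no-descendance part of the hypothesis correctly. A cleaner alternative, should the belief-state bookkeeping prove awkward, is to unfold the proof directly from the truncation semantics (Equation~\ref{eq:truncate}), mirroring the derivation given in Appendix~\ref{sec:observe-proof} for Lemma~\ref{lemma:parent-condition}: expand both sides using the product formula for $\mprob{\ddn}$ and $\mprob{\ddn}_{\ado(\hat\A)}$, observe that all factors other than $P(\A\mid\PA_\A)$ coincide, and use $\PA_\A\subseteq\set{\ov}$ to show that this single differing factor cancels in the conditional ratio.
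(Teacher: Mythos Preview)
Your proposal is correct and matches the paper's own proof: the paper writes simply ``Follows immediately from Lemma~\ref{lemma:observe-condition} (with $\set{\ov}\cup\{B\}$ in place of $\set{\ov}$),'' which is exactly the instantiation you describe, including the belief-variable bookkeeping you anticipated. Your additional verification of the disjointness and non-descendance conditions from Definition~\ref{def:ddn} is more explicit than the paper's one-line proof but follows the same route.
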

\begin{proof}
   Follows immediately from~\Cref{lemma:observe-condition} (with $\set{\ov} \cup \{B\}$ in place of $\set{\ov}$). 
\end{proof}

\Cref{tab:probs} illustrates how conditional and interventional reward probabilities are the same for the action sufficient model of~\Cref{fig:first} and different from the confounded model of~\Cref{fig:confound}. 

The next proposition asserts that for action sufficient models, both conditional and interventional value and action value functions are the same. The \defterm{Bellman equation for the interventional Q-function} is obtained by replacing in the observational Bellman equation conditioning on an action $\a$ by conditioning on the intervention $\ado(\hat{a})$ (cf. \cite{wang2021provably,zhang2020designing}). For instance, the interventional version of the DDN~\Cref{eq:observe-ddn} is as follows:

\begin{align}
    \Q^{\policy,\ddn}(\langle \set{\ovalue},b \rangle,\ado(\hat{a})) =  R^{\ddn}(\langle \set{\ovalue},b \rangle,\ado(\hat{a})) + \gamma \sum_{\set{\ovalue}'} \mprob{\ddn}(\set{\ovalue}'|\langle \set{\ovalue},b \rangle,\ado(\hat{a})) V^{\policy,\ddn}(\set{\ovalue}',b') \label{eq:intervene-ddn} \\
    \V^{\policy,\ddn}(\langle \set{\ovalue},b \rangle) = \sum_{a} \policy(a|\langle \set{\ovalue},b \rangle) \Q^{\policy,\ddn}(\langle \set{\ovalue},b \rangle,\ado(\hat{a})) \nonumber\\
     R^{\ddn}(\langle \set{\ovalue},b \rangle,\ado(\hat{a})) = E_{\set{\lvalue}\sim b(\set{\lvalue}|\ado(\hat{a}))} \sum_{r} \r \cdot \mprob{\ddn}(\R = \r|\set{\lvalue},\set{\ovalue},\ado(\hat{a})) \nonumber
\end{align}



\begin{proposition} \label{prop:online-value} Suppose that set of variables $\set{\ov}$ is action sufficient in a dynamic decision network $\ddn$. Then for any policy $\policy$ with observation signal $\set{\ov}$, the observational $Q$-value equals the interventional $Q$-value for every epistemic state and action:
\begin{equation*}
 \Q^{\policy,\ddn}(\langle \set{\ovalue},b \rangle,\hat{a}) =  \Q^{\policy,\ddn}(\langle \set{\ovalue},b \rangle,\ado(\hat{a})). 
\end{equation*}
\end{proposition}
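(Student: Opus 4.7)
The plan is to show that the observational and interventional Bellman equations coincide term-by-term whenever $\set{\ov}$ is action sufficient, and then conclude that their unique fixed-point solutions agree. The argument splits naturally into three ingredients: equality of the immediate rewards, equality of the one-step observation transitions, and equality of the successor belief state $b'$.

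First I would show that the reward terms match, i.e.\ $R^{\ddn}(\langle \set{\ovalue},b\rangle,\hat{a}) = R^{\ddn}(\langle \set{\ovalue},b\rangle,\ado(\hat{a}))$. By~\Cref{def:ddn}(1), the current latent variables $\set{\lv}$ temporally precede $\A$, so they are non-descendants of $\A$; removing the incoming edges to $\A$ in the truncated network therefore does not alter the joint over $\set{\lv}$ conditional on the observed evidence. Hence $b(\set{\lvalue}\mid\ado(\hat{a})) = b(\set{\lvalue})$, and the interventional reward expectation reduces to an expectation against the same belief $b$. Applying~\Cref{prop:online-reward} inside this expectation gives $\mprob{\ddn}(\R\mid\set{\lvalue},\set{\ovalue},\ado(\hat{a})) = \mprob{\ddn}(\R\mid\set{\lvalue},\set{\ovalue},\hat{a})$ once we note that $\set{\ov}\cup\set{\lv}$ is trivially action sufficient and the latent-augmented version of~\Cref{lemma:observe-condition} applies.

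Second, the one-step transition term: by~\Cref{prop:online-reward} applied to the successor state $\set{\sv}'$, we have $\mprob{\ddn}(\set{\sv}'\mid\set{\ov},\hat{a})=\mprob{\ddn}(\set{\sv}'\mid\set{\ov},\ado(\hat{a}))$, and marginalizing over the latent component $\set{\lv}'$ yields $\mprob{\ddn}(\set{\ovalue}'\mid\set{\ov},\hat{a})=\mprob{\ddn}(\set{\ovalue}'\mid\set{\ov},\ado(\hat{a}))$. Third, I would argue that the successor belief $b'(\set{\lvalue}')$ is the same whether we condition on $\hat{a}$ or on $\ado(\hat{a})$: the belief update~\Cref{eq:update-b'} is built from the observation and latent-transition conditionals, and under action sufficiency each of these equals its interventional counterpart by the same application of~\Cref{lemma:observe-condition}. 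Thus the $b'$ appearing in the two Bellman equations is literally the same object.

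With the rewards, observation transitions, and successor belief states all identical, the recursions~\Cref{eq:observe-ddn} and~\Cref{eq:intervene-ddn} become the same equation in the same unknown function. For discounted infinite-horizon $\gamma<1$ the Bellman operator is a contraction with a unique fixed point, and for finite horizons the two $Q$-functions agree by backward induction in the horizon; either way $\Q^{\policy,\ddn}(\langle\set{\ovalue},b\rangle,\hat{a})=\Q^{\policy,\ddn}(\langle\set{\ovalue},b\rangle,\ado(\hat{a}))$. The main technical obstacle I expect is the belief-state step: one must be careful that intervention on $\A$ at the current time slice affects neither the current belief (because $\set{\lv}$ temporally precedes $\A$) nor the belief update rule (because action sufficiency kills the only way an unobserved cause could route information from $\A$ back into $\set{\lv}'$ other than via the observable parent set). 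Once that is nailed down, the remaining steps are immediate applications of~\Cref{prop:online-reward}.
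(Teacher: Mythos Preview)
Your proposal is correct and follows essentially the same route as the paper: invoke \Cref{prop:online-reward} to show that observational and interventional rewards and transitions coincide, then conclude that the recursively defined $Q$-functions agree. The paper's own proof is terser---it does not separately verify that the successor belief $b'$ is unchanged under intervention nor invoke a contraction/backward-induction argument, simply asserting that ``since reward and transition probabilities are the same, so are the value functions''---so your version is strictly more careful, but the underlying idea is identical.
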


\begin{proof}
The basic insight is that conditional and interventional probabilities agree on rewards and transitions by~\Cref{prop:online-reward}.  The resulting value functions then agree as well because they are defined recursively by reward and transition probabilities. Formally, consider $$E_{\set{\lvalue}\sim b(\set{\lvalue})} R(\set{\lvalue},\set{\ovalue},\ado(\hat{a})) = \sum_{\r} \mprob{\ddn}(\r|\set{\ovalue},\ado(\hat{a})) = \sum_{\r} \mprob{\ddn}(\r|\set{\ovalue},\hat{a}) = E_{\set{\lvalue}\sim b(\set{\lvalue})} R(\set{\lvalue},\set{\ovalue},\hat{a})$$ 

where the penultimate equality follows from~\Cref{prop:online-reward}. The same proposition implies that the state transition probabilities are the same for conditional and international probabilities. Since reward and transition probabilities are the same, so are the value functions. \end{proof}

Although~\Cref{prop:online-value} is stated in terms of a causal model, it does not assume that the learning agent is given a true causal model of the behavioral policy and the environmental process. Rather, the import is that as long as the learning agent's observation signal includes the causes of the behavioral agent's actions, conditional probabilities inferred from the action data are equivalent to interventional probabilities, no matter what the true dynamic causal model is. 

\paragraph{Examples.} To illustrate the proposition for the standard behavioral policy $\policy$, we have from ~\Cref{fig:state-diagram-pomdp} that $\Q^{\policy,\ddn}(\CG=1,\PH=1,b, \SH=1) = 1/2$; we leave it to the reader to verify that also $\Q^{\policy,\ddn}(\CG=1,\PH=1,b, \ado(\SH=1))=1/2$. 

For the marginal behavioral policy $\marginal$ in the offline model,~\Cref{fig:intervene-diagram-offline} shows the Q and V values for the epistemic state where $\CG=1$. Given the reward and state-transition probabilities derived from the DDN model of~\Cref{sec:ddn-example}, the epistemic state value $V$ satisfies the equation $V = 1/2 \cdot 1/4 + 1/4 V$, so $V = 1/6$. For the interventional action values, we have

$$\Q^{\ddn}(\CG=1,b,\ado(\SH=1)) = 1/4.$$
In contrast, from ~\Cref{fig:state-diagram-offline} we have the conditional action value $$\Q^{\ddn}(\CG=1,b,\SH=1) = 1/2.$$

Intuitively, the reasoning for the difference is as follows. If we {\em observe} that a player is close to the goal and takes a shot, the DDN model entails that they are healthy. 
The value of a state where the player is healthy, close to the goal, and shoots is 1/2, as shown in~\Cref{fig:state-diagram}. However, if we {\em intervene} to make the player take a shot, the association between player health and shooting is broken, and the updated belief that the player is healthy remains 1/2 in the DDN model. Since the player has a chance of scoring only if they are healthy, their scoring chance is $1/2 \cdot 1/2 = 1/4.$

\begin{figure}
    \centering
    \includegraphics{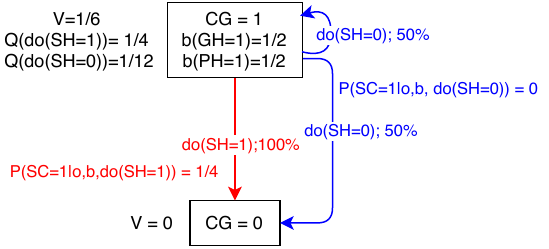}
    \caption{Off-policy policy evaluation based on interventional probabilities in the offline setting of~\Cref{fig:confound}. The policy evaluated is the marginal behavioral policy of~\Cref{eq:marginal}, which chooses to shoot with probability 1/2 if the player is close to the goal. The evaluation uses {\em interventional} reward and transition probabilities derived from the offline DDN. The diagram shows the V value and Q action values for the epistemic state where the agent is observed to be close to the goal, and their belief is uniform over the latent variables.}
    \label{fig:intervene-diagram-offline}
\end{figure}

\section{Online Reinforcement Learning, Observation-equivalence, and Action Sufficiency} \label{sec:obs-equiv}

We complete our analysis by examining which RL learning settings can be expected to satisfy action sufficiency. The main condition we consider is online learning. Our chain of argument can be summarized as follows; the remainder of this section defines the relevant concepts and unpacks the steps. 

\begin{quote}
    Online Learning $\Rightarrow$ Observation-equivalence  $\Rightarrow$ Action Sufficiency  \\ $\Rightarrow$ Interventional Probabilities = Conditional Probabilities
\end{quote}




\paragraph{Online Learning $\Rightarrow$ Observation-equivalence.} 

The {\em behavioral policy}  $\policy_{\beta}$ interacts with the environment to generate transition data, whereas the {\em learned policy} $\policy$ is constructed from the data to maximize the agent's return. In a belief or epistemic MDP,  we can represent each as a function mapping an observation signal and belief over latent states to a distribution over actions:

\begin{align*} \label{eq:policy}
\policy: \set{\ov}^{\policy} \times B^{\policy} \rightarrow \Delta(\A) \\
\policy_{\beta}: \set{\ov}^{\beta} \times B^{\beta} \rightarrow \Delta(\A)
\end{align*} 


Alternatively, we can view a policy as mapping a sequence of observations and actions to a distribution over the current action, as in Russell and Norvig's {\em agent function} ~\cite[Ch.2.1]{Russell2010}. In either case, the {\em space of policies that are executable for an agent depends on their observation signal.} In causal terms, the agent's observation signal causes their decisions. An important insight for causal modelling is therefore that the observation signals for the behavior and the learned policy can be different~\citep{Zhang2020}. We say that the behavioral and learned policy are \textbf{observation-equivalent} if they share the same observation space, that is, $\set{\ov}^{\policy} = \set{\ov}^{\beta}$. 





In the sports example of~\Cref{fig:confounded}, the behavior policy is executed by the athlete, who is influenced by their health. Their observation signal comprises the set $\set{\ov}^{\beta} = \{\mathit{Player Health, Close to Goal}\}$. The learned policy of the coach is based on the publicly available match data, which includes player locations but not player health. Their observation signal comprises the set $\set{\ov}^{\policy} = \{\mathit{Close to Goal}\}$. 
~\cite{Zhang2020} illustrate the concept of different observation spaces in a self-driving car scenario similar to that illustrated in~\Cref{fig:driving}. In the online scenario, the agent learns by driving themselves; their observation signal includes the tail light of the car in front of them. In the offline scenario, the agent learns from a dataset collected by drone surveillance, which does not include the front tail lights.


In the {\em online setting}, the learning agent executes a behavior policy to learn about the environment {\em from their own experience}. The observation signal that drives the behavior policy is therefore the same as the observation signal, so we have observational equivalence and $\set{\ov}^{\policy} = \set{\ov}^{\beta}$. 
The causes of the behavior policy actions are included in the $\set{\ov}^{\policy}$, and are therefore accessible to the learning agent in online learning. Note that this conclusion applies to both on-policy and off-policy learning (cf.~\Cref{fig:online-picture}). For a simple example, suppose that the agent uses an $\epsilon$-greedy policy for exploration, where the behavioral policy $\policy_{\beta}$ chooses a random action with probability $\epsilon$ and follows the current policy $\policy$ with probability $1-\epsilon$. While the $\epsilon$-greedy policy $\policy_{\beta}$ is not the same as the current policy $\policy$, both are based on the same observations.  

\paragraph{Observation-equivalence $\Rightarrow$ Action Sufficiency}

Observation-equivalence implies that every cause of the observed actions is observed by the learning agent. Therefore every common cause of the actions and another variable, such as reward or the next state, is observed by the learning agent. Actions and rewards are therefore not confounded by a latent variable, and interventional probabilities can be inferred from causal probabilities, as shown  in~\Cref{sec:intervene}. 

The online setting is sufficient but not necessary for observation-equivalence. Other sufficient conditions are on-policy learning, where $\policy_{\beta} = \policy$, and complete observability, as discussed in~\Cref{sec:intro}.
In sum, the sufficient conditions for observation equivalence we have discussed are as follows:

\begin{quote}
Online Learning/Complete Observability/On-policy Learning $\Rightarrow$ Observation-equivalence \\
\end{quote}


\section{Structural Causal Models and Counterfactuals} \label{sec:counter}


\begin{table}[htb]
\centering
\caption{Two counterfactual probabilities relevant to RL. The random variable $R'$ denotes the actual reward received, whereas $\R'_{\hat{\A}}$ denotes the potential reward following the decision to perform action $\hat{\A}$. \label{table:reward}}
\begin{tabular}{|c|p{5cm}|l|}
\hline
Notation                              & Reading                             & Query Type \\ \hline
\begin{tabular}{l}
$P(\R'_{\hat{\A}}|\set{\S},\A)$ \mbox{or}  \\
$P(\R'|\set{\S},\A,\ado(\hat{\A})$ 
\end{tabular}   & What is the reward if we were to choose $\hat{A}$ instead of $\A$?  & what-if    \\ \hline
$P(\R'_{\hat{\A}}|\R',\set{S},\A))$ & If we receive reward $R'$ after choosing action $\A$, what is the reward after choosing $\hat{A}$ instead? & hindsight  \\ \hline
\end{tabular}

\end{table}

We return to counterfactuals, the most advanced kind of causal reasoning according to Pearl.~\Cref{table:reward} gives two generic examples of counterfactual reward queries relevant to RL. In a {\em hindsight query}, we observe an outcome, and ask what the {\em potential outcome} is from an alternative course of action. In a {\em what-if query}, we do not observe the actual outcome, and ask how an alternative course of action might change what outcome is likely. What-if queries are closely related to policy optimization: Suppose that an agent's current policy recommends action $a$ in state $s$ (i.e., $\policy(s)=a$). To ascertain whether an alternative action $a'$ would improve the agent's current policy, we can ask what the reward distribution would be if the agent chose $a'$ instead in state $s$. We show that under action sufficiency, what-if counterfactuals are equivalent to conditional probabilities. Thus in the online setting, what-if queries can be answered using conditional probabilities, as is done in traditional RL.

While causal theory has focused on hindsight counterfactuals for evaluating potential outcomes as the most complex type of counterfactual query~\citep{Pearl2000}, hindsight counterfactuals have not played a major role in reinforcement learning. One reason for this is that traditional reinforcement learning is based on conditional probabilities, and hindsight counterfactuals cannot be reduced to conditional probabilities, even in online learning, as we show in~\Cref{sec:hindsight} below.  
However, the seminal work on hindsight experience replay~\citep{bib:hindsight} demonstrated that information about potential outcomes can be leveraged to improve RL. Recent work has proposed augmenting transition data with virtual experiences derived from hindsight counterfactuals~\citep{bib:hindsight-augment}. We discuss use cases for hindsight counterfactuals further in our related work~\Cref{sec:related}. 

In this section we introduce the formal semantics of counterfactual reasoning. We show how 
a {\em structural causal model} (SCM) can be used to evaluate observational, interventional, and counterfactual probabilities from a single model. 
SCMs combine a causal graph with local functions that map parent values to child values. They achieve greater expressive power than causal Bayesian networks by including latent 
variables as parents. 
The posterior over latent variables carries over information from observed outcomes to infer potential outcomes. 

\subsection{Structural Causal Models}

Structural causal models (SCMs) combine a causal graph with latent variables and deterministic local functions that map parent values to child values~\citep{Pearl2000}. While the use of latent variables typically makes them less interpretable than a Bayesian network with observable variables only, SCMs have two important advantages as causal models. First, the latent variables support a formal semantics for counterfactuals.
Second, SCMs are compatible with deep learning, in that 
the local functions are essentially decoders in the sense of deep generative models. Deep generative models with latent variables are therefore a powerful architecture for implementing and learning with counterfactual reasoning~\citep{geffner2022deep}. For our purpose of relating causal models and reinforcement learning, SCMs contain two important elements that match standard POMDP theory: unobserved components and a distribution over latent variables, which corresponds to an agent's beliefs in POMDP theory. 
 Our presentation of SCMs follows that of \cite{Pearl2000} and of \cite{scholkopf2021towards}, and highlights connections with other generative models in machine learning. 

 A  \defterm{structural causal model} (SCM) is a pair $\scm = \langle \G,\set{F} \rangle$ meeting the following conditions. 

 \begin{itemize}
    \item $\G$ is a DAG over random variables $\set{\rv}$. Let $\set{\uv}$ be the set of \defterm{source nodes} with indegree 0 in $\G$. We write $\set{\ns{\rv}} = \set{\rv} - \set{\uv}$ for the set of non-source variables. 
    \item $\set{F} = \{f_1,\ldots,f_n\}$ is a set of local functions such that each $f_i$ deterministically maps the parents of non-source variable $\rv_i \in \set{\ns{\rv}}$ to a value of $\rv_{i}$. 
    The local functions are often written in the form of a \defterm{structural equation}:
    \begin{equation} \label{eq:local}
         \rv_{i} = f_i(\pa_i).
    \end{equation}
\end{itemize}

The source variables are also called {\em background variables} or {\em exogeneous variables}. They can be, and often are, latent variables. For example, in a linear structural equation 
\[
 f(\xv,\varepsilon) \equiv \yv = a \xv + b + \varepsilon 
\]
the parent $\xv$ of $\yv$ represents an observed cause and the noise term $\varepsilon$ can be modelled as a latent parent summarizing unobserved causal influences. 
The set of functions $\set{F}$ takes the place of the conditional probability parameters in a causal Bayesian network. 
Given an assignment $\set{\uv} = \set{\uvalue}$ of values to the source variables, we can  compute values for the non-source variables $\set{\ns{\rv}}$ by starting with the source nodes, then assigning values to the children of the source nodes, etc. This recursive evaluation procedure defines a \defterm{solution function} 

\begin{equation}
    \label{eq:evaluate}
    F^{\scm}(\set{\uvalue}) = \set{\ns{\rvalue}}
\end{equation}

where $\set{\ns{\rvalue}}$ is an assignment of values to the non-source nodes $\set{\ns{\rv}}$. A \defterm{probabilistic SCM} is a pair $\cm = (\scm,\prior)$ where $\prior$ is a joint \defterm{prior distribution} over the  source variables $\set{\uv}$~\citep[Eq.7.2]{Pearl2000}. The source variables $\set{\uv}$ typically are latent, so we use the POMDP notation $\prior$ rather than the more usual $p(\set{\uv})$ to emphasize the similarity between a POMDP belief and a distribution over source variables. 
Like a causal Bayesian network, a probabilistic SCM $\cm$ defines a \textbf{joint distribution} over the variables $\set{\rv} = \set{\uv} \cup \set{\ns{\rv}}$: 

\begin{align} 
    \mprob{\cm}(\set{\ns{\rv}} = \set{\ns{\rvalue}}, \set{\uv} = \set{\uvalue} ) =  \mprob{\cm}(\set{\ns{\rv}} = \set{\ns{\rvalue}}|\set{\uv} = \set{\uvalue}) \times \prior(\set{\uv} = \set{\uvalue}) \label{eq:cm-observe} \\
    \prior(\set{\uv} = \set{\uvalue}) = \prod_{\uv \in \set{\uv}} \prior(\uv = \uvalue) \label{eq:factor-u} \\
     \mprob{\cm}(\set{\ns{\rv}} = \set{\ns{\rvalue}}|\set{\uv} = \set{\uvalue}) = 
    \begin{cases}
        1, & \text{if $F^{\scm}(\set{\uvalue}) = \set{\ns{\rvalue}}$}\\
            0, & \text{otherwise}
    \end{cases} \label{eq:cm-decode}
\end{align}

~\Cref{eq:factor-u} says that the unconditional prior distribution over source variables factors into individual priors over each source variable, which is true in any Bayesian network.~\Cref{eq:cm-decode} applies the deterministic solution function as a deterministic decoder that maps the source variables to a unique assignment for non-source variables.

\paragraph{Relationship to Causal Bayesian Networks} A well-known result in graphical model theory states that every Bayesian network $\cn$ over
$n$ variables $\set{\rv}$ can be represented by an equivalent SCM over variables $\set{\rv} \cup \{\uv_{i}, i=1,\ldots,n\}$ such that  $P(\rvalue_i|\pa_i) = \sum_{\uvalue_i:f_i(\pa_i,\uvalue_i) = \rvalue_i} \prior(\uvalue_i)$ where $f_i$ is the local SCM function, and $\prior(\uvalue_i)$ is the prior probability assigned to the $\uvalue_i$ value~\citep{druzdzel1993causality}. This parametrization of Bayesian networks associates a scalar latent variable $\uv_i$ with each observed variable $\rv_i$. The corresponding structural equation is $\rv_i = f_i(\pa_i,\uvalue_i)$. The latent scalar $\uv_i$ is often called the {\em error} or {\em noise} term for variable $\rv_i$, with common notations being $\lv_i$ or $\epsilon_i$.  The resulting SCM is equivalent to the Bayesian network $\cn$ in the sense that the joint SCM distribution defined by~\Cref{eq:cm-observe} is equivalent to the joint Bayesian network distribution defined by the product formula~\Cref{eq:bn-observe}. 
SCMs and causal Bayesian networks therefore have equivalent expressive power in terms of the joint distributions that they can represent. As ~\Cref{fig:scms} illustrates, the CBNs of~\Cref{fig:cbns} are also SCMs because their conditional probabilities are deterministic.

\begin{figure}
     \centering
     \begin{subfigure}[b]{0.32\textwidth}
         \centering
         \includegraphics[width=\textwidth]{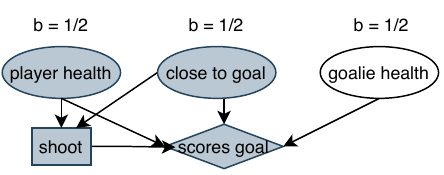}
         \caption{Online View.}
         \label{fig:first-scm}
     \end{subfigure}
      \begin{subfigure}[b]{0.32\textwidth}
         \centering
         \includegraphics[width=\textwidth]{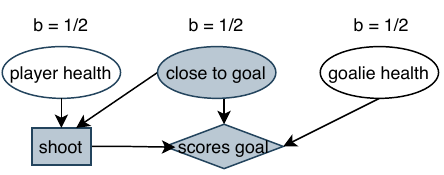}
         \caption{Offline View with confounder.\label{fig:confound-scm}}
     \end{subfigure}
     \begin{subfigure}[b]{0.32\textwidth}
         \centering
         \includegraphics[width=\textwidth]{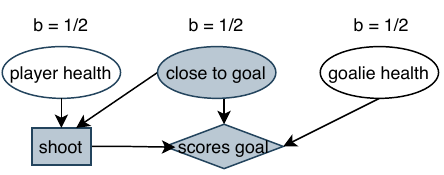}
         \caption{Offline view without confounder.}
         \label{fig:no-confounder}
     \end{subfigure}
     \caption{Structural Causal Models to illustrate what-if counterfactuals in our sports scenario. Number labels specify the prior distribution over source variables. 
     ~\Cref{tab:equations} specifies the deterministic local functions.
     ~\Cref{fig:first-scm}: The SCM version of the online model~\Cref{fig:first}. 
     ~\Cref{fig:confound-scm}: The SCM version of the offline model~\Cref{fig:confound}.~\Cref{fig:no-confounder}: An offline view where Player Health is not observed and does not affect goal scoring chances.
     \label{fig:scms}}
\end{figure}

\paragraph{Relationship to Encoder-Decoder Generative Models} If we restrict latent variables to comprise all and only source variables, then $\set{\ns{\rv}} = \set{\ov}$ is the set of observed variables, and~\Cref{eq:cm-decode} is the decoder model $P(\set{\ov} = \set{\ovalue}|\set{\lv} = \set{\lvalue})$ for generating observations from latent factors $\set{\uv} = \set{\lv}$. The factoring condition~(\ref{eq:factor-u}) then becomes an {\em independent component analysis} condition where observations are generated from independent sources. Similar independence conditions are used in deep generative models, such as the Variational Auto-encoder~\citep{khemakhem2020variational}. While ICA models are excellent density estimators for a distribution over observed variables, they are restrictive for causal modelling. For example, they assume that latent variables can only be causes, not effects, of observed variables. This is not true in many domains; for example, high blood pressure causes heart damage, even if the person with high pressure has no diagnostic tools for observing the heart damage. Fundamentally, the issue is that which factors are observable depend on the perceptual capabilities of an agent, whereas causal relationships among variables pertain to the environment and hold regardless of what an agent can observe. 
Another implication of the $\set{\uv} = \set{\lv}$ restriction is that latent variables are not causes of each other.  Indeed recent causal models include dependencies among latent variables. For example, the CausalVAE approach uses a linear structure equation $f_i$ with a Gaussian noise model for each non-source latent variable $Z_i$~\citep{yang2021causalvae}.~\cite{bib:brehmer} model dependencies among latent variables with neural networks. 



\paragraph{Structural Causal Model Examples} 

~\Cref{tab:equations} shows structural equations for the graphs of~\Cref{fig:first-scm,fig:confound-scm}, that represent the same causal mechanisms with deterministic functions that were described in~\Cref{sec:bns} with conditional probabilities. 
For the SCM of~\Cref{fig:no-confounder}, we use the same equation for $\mathit{SH}$ but replace the scoring equation by
\[ \mathit{SC} = \mathit{CG}  \cdot \mathit{SH} \cdot (1-\mathit{GH})       \]
since in this model Player Health does not affect scoring chances. 
The sharing of structural equations across scenarios illustrates how they represent local independent mechanism that we can expect to be stable across different contexts and domains. This is an important advantage of causal modelling for machine learning~\citep{scholkopf2021towards}.

Together~\Cref{fig:first-scm} and~\Cref{tab:equations} define a probabilistic structural causal model $\cm$. Adding a prior distribution  over the  binary source variables $\mathit{GH}, \mathit{PH}, \mathit{GH}$ specifies a probabilistic SCM. With a uniform distribution over the source variables, the observational joint distribution assigns

\begin{equation} \label{eq:scm-example-obs}
    \mprob{\cm}(1=\mathit{PH} = \mathit{CG}  = \mathit{SH} = \mathit{SC}, \mathit{GH} = 0) = 1/2 \cdot 1/2 \cdot 1 \cdot 1 \cdot 1/2 = 1/8.
\end{equation}

which agrees with the corresponding result for the causal Bayesian network from~\Cref{eq:cbn-example-obs}.




\begin{table}[htb]
\centering
\caption{A set of structural equations for the causal graphs of ~\Cref{fig:first-scm,fig:confound-scm}.}
\label{tab:equations}
\begin{tabular}{|l|l|}
\hline
Variable                      & Equation                                                                            
\\ \hline
$\mathit{Shoot (SH)}$         & $\mathit{SH} = \mathit{PH} \cdot \mathit{CG}$                                        \\ \hline
$\mathit{Scores (SC)}$        & $\mathit{SC} = \mathit{PH} \cdot \mathit{CG} \cdot \mathit{SH} \cdot (1-\mathit{GH})$ \\ \hline
\end{tabular}
\end{table}

\paragraph{Interventional Distributions}


Similar to causal Bayesian networks, for a structural causal model $\cm = (\scm,\prior)$ the distribution that results from intervening on a variable $\A$  is computed by removing the parents of $\A$ from the graph and replacing the local function $f_{\A}$ by a constant function. 
%
%
The \defterm{submodel} $\scm_{\hat{a}} = \langle \G_{\A},\set{F}_{\hat{a}} \rangle$ is the causal model where $\G_{\A}$ contains all edges in $\G$ except those pointing into variable $\A$, and $\set{F}_{x} = \{f_{i}: X_{i} \neq \X\} \cup \{X=\hat{a}\}$. Here $\{f_{i}: X_{i} \neq \A\}$ is the set of all local functions for unmanipulated variables, and $\A=\hat{a}$ is  the constant function that assigns variable $\A$ its manipulated value.
Similarly let $\prior_{\hat{a}}$ be the prior distribution over source node variables that assigns probability 1 to $\hat{a}$ and agrees with $\prior$ on all other variables. Formally, $\prior_{\hat{a}}(\A=\hat{a}) = 1$, and $\prior_{\hat{a}}(\uv=\uvalue) = \prior(\uv=\uvalue)$ for $\uv \neq \A$. We compute the intervention distribution as the joint probability in the truncated submodel:

\begin{equation} \label{eq:intervention}
 \mprob{(\scm,\prior)}_{\ado(\A = \hat{a})}(\set{\rv} = \set{\rvalue}) = \mprob{(\scm_{\hat{a}},\prior_{\hat{a}})}_{}(\set{\rv} = \set{\rvalue})
\end{equation}


~\Cref{fig:compute-confound} below illustrates the truncation semantics. We next show how the interventional distribution can be used to define a formal semantics for counterfactuals. 

\subsection{Causal Effects and Counterfactual Probabilities} 



A \defterm{counterfactual probability} 
$P(\set{\yv}_{\hat{a}}=\set{\yvalue'}|\set{\xv} = \set{\xvalue}, \A=\a,\set{\yv}=\set{\yvalue})$ can be read as follows: ``Given that we observed action $\A=\a$, and state variables $\set{\xv} = \set{\xvalue}$, followed by outcome $\set{\yv}=\set{\yvalue}$, what is as the probability of obtaining an alternative outcome $\set{\yvalue'}$, if we were to instead select the action $\hat{a}$ as an intervention?'' Here $\set{\yv}_{\hat{a}}$ is a list of \defterm{potential outcome} random variables, distinct from the actual outcomes $\set{\yv}$, and $\set{\xv},\A, \set{\yv}$ are disjoint. 
%
For a given probabilistic SCM $\cm = (\scm,\prior)$, we can compute the counterfactual probability 
as follows~\cite[Th.7.1.7]{Pearl2000}.

\begin{description}
    \item[Abduction/Posterior Update] Condition on the observations $\set{\xv} = \set{\xvalue},\A=\a,\set{\yv}=\set{\yvalue}$ to compute a source variable posterior $$\prior' \equiv \prior(\set{\uv}|\set{\xv}=\set{\xvalue}, \A=\a,\set{\yv}=\set{\yvalue}).$$ 
    \item[Intervention] Apply the intervention $\ado(\A=\hat{\a})$ to compute the submodel $\scm_{\hat{a}}$ and the SCM $\cm' = (\scm_{\hat{a}},\prior'_{\hat{a}})$. 
    \item[Prediction] 
    Return the conditional probability $P(\set{\yv}=\set{\yvalue'}|\set{\xv}=\set{\xvalue})$ computed in the updated SCM: 
\begin{align} \label{eq:scm-counterfactual}
    \mprob{\cm}(\set{\yv}_{\hat{a}}=\set{\yvalue'}|\set{\xv} = \set{\xvalue}, \A=\a,\set{\yv}=\set{\yvalue}) = \mprob{\cm'}(\set{\yv}=\set{\yvalue}'|\set{\xv} = \set{\xvalue})
\end{align}
\end{description}

A posterior update is often called abduction in causal modelling, and in the field of knowledge representation in general~\citep{poole1993probabilistic}. 
Through their posterior, the source variables carry information from the observed configuration $\set{\xv}=\set{\xvalue}, \A=\a,\set{\yv}=\y$ to the counterfactual configuration where $\set{\xv}=\set{\xvalue}, \A=\hat{\a},\set{\yv}=\set{\yvalue'}$. Counterfactual probabilities form a natural hierarchy that generalizes interventional probabilities as follows.

\begin{itemize}
    \item We refer to the most general counterfactual of the form $P(\set{\yv}_{\hat{a}}|\set{\xv}, \A,\set{\yv})$ as a {\bf hindsight} counterfactual query because it specifies the actual outcomes $\set{\yv}$.
    \item  If the actual outcomes $\set{\yv}$ are {\em not} included in the evidence, we have a \defterm{what-if} counterfactual query $P(\set{\yv}_{\hat{a}}|\set{\xv}, \A)$  that asks what the likely outcome is after deviating from the actual choice $\A$. For what-if queries, we use the causal effect notation $P(\set{\yv}|\set{\xv}, \A,\ado(\A=\hat{\a})) \equiv  P(\set{\yv}_{\hat{a}}|\set{\xv}, \A).$ 
    \item If neither an observed outcome nor an observed action are specified, a counterfactual probability reduces to an interventional probability $P(\set{\yv}|\set{\xv}, \ado(\A=\hat{\a}))$.
    \item If neither an observed outcome nor an intervention are specified, a counterfactual probability reduces to a conditional probability $P(\set{\yv}|\set{\xv},\A)$. 
\end{itemize}

We refer to probabilities of the first three types that involve interventions as \defterm{causal probabilities}. The update-intervention-prediction procedure can be used for computing any causal probability. 

\subsection{Examples for What-if Counterfactual Probabilities.}

We illustrate the computation of causal probabilities for the causal models of~\Cref{fig:scms}. In the online model~\Cref{fig:first-scm}, the scoring probability due to an intervention not to shoot is unaffected by observing the 
agent shoot, and therefore equal to just the interventional scoring probability, which is 0 (see~\Cref{sec:cbn-example}).  Formally, we have

\[
P(\SC=1|\CG=1,\PH=1,SH=1,\ado(\SH=0)) = P(\SC=1|\CG=1,\PH=1,\ado(\SH=0)) = 0.
\]

To verify this equality, note that in the online model, goalie health is the only latent variable, and it is independent of shooting, so the updated posterior is just the prior. 

For the offline models in~\Cref{fig:scms}, 
 our example what-if counterfactual asks for the {\em probability of scoring a goal if the player were to shoot, given that they actually did not shoot.} ~\Cref{tab:scm-queries} compares the corresponding reward probabilities given state and action information for observations, interventions, and counterfactuals. 
~\Cref{fig:compute-confound} 
illustrates the observation-intervention-prediction steps for the offline models. 
%
For the confounded model of~\Cref{fig:confound-scm}, the observational and interventional probabilities agree with our computations for the corresponding causal Bayesian network (see~\Cref{fig:confound-compute}).  Because the player did not shoot, we can infer that they are not healthy (abduction). Since they are not healthy, the what-if probability of scoring is 0 given the scoring model.

In the non-confounded model~\Cref{fig:no-confounder},  
observational, interventional, and counterfactual probabilities are the same because the posterior over the player health variable does not affect the goal scoring probability.~\Cref{fig:compute-non-confound} shows the computation of the what-if counterfactual. We leave the computation of the conditional and interventional probabilities as an exercise. In the next subsection we show that the equivalences of ~\Cref{fig:first-scm,fig:no-confounder} are an instance of a general pattern: {\em under action sufficiency, what-if counterfactuals are equivalent to conditional probabilities.} 

\begin{table}[H]
\centering
\caption{Scoring probabilities for the reward model in the soccer examples of~\Cref{fig:confound-scm,fig:no-confounder}.\label{tab:scm-queries}}
\resizebox{\textwidth}{!}{%
\begin{tabular}{|l|c|c|c|}
\hline
                            & Conditional                                     & Intervention                                         & What-If Counterfactual                                                    \\ \hline Query
                           & $P(\mathit{SC}=1|\mathit{CG}=1,\mathit{SH}=1)$ & $P(\mathit{SC}=1|\mathit{CG}=1,\ado(\mathit{SH})=1))$ & $P(\mathit{SC}=1|\mathit{CG}=1,\mathit{SH}=0,\ado(\mathit{SH})=1))$ \\ 
\hline Model $\cm_{\ref{fig:confound-scm}}$        & 1/2                                             & 1/4                                                  & 0                                                                  \\ \hline
Model $\cm_{\ref{fig:no-confounder}}$        & 1/2                                             & 1/2                                                 & 1/2                                                                  \\ \hline
\end{tabular}
}

\end{table}

\begin{figure}[H]
\begin{minipage}[b]{\textwidth} 
     \centering
     \begin{subfigure}[b]{0.32\textwidth}
         \centering
\includegraphics[width=\textwidth]{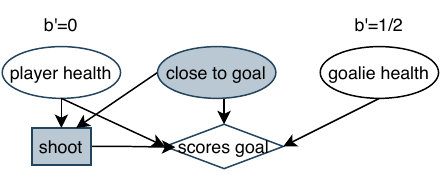}
         \caption{Abduction}
         \label{fig:observe-confound}
     \end{subfigure}
      \begin{subfigure}[b]{0.32\textwidth}
         \centering
         \includegraphics[width=\textwidth]{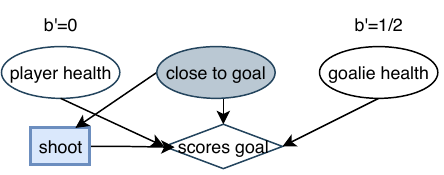}
         \caption{Intervention}
         \label{fig:intervene-confound}
     \end{subfigure}
     \begin{subfigure}[b]{0.32\textwidth}
         \centering
\includegraphics[width=\textwidth]{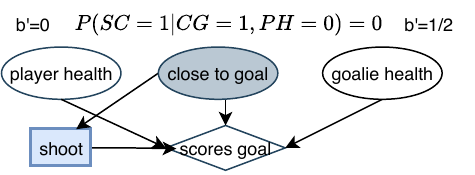}
         \caption{Prediction}
         \label{fig:counterfactual-confound}
     \end{subfigure}
     \caption{Evaluating the what-if counterfactual query $P(\mathit{SC}=1|\mathit{CG}=1,\mathit{SH}=0,\ado(\mathit{SH})=1))$ for the confounded offline model of~\Cref{fig:confound-scm}. Numbers indicate posterior probabilities of latent source variables given the query observations. 
     ~\Cref{fig:observe-confound}, Abduction: The posterior probability of the player being healthy is 0, given that they did not shoot.
     ~\Cref{fig:intervene-confound}, Intervention: The truncated model removes the link between Player Health and shooting and uses the posterior distribution over source variables. 
     ~\Cref{fig:counterfactual-confound}, Prediction: In the truncated model, the scoring probability is 0, given that we have inferred that the player is not healthy.
     \label{fig:compute-confound}}
      \end{minipage}


\begin{minipage}[t]{\textwidth} 
     \centering
     \begin{subfigure}[b]{0.32\textwidth}
         \centering
\includegraphics[width=\textwidth]{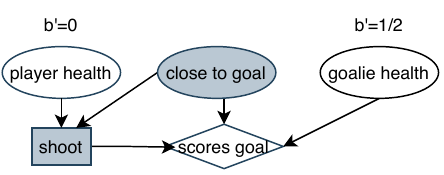}
         \caption{Abduction}
         \label{fig:observe-noconfound}
     \end{subfigure}
      \begin{subfigure}[b]{0.32\textwidth}
         \centering
         \includegraphics[width=\textwidth]{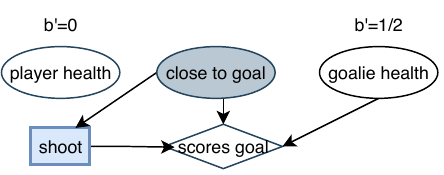}
         \caption{Intervention}
         \label{fig:intervene-noconfound}
     \end{subfigure}
     \begin{subfigure}[b]{0.32\textwidth}
         \centering
\includegraphics[width=\textwidth]{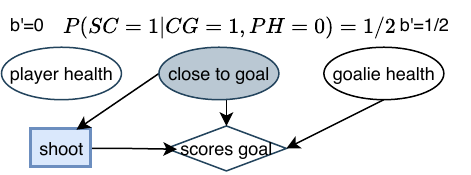}
         \caption{Prediction}
         \label{fig:counterfactual-non-confound}
     \end{subfigure}
     \caption{Evaluating the what-if counterfactual query $P(\mathit{SC}=1|\mathit{CG}=1,\mathit{SH}=0,\ado(\mathit{SH})=1))$ for the unconfounded model of~\Cref{fig:no-confounder}. Numbers indicate posterior probabilities of latent source variables given the query observations. 
     ~\Cref{fig:observe-noconfound}, Abduction: The posterior probability of the player being healthy is 0, given that they did not shoot.
     ~\Cref{fig:intervene-noconfound}, Intervention: The truncated model removes the link between player health and shooting and uses the posterior distribution over source variables. 
     ~\Cref{fig:counterfactual-non-confound}, Prediction: The probability of scoring is independent of the player health, and thus equal to the probability of the goalie not being healthy, which is 1/2.
     \label{fig:compute-non-confound}}
    \end{minipage}
\end{figure}

\subsection{Action Sufficiency and What-if Counterfactuals} \label{sec:causal-suffice}

For example the online model~\Cref{fig:first-scm} is action sufficient, so~\Cref{lemma:scm-action-sufficient} entails that 

In this section we prove an analog of~\Cref{lemma:observe-condition} for SCMs: Under action sufficiency, what-if, interventional, and conditional probabilities are equivalent. 

The condition of action sufficiency introduced for causal Bayesian networks required that all causes of the agent's actions should be observed (\Cref{sec:bns}). This definition needs to be modified for structural causal models, because non-deterministic variables require latent parents to generate variance. For example, in the linear structural equation $\yv = a \xv + b + \varepsilon$, the noise term $\varepsilon$ is a latent variable. For an RL example, consider a probabilistic policy $\policy(\A|\sv)$ where $\sv$ is a completely observable state. In an SCM, such a policy is represented by a structural equation $\A = f(\sv,\uv)$ where $\uv$ is a latent cause that generates the distribution over actions given an observed state. The causal modelling literature therefore utilizes a more general causal sufficiency condition that allows a variable to have a latent cause, but not a {\em shared} latent cause~\citep{Spirtes2000}. The insight is that what matters for causal modelling is not the presence of latent causes, but of latent {\em confounders}. Formally we say that source variable $\uv$ is a noise variable for $\xv$ if $\xv$ is the only child of $\uv$. A set of observed variables $\set{\ov}$ is \defterm{causally sufficient} for variable $\xv$ in a causal SCM graph $\G$ if every latent parent of $\xv$ is a noise variable for $\xv$. 
An SCM $\cm$ is \defterm{action sufficient} if its graph is causally sufficient for the action variable $\A$.\footnote{Our concept of action-sufficiency differs from the notion introduced by~\cite{huang2022action}. For them, a latent state is action sufficient if it is powerful enough to support an optimal policy based on the latent state space. Our concept could be called ``causal action sufficiency'' to disambiguate.} The graph of~\Cref{fig:confound-scm} is not action sufficient because Player Health is a latent common cause of Shooting and Scoring. The graph of~\Cref{fig:no-confounder} is action sufficient because the latent cause Player Health of the shooting action is not a cause of any other variable, hence a noise variable for shooting.
 
The next lemma states that under action sufficiency, what-if counterfactuals reduce to conditional probabilities.

\begin{lemma} \label{lemma:parent-condition-scm}
    Let $\cm$ be a probabilistic SCM and let $\set{\yv},\A,\set{\xv}$ be a disjoint set of random variables such that $\set{\xv}$ includes all parents of $\A$ except for possibly a noise variable $\uv_{\A}$ of $\A$ (i.e., $\set{\xv} \supseteq \PA_{\A} - \uv_{\A}$), and none of the descendants of $\A$. Then for any actions  $\a,\hat{\a}$ we have
    \[
    \mprob{\cm}(\set{\yv}|\set{\xv}=\set{\xvalue},\A=\a,\ado(\A = \hat{\a})) = \mprob{\cm}(\set{\yv}|\set{\xv}=\set{\xvalue},\ado(\A = \hat{\a})) = \mprob{\cm}(\set{\yv}|\set{\xv}=\set{\xvalue},\A = \hat{\a}).
    \]
\end{lemma}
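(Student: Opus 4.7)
My plan is to verify both equalities by applying the abduction--intervention--prediction recipe of~\Cref{eq:scm-counterfactual}. Let $\uv_{\A}$ denote the (possibly absent) noise parent of $\A$ not contained in $\set{\xv}$, and write $\set{\uv}_{-\A}$ for the remaining source variables.

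For the first equality, I would run the abduction step on both sides. The left-hand side conditions on evidence $(\set{\xv}=\set{\xvalue},\A=\a)$; the middle conditions on $\set{\xv}=\set{\xvalue}$ alone. Because $\set{\xv}\supseteq \PA_{\A}\setminus\{\uv_{\A}\}$, once $\set{\xv}=\set{\xvalue}$ is observed the value of $\A$ is a deterministic function $f_{\A}(\set{\xvalue},\uv_{\A})$ of $\uv_{\A}$, so the additional observation $\A=\a$ can only update the posterior of $\uv_{\A}$ and leaves the marginal posterior on $\set{\uv}_{-\A}$ unchanged. After the intervention step, the edge $\uv_{\A}\to\A$ is removed in $\scm_{\hat{\a}}$, and since $\uv_{\A}$ is by assumption a noise variable whose only child is $\A$, it becomes a childless source in the truncated graph. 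Its posterior therefore plays no role in the prediction step, so both computations return the same $\mprob{\cm'}(\set{\yv}\mid\set{\xv}=\set{\xvalue})$.

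For the second equality, I would first observe that in $\cm_{\hat{\a}}$ the variable $\A$ takes value $\hat{\a}$ with probability one, hence $\mprob{\cm_{\hat{\a}}}(\set{\yv}\mid\set{\xv}=\set{\xvalue})=\mprob{\cm_{\hat{\a}}}(\set{\yv}\mid\set{\xv}=\set{\xvalue},\A=\hat{\a})$. It remains to transport this conditional probability back to $\cm$. Using the standard translation between a probabilistic SCM and its equivalent causal Bayesian network parametrization (each $f_i$ together with its noise prior yielding a conditional probability table), I can apply~\Cref{lemma:observe-condition} with the observed set $\set{\xv}\cup\{\uv_{\A}\}$ marginalized over $\uv_{\A}$: because $\set{\xv}$ contains all non-noise parents of $\A$ and no descendants of $\A$, conditioning jointly on $(\set{\xv}=\set{\xvalue},\A=\hat{\a})$ blocks every back-door path, so the conditional distribution of $\set{\yv}$ coincides in $\cm$ and $\cm_{\hat{\a}}$.

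The main obstacle will be the careful bookkeeping around the noise parent $\uv_{\A}$: its posterior genuinely does shift when we add $\A=\a$ to the evidence, and the first equality survives only because $\uv_{\A}$ has no other children and is therefore severed from $\set{\yv}$ by the truncation. This is precisely where the ``noise variable'' clause in the SCM version of action sufficiency is essential; allowing $\uv_{\A}$ to have additional children would create a genuine hindsight dependence of $\set{\yv}$ on $\A=\a$ and falsify the equality between the what-if and the pure interventional probability.
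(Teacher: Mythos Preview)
Your approach is essentially the paper's: establish that the posterior over the source variables other than $\uv_{\A}$ is the same whether or not $\A=\a$ is added to the evidence, then observe that $\uv_{\A}$ is disconnected in the truncated model so its posterior is irrelevant, and finally reduce the second equality to the CBN result of \Cref{lemma:parent-condition}. The paper carries this out by writing down three explicit conditional-independence statements for the source posterior and then an algebraic verification that the two joint probabilities agree; your outline is the same argument in words.

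One step deserves tightening. You write that because $\A=f_{\A}(\set{\xvalue},\uv_{\A})$, ``the additional observation $\A=\a$ can only update the posterior of $\uv_{\A}$ and leaves the marginal posterior on $\set{\uv}_{-\A}$ unchanged.'' That inference is not valid on its own: if $\uv_{\A}$ were conditionally dependent on some $\uv\in\set{\uv}_{-\A}$ given $\set{\xv}$, then learning $\A=\a$ would propagate through $\uv_{\A}$ to $\uv$. What you need is $\uv_{\A}\perp\set{\uv}_{-\A}\mid\set{\xv}$, which holds because the sources are independent a priori and $\set{\xv}$ contains no descendants of $\A$ (hence no descendants of $\uv_{\A}$, whose only child is $\A$), so no collider path is opened. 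The paper makes exactly this independence explicit before the algebra; add that sentence and your argument is complete.
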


~\Cref{lemma:parent-condition-scm} implies the next corollary, which states that under action sufficiency, a what-if counterfactual ``what would happen if I selected action $\hat{a}$ instead of action $a$'' can be evaluated by the conditional probability given that $\hat{a}$ is observed, without taking into account the actual action choice $a$.

\begin{lemma} \label{lemma:scm-action-sufficient}
    Let $\set{\ov} \subseteq \set{\rv}$ be an action sufficient set of observable variables in a probabilistic SCM $\cm$ that contains no effects (descendants) of the action variable $\A$. Then 
    \[
    \mprob{\cm}(\set{\yv}|\set{\ov}=\set{\ovalue},\A=\a,\ado(\A = \hat{\a})) = \mprob{\cm}(\set{\yv}|\set{\ov}=\set{\ovalue},\ado(\A = \hat{\a})) = \mprob{\cm}(\set{\yv}|\set{\ov}=\set{\ovalue},\A = \hat{\a}).
    \]
    for any actions $\a,\hat{\a}$ and any list of target outcomes $\set{\yv}$. 
\end{lemma}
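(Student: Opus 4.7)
The plan is to derive Lemma \ref{lemma:scm-action-sufficient} as a short corollary of Lemma \ref{lemma:parent-condition-scm} by setting $\set{\xv} := \set{\ov}$ and checking that every hypothesis of that lemma is met. Since Lemma \ref{lemma:parent-condition-scm} already gives exactly the chain of equalities
\[
\mprob{\cm}(\set{\yv}|\set{\xv}=\set{\xvalue},\A=\a,\ado(\A = \hat{\a})) = \mprob{\cm}(\set{\yv}|\set{\xv}=\set{\xvalue},\ado(\A = \hat{\a})) = \mprob{\cm}(\set{\yv}|\set{\xv}=\set{\xvalue},\A = \hat{\a}),
\]
the only work remaining is to match the structural assumptions.

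First I would handle disjointness: the statement is only meaningful when $\set{\yv}$ is disjoint from $\A$ and from the conditioning set $\set{\ov}$, so we may assume $\set{\yv},\A,\set{\ov}$ are pairwise disjoint (otherwise the claim is either trivial or ill-posed). Next, the hypothesis that $\set{\ov}$ contains no descendants of $\A$ is stated explicitly in Lemma \ref{lemma:scm-action-sufficient} and therefore transfers directly. The remaining condition of Lemma \ref{lemma:parent-condition-scm} is $\set{\ov} \supseteq \PA_{\A} - \uv_{\A}$ for some noise variable $\uv_{\A}$ of $\A$. By action sufficiency, every latent parent of $\A$ is a noise variable for $\A$. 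Hence the observable parents of $\A$ all lie in $\set{\ov}$, and any parent outside $\set{\ov}$ is a noise source whose only child is $\A$.

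The main obstacle I anticipate is bookkeeping: Lemma \ref{lemma:parent-condition-scm} is phrased with a single exempted noise variable $\uv_{\A}$, whereas action sufficiency permits several noise parents of $\A$. I expect this to be unproblematic in either of two ways. One option is to pool all noise parents of $\A$ into a single joint source variable $\uv_{\A}$; because each is independent and feeds only into $\A$, the pooled variable is again a noise variable for $\A$ in the (trivially modified) graph, and the joint distribution is unchanged. Alternatively, one can simply observe that the proof of Lemma \ref{lemma:parent-condition-scm} goes through verbatim for any number of such noise parents, since none of them appears in the local function of any non-$\A$ variable and hence none of them enters the posterior over the latent ancestors of $\set{\yv}$. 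Either route yields all three hypotheses of Lemma \ref{lemma:parent-condition-scm}, from which the claimed chain of equalities follows by direct instantiation.
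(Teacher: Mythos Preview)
Your proposal is correct and is essentially identical to the paper's own approach: the paper states Lemma~\ref{lemma:scm-action-sufficient} as a corollary of Lemma~\ref{lemma:parent-condition-scm} without a separate proof, so deriving it by instantiating $\set{\xv}:=\set{\ov}$ and verifying the hypotheses is exactly what is intended. Your observation about the single-noise-variable phrasing versus multiple noise parents is a valid bookkeeping point, and either of your suggested fixes (pooling the noise sources or noting that the proof of Lemma~\ref{lemma:parent-condition-scm} is insensitive to their number) is adequate.
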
 

In a DDN that represents an MDP, the only effect of the action is the reward variable (\Cref{def:ddn}(3)). Thus not conditioning on an effect is equivalent to not conditioning on an observed reward, i.e. posing a what-if query. Using an observed reward outcome to predict the result of a counterfactual action requires a hindsight counterfactual, which we discuss in the next section.

 \section{Counterfactuals and Online Policy Evaluation} \label{sec:counterfactuals}

This section examines policy evaluation based on counterfactuals. 
We first consider what-if counterfactuals, then hindsight counterfactuals.

\subsection{What-if Counterfactuals}

In RL based on what-if counterfactual decisions, the learning agent 
seeks to learn an optimal policy that may deviate from the actual decisions taken. Following Pearl's suggestion in a similar context~\cite[Ch.4.1.1]{Pearl2000}, we refer to observed decisions by the behavioral agent as ``acts'', with associated random variable $\A$ and active decisions by the learning agent as ``actions'', with associated random variable $\hat{\A}$. 
For what-if counterfactuals, as with interventional probabilities, we show that under action sufficiency counterfactual rewards, transitions, and value functions are equivalent to observational rewards, transitions, and value functions. As with causal Bayesian networks, probabilistic SCMs can be straightforwardly extended to a dynamic causal model by specifying them with respect to both current variables $\set{\sv}, \A,\R$, and successor variables $\set{\sv}', \A',\R'$. For the remainder of the paper, we use the $\ddn$ notation to refer to a \defterm{dynamic SCM}. 

\begin{proposition} \label{prop:whatif-reward} Suppose that an observation signal $\set{\ov}$ is action sufficient in a dynamic probabilistic SCM $\ddn$. 
Then $\mprob{\ddn}(\R|\set{\ov},\hat{\A}) = \mprob{\ddn}(\R|\set{\ov},\ado(\hat{\A})) = \mprob{\ddn}(\R|\set{\ov},\A,\ado(\hat{\A}))$ and $\mprob{\ddn}(\set{\sv}'|\set{\ov},\hat{\A}) = \mprob{\ddn}(\set{\sv}'|\set{\ov},\ado(\hat{\A})) = \mprob{\ddn}(\set{\sv}'|\set{\ov},\A,\ado(\hat{\A}))$. \end{proposition}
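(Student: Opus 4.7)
The plan is to reduce this to a direct application of Lemma~\ref{lemma:scm-action-sufficient}, exactly paralleling how Proposition~\ref{prop:online-reward} reduced to Lemma~\ref{lemma:observe-condition}. The lemma already hands us the three-way equality $\mprob{\cm}(\set{\yv}|\set{\ov},\A=\a,\ado(\A=\hat{\a})) = \mprob{\cm}(\set{\yv}|\set{\ov},\ado(\A=\hat{\a})) = \mprob{\cm}(\set{\yv}|\set{\ov},\A=\hat{\a})$ whenever (i) $\set{\ov}$ is action sufficient and (ii) $\set{\ov}$ contains no descendants of $\A$. So the entire proof amounts to instantiating $\set{\yv}$ as $\R$ for the first claim and as $\set{\sv}'$ for the second, and checking that (ii) holds.

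For (ii), I would appeal to the DDN structure from Definition~\ref{def:ddn}. The conditioning set $\set{\ov}$ is a subset of the current state variables $\set{\sv}$, and by assumption (3) of the DDN definition the only child of $\A$ in the current time slice is $\R$; since $\R \notin \set{\sv}$ in the factored MDP formulation of Section~\ref{sec:factored}, no current-time state variable is a descendant of $\A$. Assumption (1) of Definition~\ref{def:ddn} rules out edges from successor variables back to current ones, so no element of $\set{\ov}$ can be a descendant of $\A$ through the successor slice either. Condition (i) is given by hypothesis.

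With both conditions verified, applying Lemma~\ref{lemma:scm-action-sufficient} with $\set{\yv} := \R$ yields the first chain of equalities, and applying it with $\set{\yv} := \set{\sv}'$ yields the second. Both $\R$ and $\set{\sv}'$ are disjoint from $\A$ and from $\set{\ov}$, so the disjointness hypothesis of the lemma is met.

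The only mildly subtle point --- and the place I would spend the most care in writing --- is the notational distinction between the observed act $\A$ and the intervened action $\hat{\A}$, making sure the three conditional expressions in the proposition match the three expressions in Lemma~\ref{lemma:scm-action-sufficient} with $\hat{\a}$ playing the role of the intervention target. There is no genuine technical obstacle; the proposition is essentially a packaging of Lemma~\ref{lemma:scm-action-sufficient} for the specific outcome variables $\R$ and $\set{\sv}'$ in the dynamic setting, and the proof will be a two-line derivation once the DDN structural assumptions are invoked to discharge condition (ii).
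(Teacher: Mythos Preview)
Your proposal is correct and matches the paper's proof, which is the single line ``Follows immediately from Lemma~\ref{lemma:scm-action-sufficient}.'' Your additional care in discharging the no-descendants hypothesis via the DDN structural assumptions of Definition~\ref{def:ddn} is appropriate and makes explicit what the paper leaves implicit.
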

\begin{proof}
    Follows immediately from~\Cref{lemma:scm-action-sufficient}. 
\end{proof}

The import of the proposition is that under action sufficiency, observing the acts of an agent does not impact reward and state transition probabilities. 
~\Cref{tab:scm-queries} illustrates how conditional and interventional reward probabilities are the same for the action sufficient model of~\Cref{fig:no-confounder} and different from the confounded model of~\Cref{fig:confound-scm}. 

The next proposition asserts that for action sufficient models, both conditional and what-if value and action value functions are the same. We model what-if counterfactuals by including in the learning agent's observation signal the acts of the behavioral agent. In symbols, we have $\set{\ov}^{\policy} = \set{\ov} \cup \A$ where $\set{\ov}$ is the set of state variables observable by the learning agent. 
The \defterm{Bellman equation for the counterfactual Q-function} is obtained by replacing the original observation signal with the expanded observation signal, either in the conditional Bellman~\Cref{eq:observe-ddn}, or in the interventional Bellman~\Cref{eq:intervene-ddn}. We write out the interventional counterfactual Bellman equation because it is the most complex. 

\begin{align}
    \Q^{\policy,\ddn}(\langle \set{\ovalue},\a,b \rangle,\ado(\hat{a})) =  R^{\ddn}(\langle \set{\ovalue},\a,b \rangle,\ado(\hat{a})) + \gamma \sum_{\set{\ovalue}'} \mprob{\ddn}(\set{\ovalue}'|\langle \set{\ovalue},\a,b \rangle,\ado(\hat{a})) V^{\policy,\ddn}(\set{\ovalue}',\a,b') \label{eq:w=hatif-ddn} \\
    \V^{\policy,\ddn}(\langle \set{\ovalue},\a,b \rangle) = \sum_{a} \policy(a|\langle \set{\ovalue},\a,b \rangle) \Q^{\policy,\ddn}(\langle \set{\ovalue},\a,b \rangle,\ado(\hat{a})) \nonumber\\
     R^{\ddn}(\langle \set{\ovalue},\a,b \rangle,\ado(\hat{a})) = E_{\set{\lvalue}\sim b(\set{\lvalue}|\ado(\hat{a}))} \sum_{r} \r \cdot \mprob{\ddn}(\R = \r|\set{\lvalue},\set{\ovalue},\a, \ado(\hat{a})) \nonumber
\end{align}

\begin{proposition} \label{prop:whatif-value} Suppose that an observation signal $\set{\ov}$ is action sufficient in a dynamic probabilistic SCM $\ddn$. The counterfactual $Q$-value equals the interventional and conditional $Q$-values for every epistemic state and action:
\begin{equation*}
\Q^{\ddn}(\langle \set{\ovalue},\a,b \rangle,\ado(\hat{a})) =  \Q^{\ddn}(\langle \set{\ovalue},b \rangle,\ado(\hat{a})) = \Q^{\ddn}(\langle \set{\ovalue},b \rangle,\hat{a}). 
\end{equation*}
\end{proposition}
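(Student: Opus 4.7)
My plan is to mirror the proof of \Cref{prop:online-value}, simply substituting \Cref{prop:whatif-reward} for \Cref{prop:online-reward} at the base case. The argument reduces to showing that both sides of each claimed equality in the Bellman recursion share identical reward and transition components, whence the whole value function agrees.

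First I would unpack the three Bellman equations under comparison. The conditional $Q$-value $\Q^{\ddn}(\langle \set{\ovalue},b \rangle,\hat{a})$ is defined by \Cref{eq:observe-ddn}, the interventional $Q$-value $\Q^{\ddn}(\langle \set{\ovalue},b \rangle,\ado(\hat{a}))$ by \Cref{eq:intervene-ddn}, and the counterfactual $Q$-value $\Q^{\ddn}(\langle \set{\ovalue},\a,b \rangle,\ado(\hat{a}))$ by \Cref{eq:w=hatif-ddn}. Each has the same structural form: a one-step expected reward plus a discounted expected next-state value, weighted by the appropriate observation-transition probability. All three are of the form
\begin{equation*}
\Q = R(\cdot) + \gamma \sum_{\set{\ovalue}'} P(\set{\ovalue}' \mid \cdot)\, V(\set{\ovalue}', b'),
\end{equation*}
so the claim reduces to (i) equality of the one-step reward terms, (ii) equality of the one-step observation-transition terms, and (iii) equality of the belief-state update that defines $b'$.

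Next I would invoke \Cref{prop:whatif-reward} to settle (i) and (ii). For the reward, the proposition directly gives $\mprob{\ddn}(\R|\set{\ov},\hat{\A}) = \mprob{\ddn}(\R|\set{\ov},\ado(\hat{\A})) = \mprob{\ddn}(\R|\set{\ov},\A,\ado(\hat{\A}))$, which, when integrated against the current belief $b(\set{\lv})$, yields equality of $R^{\ddn}(\langle \set{\ovalue},b\rangle,\hat{a})$, $R^{\ddn}(\langle \set{\ovalue},b\rangle,\ado(\hat{a}))$, and $R^{\ddn}(\langle \set{\ovalue},\a,b\rangle,\ado(\hat{a}))$. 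The same proposition gives equality of the state-transition probabilities; marginalizing latent successors and applying~\Cref{eq:new-observe} (adapted to the intervention and counterfactual cases) transfers this equality to the observation-transition probabilities needed in the Bellman update.

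For (iii), I would verify that the belief update $b'$ is independent of whether we condition on the act $\a$. Under action sufficiency, \Cref{lemma:scm-action-sufficient} implies that the latent state posterior is unaffected by further conditioning on $\A$ once $\set{\ov}$ is given, and the intervention $\ado(\hat{a})$ acts on the successor slice through the truncated transition model in the same way in all three formulations. Hence the successor belief $b'$ is the same function of $\set{\ovalue},\set{\ovalue}',\hat{a},b$ in each case. With matching rewards, matching observation transitions, and matching successor beliefs, a standard induction on the horizon (or the Banach fixed-point uniqueness of the Bellman operator under $\gamma \in (0,1)$) shows that the three value functions coincide. The main obstacle I anticipate is purely bookkeeping: carefully tracking that conditioning on the act $\a$ in the counterfactual Bellman~\Cref{eq:w=hatif-ddn} drops out at every level---both from the reward and from the belief update---so that the recursion collapses to the interventional one; this is exactly where \Cref{lemma:scm-action-sufficient} does the work.
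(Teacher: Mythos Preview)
Your proposal is correct and follows essentially the same approach as the paper: mirror the proof of \Cref{prop:online-value}, invoking \Cref{prop:whatif-reward} in place of \Cref{prop:online-reward} to show that rewards and transitions agree, and then conclude that the recursively defined value functions agree. Your decomposition into (i) rewards, (ii) transitions, and (iii) belief updates, together with the explicit appeal to \Cref{lemma:scm-action-sufficient} for step (iii) and the induction/fixed-point justification, makes the argument more explicit than the paper's own one-sentence sketch, but the underlying strategy is identical.
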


The proof proceeds as in~\Cref{prop:online-value}: By~\Cref{prop:whatif-reward}, conditional and interventional probabilities agree on rewards and transitions by~\Cref{prop:online-reward}.  The resulting value functions then agree as well because they are defined recursively by reward and transition probabilities. 

\paragraph{Example}~\Cref{fig:whatif-diagram} shows the computation of values for the marginal policy $\marginal$, given that we observe a player not taking a shot when they are close to the goal. The confounded model of~\Cref{fig:confound-scm} implies that the player is not healthy. Since they score only if they are healthy, it follows that their expected reward is 0 regardless of what action we direct them to perform. In contrast, from~\Cref{fig:intervene-diagram-offline} we have that conditional only on being close to the goal, the scoring chance of the marginal policy is 1/6. 

The action sufficient model of~\Cref{fig:no-confounder} also implies that the player is not healthy from the same observations. However, in this model player health does not affect scoring, which depends only on goalie health. This means that conditional and interventional Q-values are the same, given closeness to goal and an agent's actions, as entailed by~\Cref{prop:whatif-value}. 


\begin{figure}
    \centering
    \includegraphics{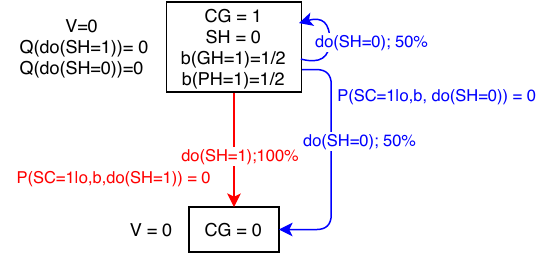}
    \caption{Off-policy policy evaluation based on what-if counterfactuals in the confounded offline model of~\Cref{fig:confound-scm}. The policy evaluated is the marginal behavioral policy of~\Cref{eq:marginal}, which chooses to shoot with probability 1/2 if the player is close to the goal. The evaluation uses interventional {\em what-if} reward and transition probabilities derived from the DDN of~\Cref{sec:ddn-example}. The diagram shows the V value and Q action values for the epistemic state where the agent is observed to be close to the goal, their belief is uniform over the latent variables, and {\em they do not take a shot.}}
    \label{fig:whatif-diagram}
\end{figure}

\subsection{Hindsight Counterfactuals and Online Policy Evaluation.} \label{sec:hindsight} A remarkable feature of counterfactual hindsight probabilities is that they can differ from conditional probabilities even in action sufficient settings, such as online learning. This subsection gives examples to illustrate the phenomenon, two examples for the reward model and one for policy evaluation. The general insight is that while it has long been noted in RL that {\em past} observations allow us to infer latent state information~\citep{bib:pomdp,Hausknecht2015}, {\em future} information allow us to infer current latent state information as well through hindsight. 

Suppose that we observe that in state $\set{\svalue}$, an act $\a$ was followed by a reward $\r$. We can then ask ``what would the reward have been if the agent had chosen the action $\ado(\hat{a})$'' instead? The corresponding \defterm{hindsight reward probability} is given by counterfactual queries of the form $P(\R_{\ado(\hat{a})}|\set{\ov},\A,\R)$.


A simple reward hindsight query in our sports example would be $$\mprob{\ddn}(\SC_{\SH=1} =1|\CG=1, \PH = 1, \SH = 1, \SC = 1).$$

To evaluate the hindsight reward probability in the online model of~\Cref{sec:ddn-example}, first we update the initial belief given the observations:

$$b(\GH=0|\CG=1, \PH = 1, \SC = 1) = 1.$$

Informally, since the player scores only if the goalie is not healthy, we can infer from their scoring that the goalie is not healthy. Given that the player is healthy and the goalie is not, the player is certain to score, so the hindsight reward probability is 1:

$$\mprob{\ddn}(\SC_{\SH=1} =1|\CG=1, \PH = 1, \SC = 1) = P(\SC=1|\CG=1, \PH = 1,\GH = 0, \ado(\SH=1)) = 1$$

Without hindsight, the chance of scoring is only 1/2, since the goalie has a 50\% chance of being healthy. A more interesting example is to consider not only immediate rewards, but hindsight based on future rewards (as in hindsight credit assignment~\citep{harutyunyan2019hindsight}). Suppose that a player does not take a shot, and then their team scores at the next time instant. Since this implies that the goalie is not healthy, we can infer that they would have scored if they had taken a shot earlier. Using counterfactual notation, we have 

$$\mprob{\ddn}(\SC_{\SH=1} =1|\CG=1, \PH = 1, \SH = 0, \SC' = 1) = P(\SC=1|\CG=1, \PH = 1,\GH = 0, \ado(\SH=1)) = 1$$

Hindsight Q-values can be defined by including the observed outcomes as part of the agent's observation signal. ~\Cref{fig:hindsight} shows how Q-values change with hindsight. If the player is made not to shoot, they have a 50\% chance of maintaining possession. If they maintain possession, they will shoot at the next step according to the behavioral policy, so they are certain to score then because the goalie is not  healthy. Hence their expected return after not shooting is 1/2; in symbols $$\Q^{\ddn}(\langle \CG=1, \PH = 1, \SC = 1, b\rangle, \ado(\SH=0)) = 1/2.$$ These examples illustrate how observing outcomes can be a powerful source of information about the latent environment state (such as goalie health). 

\begin{figure}
    \centering
    \includegraphics{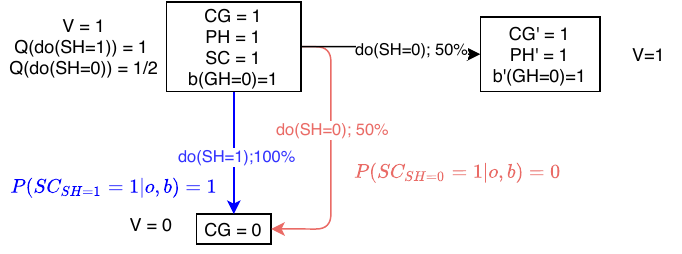}
    \caption{On-policy policy evaluation in the online setting 
    with hindsight. The policy evaluated is the standard behavioral policy. The evaluation uses interventional reward and transition probabilities derived from the online DDN of of~\Cref{sec:ddn-example}. The diagram shows the V value and Q action values for the epistemic state where the agent is observed to be close to the goal, their belief is uniform over the latent variables, and {\em the observation signal includes the current reward (goal scored)}.}
    \label{fig:hindsight}
\end{figure}

\section{Related Work: Current and New Research Directions} \label{sec:related}

We review related recent work and describe direction for future research. Our goal is not a comprehensive survey of causal RL, but to describe current research from the perspective of the distinction between observational, interventional, and counterfactual probabilities. As we have explained in this paper, this distinction is related to the distinctions between online vs. offline RL, and between the observation spaces of behavioral and learned policies. For surveys of causal RL, please
see~\cite{bib:barenboim-tutorial}, \citep{deng2023causal} and ~\citep[Section E]{scholkopf2021towards}. We organize our discussion in different sections corresponding to online RL, offline RL, and approaches that combine both online and offline learning. We focus on the causal RL tasks listed in the previous surveys. 

A common goal in previous research is leveraging a given causal model. Such approaches can be categorized as {\em causal model-based RL}. Causal model-based RL inherits the challenges and benefits of model-based RL in general~\citep[Sec.5]{Levine2020}. Our discussion focuses on the following special features of causal model-based RL:

\begin{enumerate}
    \item The ability to correctly evaluate the effects of interventions when confounders are present.
    \item The greater expressive power of causal models, which define not only conditional reward-transition models, but also interventional and counterfactual reward-transition models. 
    \item The causal graph structure, which decomposes joint distributions into local mechanisms. 
\end{enumerate}

\subsection{Online Causal RL}

Our analysis implies that in online learning, we can expect conditional probabilities to be unconfounded, which reduces the importance of advantage 1. However, the last two advantages (graphical structure and greater expressive power) apply to online learning as well, and have been leveraged in previous work on online RL.

\paragraph{Eliminating Irrelevant Variables: Model-based State Abstraction.} \label{sec:abstract}
~\Cref{lemma:parent-condition} implies that conditioning on a superset $\set{\xv}\supseteq \pa_{\A}$ of observable causes (parents) of $\A$ suffices to ensure that conditional probabilities are causal. 
This means that the conditioning set need not include {\em all} observed variables, that is, it need not include the {\em entire} state observed by the agent. For example in the car driving model of~\Cref{fig:car-online}, for the braking decision, it suffices to condition on $\mathit{Front Taillight}$ and $\mathit{Car in Front}$; the $\mathit{Own Brake Light Variable}$ can be ignored during the decision process. 
In general, a state variable $\S$ is \defterm{conditionally irrelevant} to a decision if $\S \perp \R, \set{\S'} | \A, \PA_{\A}$. Thus a causal model supports variable selection as a form of state abstraction/simplification ~\cite[Sec.8.2.2]
{peters2017elements}. The paper by ~\cite{sen2017identifying} is one of the first to leverage a given causal graph to reduce the effective state space. They prove that this reduction improves regret bounds in online bandit problems, when the bound is a function of the size of the state space. \cite{zhang2020designing} provide an algorithm for reducing the state space by eliminating irrelevant variables given a causal model, which leads to substantive improvement in regret bounds.~\cite{ICML22-wang,wang2024building} relate the elimination of irrelevant variables from a dynamic causal model to state abstraction. 

\paragraph{Data Augmentation and Hindsight Counterfactuals.}  One of the traditional uses of models in RL, going back to the classic Dyna system~\citep{sutton1990integrated}, is to augment the observed transition data with virtual experiences simulated from the model. \cite{bib:hindsight-augment} utilize hindsight counterfactuals to generate virtual state transitions that specify the next state that would have occurred in the same scenario.
These counterfactual state transitions take the form $P(\set{S}^{'}_{\hat{\A}}|\set{S},\set{S}',\A)$, where we observe a next state transition from $\set{S}$ to $\set{S}'$ and ask what the next state would have been if the agent had selected action $\hat{A}$ instead of $\A$. 

Compared to traditional state-transition models of the form $P(\set{S'}|\set{S},\hat{A})$, hindsight counterfactuals condition on more information and thus are potentially more accurate in generating virtual transitions. \citet{bib:hindsight-augment} provide empirical evidence that hindsight state transitions speed up learning an optimal policy. Generating hindsight state transitions requires a causal model and is not possible with a traditional RL transition model that is based on conditional probabilities only. An open topic for future research is using hindsight counterfactuals to generate roll-outs that are longer than single transitions, as is common in other model-based approaches~\citep{janner2019trust,sutton1990integrated}. 

Another open topic for future research is leveraging for data augmentation hindsight {\em rewards}, which take the form $P(\set{R}^{'}_{\hat{\A}}|\set{S},\set{R}',\A)$.  A data augmentation example using hindsight reward counterfactuals would be as follows: Having observed the transition $$(CG_{t}=1, PH_{t} = 1, SH_{t}=0; CG_{t+1}=1, PH_{t+1} = 1, SH_{t+1}=1, SC_{t+1} = 1),$$ 
we can augment the data with the counterfactual outcome 
$$(CG_{t}=1, PH_{t} = 1, SH_{t}=1, SC_{t} = 1)$$
following the reasoning of~\Cref{sec:hindsight}. 
Hindsight rewards are considered in the well-known hindsight experience replay approach~\citep{bib:hindsight}. Hindsight experience replay is based on evaluating multiple goals, corresponding to different reward signals. The dynamic causal models considered in this paper assume a single reward function. Dynamic causal models offer a promising approach to {\em hindsight credit assignment} \citep{harutyunyan2019hindsight}.



\paragraph{Learning A Causal Model from Online Data.} Causal model discovery methods that are applicable to online RL learning include deep models based on auto-encoders~\citep{lu2018deconfounding} and GANs~\citep{bib:hindsight-augment}. Such deep generative models generate observations $\set{x}$ from latent variables $\set{z}$ (cf.~\Cref{eq:evaluate}), but they are not {\em structural} causal models based on a causal graph that represents local causal mechanisms.~\citep{huang2022action} show how constraints from a given causal graph can be leveraged to learn latent state representations. Learning an influence diagram over state variables from online RL data seems to be a new research topic. An exciting new possibility for online learning is that the agent's exploration can include experimentation in order to ascertain the causal structure among the state variables.

\paragraph{Counterfactual Regret.}
\cite{bib:barenboim-tutorial} describes an online counterfactual regret optimization procedure where the agent conditions on their intending to perform action $A$ {\em before} they actually execute action $A$. For example, their policy may recommend $A = \policy(\set{\svalue})$, the agent conditions on this information, but considers alternative actions $A'$. If the policy recommendation $A = \policy(\set{\svalue})$ carries information about the agent's internal state.
\citeauthor{bib:barenboim-tutorial} gives the example of a gambler in a casino whose choices are influenced by how drunk they are; therefore intending a risky gamble should give them pause to reconsider. While it is possible for human agents to be unaware of what causes their intentions, for an agent to be opaque to themselves in this way raises deep philosophical and psychological questions about free will and intentionality, as \citeauthor{bib:barenboim-tutorial} notes. In the case of a {\em computational agent} implemented by a computer program, the program's interface defines the possible inputs and hence the causes of its outputs. Thus when the behavioral agent is implemented by a computer program to which the learning agent has access, as they do in online learning, the causes of the behavioral agent's decisions are unlikely to be opaque to the learning agent. 


\subsection{Offline Causal RL} We discuss causal offline policy evaluation and imitation learning. 

\subsubsection{Offline Off-policy Evaluation} \label{sec:ope} The off-policy evaluation (OPE) problem is to estimate the value function of a learned policy that is different from the behavioral policy. OPE is one of the major approaches to offline RL, where information about the behavioral policy is recorded in a previously collected dataset~\citep{Levine2020}. 
The examples in this paper illustrated OPE based on a causal model. We selectively discuss OPE work related to causality and causal modelling. 

\paragraph{Offline RL and Distribution Shift.} As we mentioned in the introduction, \cite{Levine2020} assert that ``offline reinforcement learning is about making and answering counterfactual queries.'' The key counterfactual  for offline RL  is about what might happen if an agent followed a policy different from the one observed in the data. In a car driving scenario (see~\Cref{fig:driving}), we can ask based on expert driver data, what might happen if a beginner drives the car. As our examples show, the causal counterfactual semantics presented in~\Cref{sec:counterfactuals} works well when the causal model supports inferring latent features of the same agent/environment, which facilitates counterfactual predictions. How causal models can be leveraged to answer counterfactual questions that involve changing agents is a valuable research topic for causal RL. It may be possible to address this in the transportability framework, which studies the extent to which causal mechanisms valid in one environment can be applied in (``transported to'') another~\cite{bareinboim2014transportability,correa2022counterfactual,bib:barenboim-tutorial}. 

Another approach is to view offline RL as an instance of {\em distribution shift}:~\citep{Levine2020}: while the dataset distribution over trajectories is based on the behavioral policy, the learned policy needs to be evaluated on the distribution reflecting its choices. Distribution shift departs from the i.i.d. assumption made in much machine learning work, because the training distribution differs from the test distribution~\citep[Sec.7]{Levine2020}. If causal mechanisms are shared between training and test distribution, as is often the case due to their local scope, they constrain the extent of distribution shift~\citep{scholkopf2021towards}. To illustrate this point, consider how can we leverage the causal model of~\Cref{fig:first} to evaluate the marginal policy $\marginal$ that directs a player to shoot with probability 1/2 if they are close to the goal; see~\Cref{fig:marginal-online}. Changing the policy affects only the parents of the shooting action variable. In causal terminology, this means that {\em the causal mechanisms governing the reward variable Scores Goal is the same even if the policy changes.} In MDP terminology, the reward model is invariant; the same reward model can be used to evaluate both the behavioral and the new policy. Similarly, the next state transitions (not shown) are invariant under a policy change. 

The difficulty in modelling distribution shift arises from the presence of unobserved confounders. In the offline setting, Player Health is an unobserved confounder. If we use the same causal relations over the observable variables, as shown in~\Cref{fig:marginal-offline}, the Bayesian network requires estimating conditional reward probabilities of the form $P(\SC|\SH,\CG)$ to evaluate the new policy. However, as we saw in~\Cref{sec:cbn-example}, without observing Player Health, such conditional probabilities are confounded and do not correctly estimate interventional probabilities. On-going research in causal OPE addresses this issue.

\begin{figure}
     \centering
     \begin{subfigure}[b]{0.40\textwidth}
         \centering
\includegraphics[width=\textwidth]{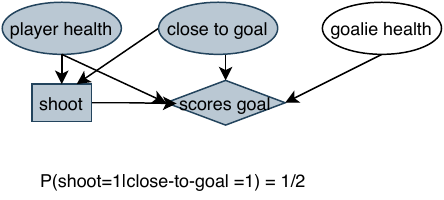}
         \caption{A new policy in the online model of~\Cref{fig:first}}
         \label{fig:marginal-online}
     \end{subfigure}
      \begin{subfigure}[b]{0.40\textwidth}
         \centering
         \includegraphics[width=\textwidth]{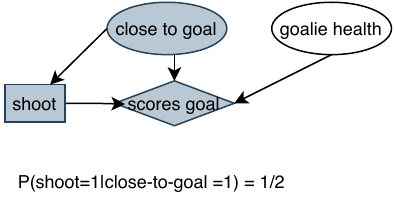}
         \caption{A new policy in the offline model of~\Cref{fig:confound}}
         \label{fig:marginal-offline}
     \end{subfigure}
     \caption{Leveraging a causal model to evaluate a policy different from the behavioral policy. The evaluation policy selects shooting with probability 1/2 if the player is close to the goal, and does not shoot if the player is far from the goal.
     ~\Cref{fig:marginal-online}: A causal graph for the new policy in the online setting. Player Health no longer is a cause of shooting. The graph remains the same for all other variables.
     ~\Cref{fig:marginal-offline}: A causal graph for the new policy in the offline setting, where Player Health is unobserved. The graph agrees with the online graph for all observable variables. Note that the conditional probability $P(\SC|\SH,\CG)$ is confounded because Player Health is not observed.
     \label{fig:marginal}}
\end{figure}



\paragraph{Causal OPE.} Most work is based on confounded MDPs~\citep{zhang2016markov,kausik2024offline,bruns2021model}, which are similar to the POMDP model of~\Cref{sec:pomdps}: States are decomposed into an observed part and an unobserved part. The behavioral policy is assumed to depend on the complete state, whereas the policy to be evaluated depends on the observation signal only. A current line of research gives bounds on the extent of the bias due to spurious correlations, based on assumptions about the confounders~\citep{kausik2024offline,bruns2021model}. For example, ``memoryless confounders'' are sampled independently at each time instant. At the other extreme, time-invariant latent variables such as Goalie Health or Player Health in our example are known as ``global confounders''~\citep{kausik2024offline}. Some of the causal OPE methods are model-based in the sense of estimating transition probabilities, but not in the sense of utilizing a dynamic SCM. 
Another difference is that the evaluation policies considered are Markovian in that they depend on current observations only (cf.~\Cref{sec:ddn-example}). In contrast, our belief MDP framework allows for evaluation policies that depend on past observations/current beliefs.

\paragraph{Leveraging a Causal Model for OPE} ~\cite{wang2021provably} propose policy optimization based on interventional probabilities, such as $P(\SC|\ado(\SH),\CG)$ rather than conditional probabilities, such as $P(\SC|\SH,\CG)$. To estimate the interventional probabilities, they assume that a causal model over the entire state space is available. The problem is then to compute a marginal interventional probability such as $P(\SC|\ado(\SH),\CG)$ from conditional probabilities over the entire state space, such as $P(\SC|\SH,\CG,\PH)$. Pearl's do-calculus provides powerful techniques for inferring marginal interventional probabilities from conditional probabilities, using what are known as {\em adjustment formulas}. 
Two well-known types of adjustment formula are the backdoor and the frontdoor criterion.~\cite{wang2021provably} utilize both to compute causal reward and state-transition probabilities from a given causal model, and show how to use the causal probabilities in an interventional Bellman equation. To illustrate the idea in our sports example, consider offline learning in the confounded model of~\Cref{fig:confound}. Since in this setting, only closeness-to-goal is observable, an executable policy would be based on this variable only (e.g. ``shoot whenever you are close to the goal''). Finding interventional values for such a policy involves computing interventional probabilities such as $P(\SC=1|\CG=1, \ado(\SH=1))$. According to the backdoor criterion, such probabilities can be computed by marginalizing over the unobserved values of player health as follows: 

\begin{align*}
    P(\SC=1|\CG=1, \ado(\SH=1)) = \\
    P(\PH = 1) P(\SC=1|\PH = 1, \CG=1, \SH=1)) +
    P(\PH = 0) P(\SC=1|\PH = 0, \CG=1, \SH=1)) \\
    = 1/2 \cdot 1/2 + 1/2 \cdot 0  = 1/4
\end{align*}

which agrees with the result of~\Cref{tab:probs}. While adjustment formulas provide an elegant approach to addressing spurious correlations in OPE, it is not entirely clear what their use case is for offline learning, as the learning agent does not have access to the latent variables that appear in the adjustment formulas. A possibility is that the behavioral agent uses their access to latent variables (e.g., the athlete has access to their health) to compute the marginal interventional probability and communicate it to the offline learner. 

\paragraph{Learning a Causal Model from Offline Data.} Model-based offline RL is a major approach to offline RL \cite[Sec.5.2]{Levine2020}. Many of the approaches designed for model-based offline RL can also be applied for offline RL based on causal models. 

A specific possibility for causal models is to learn a fully specified latent variable model, such as the model of~\Cref{fig:confound}, and use the backdoor adjustment with latent variables to deconfound interventional probabilities~\citep{lu2018deconfounding}.
For learning a causal model offline, it is likely that the extensive work on learning Bayesian networks for temporal data can be leveraged, including recent approaches based on deep learning and continuous optimization~\citep{ntsnotearssun2023}. Latent variable models related to deep generative models are a promising direction for learning structural causal models~\citep{geffner2022deep,Sun2023,mooij2016distinguishing,hyvarinen2010estimation}. Much of the research on learning a causal model assumes causal sufficiency (no confounders), but not all of it~\citep{scholkopf2021towards}. Learning an influence diagram with a causal graph over state variables from offline RL data seems to be a new research topic. ~\cite{pearl2018theoretical} emphasizes that domain knowledge is important in constructing a causal graph precisely because causal probabilities go beyond directly observable correlations. Most causal graph learning packages allow users to specify background knowledge causal connections that should be present or absent~\citep{Spirtes2000}.

\subsubsection{Imitation Learning and Behavioral Cloning.} \label{sec:cloning}

Imitation learning is a major approach to RL when the agent does not have the ability to interact directly with the environment online. Rather than learning an optimal policy from offline data, the goal is to learn a policy that matches an expert demonstrator.~\cite{Zhang2020} introduce the concept of different observation spaces for the imitator and demonstrator, and state necessary and sufficient graphical conditions on a given causal domain model for when an imitating policy can match the expected return of a demonstrator, even when the observation signals of the imitator and demonstrator are different. They show that under observation-equivalence, when imitator and demonstrator share the same observation signal (\Cref{sec:obs-equiv}), imitation is always possible~\cite[Thm.1]{Zhang2020}. This theory is the closest previous work to our results on observation-equivalence. 

The notion that imitation consists in matching the demonstrator's return is not the standard concept of imitation learning in RL; for example if the observation spaces are different, matching returns may require the imitator to follow a policy that is quite different from that of the demonstrator~\cite[Fig.3b]{Zhang2020}. A more usual concept of imitation in RL, known as {\em behavioral cloning}, is that the learned policy should be similar to that of the behavior policy that generated the data. Using our notation, the goal is to learn a policy $\policy$ such that 
\begin{equation} \label{eq:bc-observe}
    b(\A|\set{S}) \approx \policy(\A|\set{S})
\end{equation}

where $b$ is the behavior policy to be imitated.~\cite{de2019causal} in a paper on ``causal confusion'' point out that the conditional probability behavioral cloning objective~\cref{eq:bc-observe} can be problematic, even assuming observation-equivalence. For example, in a self-driving car example like that of~\Cref{fig:driving}, the observation signal includes the driver's own brake light. Since the brake light comes on only when the driver brakes, we have a very high correlation between braking and the brake light being on:

$$P(\mathit{Brake}=1|\mathit{Brake Light} = 1) \approx 1 \mbox{ and } P(\mathit{Brake}=1|\mathit{Brake Light} = 0) \approx 0.$$ 

Therefore if the imitator matches conditional probabilities according to the criterion~\cref{eq:bc-observe}, they will brake only if they observe the brake light coming on. 
In this scenario, the imitator will not respond to the location of the other cars and their tail lights, which fails to match the expert driver's behavior (and fails to avoid accidents). 

A causal diagnosis of the problem is that the imitator confused causes of acts with correlates of acts (cf.~\cite[Ch.4.1.1]{Pearl2000}). 
As we discussed in~\Cref{sec:intro}, a
high conditional probability such as $P(\mathit{Brake}=1|\mathit{Brake Light} = 1)$ is not causal if we condition on an effect (light) to predict a cause (brake). One remedy is to remove from the imitator's observation space effects of their actions, which also has the benefit of reducing the effective state space (see~\Cref{sec:abstract} above). A more fundamental approach is to redefine the behavioral cloning objective in terms of causal effects:

\begin{equation} \label{eq:bc-intervene}
    b(\A|\ado(\set{S})) \approx \policy(\A|\ado(\set{S}))
\end{equation}

This \defterm{causal behavioral cloning} objective can be interpreted as the query ``what would the agent do if we were to put them into a state $\set{S}$''? The causal objective eliminates the potential confusion due to non-causal correlations. To illustrate this point in our running example, in~\Cref{fig:driving} we have

$$P(\mathit{Brake}=1|\ado(\mathit{Brake Light} = 1)) = P(\mathit{Brake}=1) << 1.$$

Causal behavioral cloning is a new topic for future research in causal RL.

\subsection{Offline + online RL} 

In this hybrid setting, the learning agent interacts with the environment to collect more data, but a prior dataset is also utilized~\citep{sutton1990integrated,janner2019trust}. We can view this as an off-policy setting (where all available data from any policy are used; see~\Cref{fig:levine-offline}). A successful example is the AlphaGo system, which used an offline dataset of master games to find a good initial policy through imitation learning, then fine-tuned the policy with self-play~\citep{silver2016mastering}. 
An active topic of research is how causal models can leverage offline datasets for online learning~\citep{gasse2021causal,zhang2020designing}, sometimes called generalized policy learning~\citep{bib:barenboim-tutorial}.

Off-policy model-based RL approaches~\cite[Sec.5.2]{Levine2020} can be applied with causal models to leverage the offline dataset. The techniques we outlined for causal models in the online and offline settings can be utilized in the hybrid offline/online setting as well. For example, we can use learn a causal graph from the offline dataset, and fine-tune it during online interactions with the environment. The ability to intervene in an environment in the online setting is potentially a powerful tool for learning a dynamic influence model. For instance, performing experiments can resolve the causal direction between two variables which may not be possible from observational data alone, at least not without assuming causal sufficiency. 

\cite{zhang2020designing} describe a hybrid approach for joint exploration and policy learning that is not based on a model. The offline dataset is used to estimate conditional state-reward transition probabilities. These estimates may not be correct interventional probabilities if confounders are present. Causal inference theory has established theoretical bounds on how far a conditional probability can differ from the interventional probability. \citeauthor{zhang2020designing} use the resulting interval estimates for interventional probabilities as an input to optimistic exploration for online learning, where policies are evaluated according to their maximum possible value. Optimism in the face of uncertainty is a well-known approach in RL for ensuring extensive exploration of the state space~\citep{osband2014near}. The offline dataset provides tighter bounds on interventional probabilities than learning from online data only, thereby speeding up optimistic exploration. 

\section{Conclusion} We believe that many RL researchers share the intuition that in common traditional RL settings, conditional probabilities correctly estimate the causal effects of actions. Our paper spelled out the conditions where we can expect conditional probabilities to correctly measure causal effects. We provided a rigorous argument, using the formal semantics of causal models, for why these conditions lead to correct estimates. Our conclusion is that it is only in offline 
off-policy learning with partially observable environments that conditional probabilities can diverge from observed conditional probabilities. The reason is that in this learning setting, the environment may contain unobserved confounders that influence both the decisions of the behavioral agent and the states and rewards that follow these decisions. Such confounders can introduce spurious correlations between decisions and states/rewards that do not correctly estimate the causal impact of the agent's decisions. 


In contrast, in an online or completely observable environment, such confounders are not present, and therefore conditional probabilities correctly reflect causal effects. Our argument for this conclusion involves two steps. (1) In such environments, the set of variables that the learning agent can observe is causally sufficient for the behavioral agent's actions, in the causal modelling sense that it includes all common causes of the behavioral agent's decisions and other variables. (2) We prove formally, using Pearl's do-calculus~\citeyearpar{Pearl2000}, that if a set of observable variables is causally sufficient for actions, then actions are not confounded with states or rewards. Thus our analysis relies on the important distinction between the observation signals available to the learning and behavioral agents, which has been highlighted by previous work in causal RL~\citep{Zhang2020,zhang2016markov,kausik2024offline}. 

In addition to the causal effects of interventions, causal models provide a rigorous specification of counterfactual probabilities through a fomral semantics. Causality researchers have recently proposed using counterfactuals to enhance reinforcement learning~\citep{bib:barenboim-tutorial,deng2023causal}. 
We therefore extend our analysis to counterfactuals, distinguishing two kinds of counterfactuals: what-if queries (e.g. if I choose action $a'$ in state $s$ instead of action $a$, what is the likely reward?) and hindsight counterfactuals that condition on an observed outcome (e.g. given that I received reward $r$ after choosing action $a$ in state $s$, what is the likely reward if I choose action $a'$ instead?). We show that in an online or completely observable environment, what-if queries can be correctly estimated from conditional probabilities, but hindsight counterfactuals go beyond conditional probabilities (cf.~\cite{bib:hindsight-augment}). 

Based on our analysis, we discussed the potential benefits of causal models in different reinforcement learning settings, such as online, offline, and off-policy. The most straightforward, though not the only, approach is to follow a model-based RL framework, replacing the traditional models involving conditional probabilities with a causal model. Structural causal models offer three main benefits: (1) They distinguish interventional and conditional probabilities, and therefore causation from correlation. (2) They factor the dynamics of a complex environment into local causal mechanisms represented in a causal graph. Local mechanisms are typically invariant under interventions~\citep{scholkopf2021towards}, which means that a causal graph can help address the challenge of distribution shift~\citep{Levine2020}: evaluating a learned policy against data gathered by another policy. (3) Causal models have greater expressive power than conditional probability models, since they can evaluate causal effects and counterfactual queries. We described existing work and promising future directions for how the benefits of causal models can be leveraged for reinforcement learning. 

Reinforcement learning and causality are areas of AI and machine learning that naturally complement each other. The analysis in this paper provides a guide for reinforcement learning researchers as to when and how they can make use of causal concepts and techniques to advance reinforcement learning. 

\subsubsection*{Acknowledgements} This research was supported by Discovery grants to each author from the Natural Sciences and Engineering Council of Canada. We are indebted to Mark Crowley and Ke Li for helpful discussion. 

\bibliography{master,new-refs}
\bibliographystyle{tmlr}

\appendix


\section{Proof of~\Cref{lemma:parent-condition,lemma:parent-condition-scm}}

\label{sec:observe-proof}

\setcounter{lemma}{0}

\begin{lemma} 
     Let $\cn$ be a causal Bayesian network and let $\set{\yv},A,\set{\xv}$ be a disjoint set of random variables such that $\set{\xv} \supseteq \PA_{\A}$. Then $\mprob{\cn}(\set{\yv}|\set{\xv}=\set{\xvalue},\ado(\A = \hat{\a})) = \mprob{\cn}(\set{\yv}|\set{\xv}=\set{\xvalue},\A = \hat{\a})$. 
\end{lemma}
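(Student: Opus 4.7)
The plan is to work directly from the truncation semantics~(\Cref{eq:truncate}) rather than invoking the do-calculus. The key observation driving the proof is that since $\set{\xv} \supseteq \PA_{\A}$, conditioning on $\set{\xv}=\set{\xvalue}$ pins down the values of $\PA_{\A}$ to a specific assignment $\pa_{\A}^{*}$, which in turn pins down the scalar $P(\hat{\a}\mid \pa_{\A}^{*})$ as a constant that does not depend on any other random variable.

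First, I would expand the causal probability as a ratio of marginals of the truncated joint. Letting $\set{\wv} = \set{\rv} - (\set{\yv}\cup \set{\xv}\cup \{\A\})$ denote the remaining variables,
\[
\mprob{\cn}(\set{\yv}\mid \set{\xv}=\set{\xvalue}, \ado(\A=\hat{\a})) \;=\; \frac{\sum_{\set{\wvalue}} \mprob{\cn}_{\ado(\A=\hat{\a})}(\set{\yv},\set{\xv}=\set{\xvalue},\set{\wv}=\set{\wvalue},\A=\hat{\a})}{\sum_{\set{\yvalue},\set{\wvalue}} \mprob{\cn}_{\ado(\A=\hat{\a})}(\set{\yv}=\set{\yvalue},\set{\xv}=\set{\xvalue},\set{\wv}=\set{\wvalue},\A=\hat{\a})}.
\]
Next, I would apply the truncation formula~(\Cref{eq:truncate}) to both numerator and denominator. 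The truncated joint differs from the original joint precisely by the removal of the factor $P(\rvalue_{\A}\mid \pa_{\A})$ (and replacement by $\delta(a,\hat{\a})$). For any configuration consistent with $\set{\xv}=\set{\xvalue}$ and $\A=\hat{\a}$, this means
\[
\mprob{\cn}_{\ado(\A=\hat{\a})}(\set{\yvalue},\set{\xvalue},\set{\wvalue},\hat{\a}) \;=\; \frac{\mprob{\cn}(\set{\yvalue},\set{\xvalue},\set{\wvalue},\hat{\a})}{P(\hat{\a}\mid \pa_{\A}^{*})},
\]
where crucially $\pa_{\A}^{*}$ is determined entirely by $\set{\xvalue}$ and therefore the denominator is the same constant for every term in both sums.

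Finally, I would pull this common factor $1/P(\hat{\a}\mid \pa_{\A}^{*})$ outside the summations in both the numerator and denominator of the ratio, where it cancels, yielding
\[
\mprob{\cn}(\set{\yv}\mid \set{\xv}=\set{\xvalue}, \ado(\A=\hat{\a})) \;=\; \frac{\mprob{\cn}(\set{\yv}, \set{\xv}=\set{\xvalue}, \A=\hat{\a})}{\mprob{\cn}(\set{\xv}=\set{\xvalue}, \A=\hat{\a})} \;=\; \mprob{\cn}(\set{\yv}\mid \set{\xv}=\set{\xvalue}, \A=\hat{\a}).
\]

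There is no real obstacle: the whole argument is driven by the single structural fact that $\PA_{\A} \subseteq \set{\xv}$. The only care needed is notational bookkeeping — defining the auxiliary variable set $\set{\wv}$, justifying that $\pa_{\A}^{*}$ is a function of $\set{\xvalue}$ alone and therefore factors out of both summations, and assuming the mild positivity condition $P(\hat{\a}\mid \pa_{\A}^{*}) > 0$ so that the conditional on the right-hand side is well-defined (otherwise the claim holds vacuously since both sides condition on a zero-probability event under the truncated and original models respectively).
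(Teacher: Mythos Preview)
Your proposal is correct and follows essentially the same approach as the paper's own proof: both work directly from the truncation semantics, identify that the original and truncated joints differ only by the factor $P(\hat{\a}\mid \pa_{\A})$, observe that this factor depends solely on $\set{\xvalue}$ because $\PA_{\A}\subseteq \set{\xv}$, and then cancel it from numerator and denominator of the conditional-probability ratio. Your explicit mention of the positivity condition $P(\hat{\a}\mid \pa_{\A}^{*})>0$ is a small addition not spelled out in the paper, but otherwise the arguments are the same.
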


\begin{proof}
Let $\set{V}$ denote the set of variables distinct from $A$. Consider disjoint variables $\set{X},\set{\yv},A$. Let $U = V - \set{X} \cup \set{\yv} \cup A$ be the set of remaining random variables. If the manipulated variable $A$ has no parents, the truncated and non-truncated model are the same, and the result follows immediately. Otherwise write $\pa(\set{v})$ for the assignment of values to the parents of $A$ defined by the values $\set{v}$. Similarly, we write $\pa(\set{x})$ to denote the assignment of values to the parents of $\A$ specified by the values $\set{X}= \set{x}$. Since $\set{X}$ includes all parents of $\A$, we have $\pa(\set{u},\set{\yvalue},\set{x}) = \pa(\set{x})$. 

The truncation semantics implies that for any assignment $\set{v},a$, we have 

\begin{equation*} \label{eq:factor-a}
  P(\set{V} = \set{v},\A = \hat{a})=  P_{\ado(A = \hat{a})}(\set{V} = \set{v},\A = \hat{a}) P(\hat{a}|\pa(\set{v})) 
\end{equation*}
That is, the joint distribution differs only by including a term for the conditional probability of $\A$ given its parents. Now  we have

\begin{align*}
    P(\set{\yvalue}|\set{x},\hat{a}) \\
    = \frac{\sum_{\set{u}} P(\set{u},\set{\yvalue},\set{x},\hat{a})}{\sum_{\set{u}',\set{\yvalue}'} P(\set{u}', \set{\yvalue}',\set{x},\hat{a})} \\
    = \frac{\sum_{\set{u}} P_{\ado(\hat{a})}(\set{u},\set{\yvalue},\set{x},\hat{a})
P(\hat{a}|\pa(\set{u},\set{\yvalue},\set{x}))}{\sum_{\set{u'},\set{\yvalue}'} P_{\ado(\hat{a})}(\set{u'}, \set{\yvalue}',\set{x},\hat{a})P(\hat{a}|\pa(\set{u'},\set{\yvalue}',\set{x}))} 
\\
= \frac{\sum_{\set{u}} P_{\ado(\hat{a})}(\set{u},\set{\yvalue},\set{x},\hat{a})
P(\hat{a}|\pa(\set{x}))}{\sum_{\set{u'},\set{\yvalue}'} P_{\ado(\hat{a})}(\set{u'}, \set{\yvalue}',\set{x},\hat{a})P(\hat{a}|\pa(\set{x}))} 
\\
= \frac{\sum_{\set{u}} P_{\ado(\hat{a})}(\set{u},\set{\yvalue},\set{x},\hat{a})}{\sum_{\set{u'},\set{\yvalue}'} P_{\ado(\hat{a})}(\set{u'}, \set{\yvalue}',\set{x},\hat{a})} \\
= P(\set{\yvalue}|\set{x},\ado(\hat{a}))
\end{align*}
\end{proof}

\setcounter{lemma}{2}

\begin{lemma} 
    Let $\cm$ be a probabilistic SCM and let $\set{\yv},\A,\set{\xv}$ be a disjoint set of random variables such that $\set{\xv}$ includes all parents of $\A$ except for possibly a noise variable $\uv_{\A}$ of $\A$ (i.e., $\set{\xv} \supseteq \PA_{\A} - \uv_{\A}$), and none of the descendants of $\A$. Then for any actions  $\a,\hat{\a}$ we have
    \[
    \mprob{\cm}(\set{\yv}|\set{\xv}=\set{\xvalue},\A=\a,\ado(\A = \hat{\a})) = \mprob{\cm}(\set{\yv}|\set{\xv}=\set{\xvalue},\ado(\A = \hat{\a})) = \mprob{\cm}(\set{\yv}|\set{\xv}=\set{\xvalue},\A = \hat{\a}).
    \]
\end{lemma}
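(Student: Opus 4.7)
The statement couples two equalities: the interventional-equals-conditional claim $\mprob{\cm}(\set{\yv}\mid\set{\xv},\ado(\A = \hat{\a})) = \mprob{\cm}(\set{\yv}\mid\set{\xv},\A = \hat{\a})$, which is~\Cref{lemma:parent-condition} transposed to the SCM setting, and the hindsight-irrelevance claim $\mprob{\cm}(\set{\yv}\mid\set{\xv},\A=\a,\ado(\A=\hat{\a})) = \mprob{\cm}(\set{\yv}\mid\set{\xv},\ado(\A=\hat{\a}))$, which asserts that additionally observing the actual act $\A=\a$ cannot change a what-if prediction. My plan is to reduce the first equality to~\Cref{lemma:parent-condition} and to prove the second through an abduction--intervention--prediction argument that exploits the isolation of the noise parent $\uv_{\A}$ in the truncated graph.

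For the first equality, I would marginalize $\uv_{\A}$ into the local conditional for $\A$, obtaining $P(\A\mid \set{\xv}\cap \PA_{\A}) = \sum_{\uv_{\A}} \prior(\uv_{\A})\,P(\A \mid \set{\xv}\cap \PA_{\A},\uv_{\A})$. The resulting model on the non-noise variables is a causal Bayesian network whose joint and truncated distributions agree with those of $\cm$ on $\set{\rv}\setminus\{\uv_{\A}\}$, because $\uv_{\A}$ has $\A$ as its unique child and hence becomes isolated once $\A$'s incoming edges are cut. Since $\set{\xv}$ now contains every parent of $\A$ in this reduced CBN,~\Cref{lemma:parent-condition} yields the equality immediately.

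For the second equality, I would follow the three-step counterfactual semantics of~\Cref{eq:scm-counterfactual}. Write $\set{\uv}_{-\A}$ for the source variables other than $\uv_{\A}$. The abduction step produces a posterior $\prior'(\set{\uv}) = \prior(\set{\uv}\mid \set{\xv},\A=\a)$, and the crucial claim is that $\prior'$ and $\prior(\,\cdot\mid\set{\xv})$ coincide on $\set{\uv}_{-\A}$; i.e., the extra conditioning on $\A=\a$ updates beliefs only about $\uv_{\A}$. By Bayes' rule this reduces to $P(\A\mid \set{\xv},\set{\uv}_{-\A}) = P(\A\mid \set{\xv})$. To justify it, I would combine (i) the prior factorization $\prior(\set{\uv}) = \prior(\uv_{\A})\prod_{\uv\in\set{\uv}_{-\A}}\prior(\uv)$ from~\Cref{eq:factor-u}, (ii) the local Markov condition, which gives $\A\perp \set{\uv}_{-\A}\mid \PA_{\A}$, and (iii) the marginal independence $\uv_{\A}\perp \set{\xv}$: since $\uv_{\A}$ is a source node whose only descendant is $\A$, and $\set{\xv}$ contains no descendants of $\A$, $\uv_{\A}$ and $\set{\xv}$ share no common ancestor and $\uv_{\A}$ has no directed path into $\set{\xv}$, so they are independent.

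The prediction step is carried out in the submodel $\scm_{\hat{\a}}$, where $\A$ is disconnected from all of its parents; in particular $\uv_{\A}$ then has no children, contributes no term to the solution function $F^{\scm_{\hat{\a}}}$ for any variable in $\set{\yv}$, and is thus causally inert. Hence the post-intervention conditional of $\set{\yv}$ given the source values depends on $\set{\uv}$ only through $\set{\uv}_{-\A}$, and integrating it against $\prior'$ or against $\prior(\,\cdot\mid\set{\xv})$ gives the same value, establishing the second equality. The step I expect to be the main obstacle is verifying the marginal independence $\uv_{\A}\perp \set{\xv}$ cleanly when $\set{\xv}$ and $\A$ share common non-noise ancestors; the hypothesis that $\set{\xv}$ excludes all descendants of $\A$, together with the noise-only role of $\uv_{\A}$, is precisely what rules out an unblocked back-door from $\uv_{\A}$ into $\set{\xv}$.
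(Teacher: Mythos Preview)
Your proposal is correct and follows essentially the same route as the paper's proof. Both arguments partition the source variables into $\uv_{\A}$ and the remainder $\set{\uv}_{-\A}$, establish that the posterior over $\set{\uv}_{-\A}$ is unchanged by the extra conditioning on $\A=\a$ (the paper states this as one of three posterior independence identities, you derive it via Bayes' rule and $\A\perp\set{\uv}_{-\A}\mid\set{\xv}$), observe that $\uv_{\A}$ becomes disconnected in the submodel and hence contributes nothing to the prediction step, and then reduce the interventional-equals-conditional equality to~\Cref{lemma:parent-condition} by absorbing $\uv_{\A}$ into a CBN-style conditional for $\A$. Your worry about the marginal independence $\uv_{\A}\perp\set{\xv}$ is unfounded: $\uv_{\A}$ is a source with unique child $\A$, so every path out of $\uv_{\A}$ passes through $\A$ into its descendants, none of which lie in $\set{\xv}$ by hypothesis; there are no common ancestors to worry about.
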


\begin{proof}


The source variable posterior satisfies the following independence conditions:

\begin{align*}
    b(\set{\uvalue},\uvalue_{\A}|\set{\xv}=\set{\xvalue},\A=\a) =  b(\set{\uvalue}|\set{\xv}=\set{\xvalue},\A=\a) \cdot  b(\uvalue_{\A}|\set{\xv}=\set{\xvalue},\A=\a) \\
    b(\set{\uvalue}|\set{\xv}=\set{\xvalue},\A=\a) = b(\set{\uvalue}|\set{\xv}=\set{\xvalue}) \\
    b(\set{\uvalue},\uvalue_{\A}|\set{\xv}=\set{\xvalue}) =   b(\set{\uvalue}|\set{\xv}=\set{\xvalue}) \cdot   b(\uvalue_{\A}|\set{\xv}=\set{\xvalue})
\end{align*}

The first independence holds because $\set{\xv}$ contains $\A$, the only neighbor of $\uvalue_{\A}$, and all the parents of $\A$ (i.e., the entire Markov blanket of $\uvalue_{\A}$). The second independence holds because contains all the parents of $\set{\xv}$ and none of its descendants, so by the Markov condition, $\A$ is  independent of all its non-descendants. Since $\set{\uvalue}$ contains only source variables, it contains no descendent of $\A$. Similarly, the third independence holds because $\uvalue_{\A}$ is independent of all its non-descendants, and $\set{\xv}$ contains no descendent of $\A$ and hence no descendant of $\uvalue_{\A}$. 

Now consider the evaluation of the counterfactuals $\mprob{\cm}(\set{\yv}|\set{\xv}=\set{\xvalue},\A=\a,\ado(\A = \hat{\a}))$ and $\mprob{\cm}(\set{\yv}|\set{\xv}=\set{\xvalue},\ado(\A = \hat{\a}))$. Each probability is calculated in the same submodel $\scm_{\hat{a}}$ but with different posteriors. Let $\cm_1$ be the submodel with source posterior distribution $b(\set{\uvalue},\uvalue_{\A}|\set{\xv}=\set{\xvalue},\A=\a)$ and let $\cm_2$ be the submodel with source posterior distribution $b(\set{\uvalue},\uvalue_{\A}|\set{\xv}=\set{\xvalue})$. In each submodel, $\A$ is not generated by source variables but manipulated to the value $\hat{a}$. Let $\set{\uv}$ be the set of source variables other than $\uv_{\A}$. For an assignment of values to variables $\set{\wv} = \set{\wvalue}$, where $\A \not\in \set{\wv}$, let $\uv_{\set{\wvalue}|\a}$ be the set of assignments to the source variables $\set{\uv}$ such that the recursive solution procedure generates the assignment $\set{\wv} = \set{\wvalue}$ if variable $\A = \a$. Together with the independence conditions above, we therefore have the following:

\begin{align*}
    \mprob{\cm_1}(\set{\yv} = \set{\yvalue},\set{\xv}=\set{\xvalue})\\ =  \sum_{\set{\uvalue} \in \uv_{\set{\xvalue},\set{\yvalue}|\hat{\a}}} \sum_{\uvalue_{\a}}  b(\set{\uvalue},\uvalue_{\A}|\set{\xv}=\set{\xvalue},\A=\a) \\ 
    = \sum_{\set{\uvalue} \in \uv_{\set{\xvalue},\set{\yvalue}|\hat{\a}}} b(\set{\uvalue}|\set{\xv}=\set{\xvalue}) \sum_{\uvalue_{\a}} b(\uvalue_{\A}|\set{\xv}=\set{\xvalue},\A=\a) \\
    = \sum_{\set{\uvalue} \in \uv_{\set{\xvalue},\set{\yvalue}|\hat{\a}}} b(\set{\uvalue}|\set{\xv}=\set{\xvalue}) \cdot 1 \\
    = \sum_{\set{\uvalue} \in \uv_{\set{\xvalue},\set{\yvalue}|\hat{\a}}} b(\set{\uvalue}|\set{\xv}=\set{\xvalue}) \sum_{\uvalue_{\a}} b(\uvalue_{\A}|\set{\xv}=\set{\xvalue}) \\
    =  \mprob{\cm_2}(\set{\yv} = \set{\yvalue},\set{\xv}=\set{\xvalue})
\end{align*}

Since the joint probabilities are the same for each posterior, so are the conditional counterfactual probabilities, which establishes the first equality of the Lemma. 

The second equality follows as in~\Cref{lemma:parent-condition}: Given a parent assignment $\pa_{\A} = \set{\xvalue}$, we can define a conditional probability over actions by $P(\a|\set{\xvalue}) = \sum_{\uvalue_{\A}:f_{\A}(\set{\xvalue},\uvalue)=\a} b(\uvalue_{\A})$, that is, summing over the set of noise variable variables that generate the observed action. The conditional and interventional distributions differ only by this term, which does not depend on the target $\set{\yv}$ and therefore cancels out in the conditional probability. 

\end{proof}

\end{document}